\declaretheorem[within=section]{definition}
\declaretheorem[sibling=definition]{theorem}
\declaretheorem[within=section]{assumption}
\declaretheorem[sibling=definition]{corollary}
\declaretheorem[sibling=definition]{remark}
\declaretheorem[sibling=definition]{example}
\declaretheorem[sibling=definition]{lemma}
\newcommand{\ignore}[1]{}
\newcommand{\R}{\mathbb{R}}
\newcommand{\prox}{\proxop_{\alpha R}}
\newcommand{\proxop}{\mathop{\mathrm{prox}}\nolimits}
\newcommand{\proxt}{\proxop_{\alpha R_i}}
\newcommand{\eqdef}{\coloneqq} 
\newcommand{\diag}{\mathrm{Diag}}
\newcommand{\cB}{{\cal B}}
\newcommand{\cD}{{\cal D}}
\newcommand{\cI}{{\cal I}}
\newcommand{\cL}{{\cal L}}
\newcommand{\cM}{{\cal M}}
\newcommand{\cO}{{\cal O}}
\newcommand{\cS}{{\cal S}}
\newcommand{\cU}{{\cal U}}
\newcommand{\mA}{{\bf A}}
\newcommand{\mG}{{\bf G}}
\newcommand{\mI}{{\bf I}}
\newcommand{\mJ}{{\bf J}}
\newcommand{\mM}{{\bf M}}
\newcommand{\mJf}{{\bf J}}
\newcommand{\mJpsi}{{\bf \Psi}}
\newcommand{\mO}{{\bf O}}
\newcommand{\mQ}{{\bf Q}}
\newcommand{\mX}{{\bf X}}
\newcommand{\mY}{{\bf Y}}
\newcommand{\ones}{e}
\newcommand{\EE}[2]{\mathbb{E}_{#1}\left[#2\right] }
\newcommand{\E}[1]{\mathbb{E}\left[#1\right] } 
\newcommand{\Prob}[1]{\mathbb{P}\left(#1\right) }
\newcommand{\norm}[1]{\left \| #1 \right\|}
\providecommand{\Range}[1]{{\rm Range}\left( #1\right)}
\def\<#1,#2>{\left\langle #1,#2\right\rangle}
\newcommand{\NORMG}[1]{\left \| #1\right  \|^2}
\newcommand{\compactify}{} % use this for arXiv
\newcommand{\eR}{{\color{blue}e}} % 
\newcommand{\eRj}{{\color{blue}e_j}} % standard unit basis vector in R^n (i.e., in the right space)
\newcommand{\cDR}{{\color{blue}\cD}}
\newcommand{\psvrg}{{\color{red} p}}
\newcommand{\CS}{{\color{purple}S}} 
\newcommand{\ptg}{{\color{purple}p}_i} 
\newcommand{\pg}{{\color{purple}p}}
\newcommand{\nloc}{{\color{green} n}}
\newcommand{\floc}{{\color{green} f}}
\newcommand{\psiloc}{{\color{green} \psi}}
\newcommand{\vg}{{\color{green}v}} 
\newcommand{\mug}{{\color{green}\mu}} 
\newcommand{\Ug}{{\color{green}\Upsilon}} 
\newcommand{\Corr}[1]{{\text{Corr} \left[ #1\right]}} 
\newcommand{\flocc}{f'} 
\newcommand{\nlocc}{N} 
\newcommand{\tR}{{i}} %  probability vector in R^n (i.e., in the right space)
\newcommand{\TR}{{n}} %  probability vector in R^n (i.e., in the right space)
\newcommand{\NRt}{m_i} %  probability vector in R^n (i.e., in the right space)
\newcommand{\pagg}{{p}} 
\newcommand{\onest}{{\bf 1}^{(\NRt)}}  
\newcommand{\onesm}{{\bf 1}}
\newcommand{\pR}{{\color{blue}p}_i} %  probability vector in R^n (i.e., in the right space)
\newcommand{\pRj}{{\color{blue}p}_{i,j}} %  probability vector in R^n (i.e., in the right space)
\newcommand{\ptR}{{\color{blue}p}^i} %  probability vector in R^n (i.e., in the right space)
\newcommand{\Lloc}{{L'}} %  probability vector in R^n (i.e., in the right space)
\newcommand{\map}{\Omega} %  probability vector in R^n (i.e., in the right space)
\newcommand{\xx}{{\color{blue} \tilde{x}}} %  probability vector in R^n (i.e., in the right space)
\newcommand{\xb}{{\color{blue} x}} %  probability vector in R^n (i.e., in the right space)
\begin{document}

\title{Federated Learning of a Mixture of Global and Local Models}

\author{Filip Hanzely}
\author{Peter Richt\'{a}rik}

\affil{King Abdullah University of Science and Technology\\ Thuwal, Saudi Arabia}

 \date{February 14, 2020}

\maketitle
\begin{abstract}
We propose a new optimization formulation for training federated learning models. The standard formulation has the form of an empirical risk minimization problem constructed to find a single global model trained from the private data stored across all participating devices. In contrast, our formulation seeks an explicit trade-off between this traditional global model and the local models, which can be learned by each device from its own private data without any communication. Further, we develop several efficient variants of SGD (with and without partial participation and with and without variance reduction) for solving the new formulation and prove communication complexity guarantees. Notably, our methods are similar but not identical to federated averaging / local SGD,  thus shedding some light on the role of local steps in federated learning. In particular, we are the first to i) show that local steps can improve communication for problems with heterogeneous data, and ii) point out that personalization yields reduced communication complexity.
\end{abstract}

\section{Introduction}

With the proliferation of mobile phones, wearable devices, tablets, and smart home devices comes an increase in the volume of data captured and stored on them. This data contains a wealth of potentially useful information to the owners of these devices, and more so if appropriate machine learning models could be trained on the heterogeneous data stored across the network of such devices.  The traditional approach involves moving the relevant data to a  data center where centralized machine learning techniques can be efficiently applied~\citep{Dean2012, AIDE}.  However, this approach is not without issues.  First, many device users are increasingly sensitive to privacy concerns and prefer their data to never leave their devices. Second, moving data from their place of origin to a centralized location is very inefficient in terms of energy and time. 

\subsection{Federated learning}

{\em Federated learning (FL)} \citep{FedAvg2016, FEDLEARN,   FEDOPT, FL2017-AISTATS} has emerged as an interdisciplinary  field focused on addressing these issues by training machine learning models directly on  edge devices.  The currently prevalent paradigm~\citep{FL_survey_2019, FL-big} casts supervised FL as an empirical risk minimization problem of the form
\begin{equation}\label{eq:ERM-traditional} 
\min \limits_{x\in \R^d} \frac{1}{n}\sum \limits_{i=1}^n f_i(x) ,\end{equation}
where $n$ is the number of devices participating in training, $x\in \R^d$ encodes the $d$ parameters of  a {\em global model}  (e.g., weights of a neural network) and $$f_i(x)\eqdef \EE{\xi\sim \cD_i}{f(x,\xi)}$$ represents the aggregate loss of model $x$ on the local data represented by distribution $\cD_i$ stored on device $i$. One of the defining characteristics of FL is that the data distributions $\cD_i$ may possess very different properties across the devices. Hence, any potential FL method  is explicitly required to be able to work under the {\em heterogeneous} data setting.

The most popular method for solving \eqref{eq:ERM-traditional} in the context of FL is the FedAvg algorithm~\citep{FedAvg2016}. In its most simple form, when one does not employ partial participation, model compression, or stochastic approximation, FedAvg reduces to Local Gradient Descent (LGD)~\citep{localGD, localSGD-AISTATS2020}, which is an extension of GD performing more than a single gradient step on each device before aggregation. FedAvg has been shown to work well empirically, particularly for non-convex problems, but comes with poor convergence guarantees compared to the non-local counterparts when data are heterogeneous.

\subsection{Some issues with current approaches to FL}

%Variance reduced local SGD: \citep{karimireddy2019scaffold, liang2019variance} (both do local SGD with a shift which under some assumptions improves the rate)
%
%Variance reduced local SGD robust to Byzantine attacks: \citep{wu2019federated}

The first motivation for our research comes from the appreciation that data heterogeneity does not merely present challenges to the design of new provably efficient training  methods for solving  \eqref{eq:ERM-traditional}, but {\em also inevitably raises questions about the utility of such a global solution to individual users.} Indeed, a global model trained across all the data from all devices might be so removed from the typical data and usage patterns experienced by an individual user as to render it virtually useless. This issue has been observed before, and various approaches have been proposed to address it. For instance, the MOCHA \citep{NIPS2017_FL-multitask} framework uses a multi-task learning approach to allow for personalization. Next, \citep{Khodak2019FLmeta} propose a generic online algorithm for gradient-based parameter-transfer meta-learning and demonstrate improved practical performance over  FedAvg~\citep{FL2017-AISTATS}. Approaches based on variational inference~\citep{FL2019variational}, cyclic patterns in practical FL data sampling  \citep{Eichner2019semi-cyclicSGD_for_FL} transfer learning \citep{Zhao2018FL-transfer-learn} and explicit model mixing~\citep{peterson2019private} have been proposed.

The second motivation for our work is the realization that even very simple variants of FedAvg, such as LGD, which should be easier to analyze, fail to provide theoretical improvements in communication complexity over their non-local cousins, in this case, GD \citep{localGD, localSGD-AISTATS2020}.\footnote{After our paper was completed, a lower bound on the performance of local SGD was presented that is worse than the known minibatch SGD guarantee~\citep{woodworth2020minibatch}, confirming that the local methods do not outperform their non-local counterparts in the heterogeneous setup. Similarly, the benefit of local methods in the non-heterogeneous scenario was questioned in~\citep{woodworth2020local}.
 }

This observation is at odds with the practical success of local methods in FL. This leads us to ask the question: 
\begin{quote} {\em If LGD does not theoretically improve upon GD as a solver for the traditional global problem~\eqref{eq:ERM-traditional}, perhaps  LGD should not be seen as a method for solving~\eqref{eq:ERM-traditional} at all. In such a case, what problem does LGD solve?}
\end{quote}
 A good answer to this question would shed light on the workings of LGD, and by analogy, on the role local steps play in more elaborate  FL methods such as local SGD~\citep{localSGD-Stich, localSGD-AISTATS2020} and FedAvg.

\section{Contributions \label{sec:contrib}}

In our work we argue that the two motivations mentioned in the introduction  point in the same direction, i.e., we show that {\em a single solution can be devised addressing both problems at the same time.} Our main contributions are:

\noindent $\diamond$ {\bf New formulation of FL which seeks an implicit mixture of  global and local models.} We propose a {\em new optimization formulation of FL.}  Instead of learning a single global model by solving \eqref{eq:ERM-traditional}, we propose to learn a mixture of the global model and the purely local models which can be trained by each device $i$ using its data $\cD_i$ only.  Our formulation (see Sec.~\ref{sec:new_formulation}) lifts the problem from $\R^d$ to $\R^{nd}$, allowing each device $i$ to learn a personalized model $x_i\in \R^d$. These personalized models are encouraged to not depart too much from their mean by the inclusion of a quadratic penalty $\psi$ multiplied by a parameter $\lambda\geq 0$.
Admittedly, the idea of softly-enforced similarity of the local models was already introduced in the domain of the multi-task relationship learning~\citep{zhang2012convex, liu2017distributed, wang2018distributed} and distributed optimization~\citep{lan2018communication, gorbunov2019optimal, zhang2015deep}. The mixture objective we propose (see~\eqref{eq:main}) is a special case of their setup, which justifies our approach from the modeling perspective. Note that~\citet{zhang2015deep, liu2017distributed, wang2018distributed} provide efficient algorithms to solve the mixture objective already. However, none of the mentioned papers consider the FL application, nor they shed a light on the communication complexity of LGD algorithms, which we do in our work.

\noindent $\diamond$  {\bf Theoretical properties of the new formulation.} We study the properties of the optimal solution of our formulation, thus developing an algorithmic-free theory. When the penalty parameter is set to zero, then obviously, each device is allowed to train their own model without any dependence on the data stored on other devices. Such purely local models are rarely useful. We prove that the optimal local models converge to the traditional global model characterized by \eqref{eq:ERM-traditional} at the rate $\cO(1/\lambda)$. We also show that the total loss evaluated at the local models is never higher than the total loss evaluated at the global model (see Thm.~\ref{thm:penalty}).  Moreover, we prove an insightful structural result for the optimal local models: the optimal model learned by device $i$ arises by subtracting the gradient of the loss function stored on that device evaluated at the same point (i.e., a local model) from the average of the optimal local models (see Thm.~\ref{thm:characterization}).  As a byproduct, this theoretical result sheds new light on the key update step in the model agnostic meta-learning (MAML) method~\citep{MAML2017}, which has a similar but subtly different structure.\footnote{The connection of FL and multi-task meta learning is discussed in~\citep{FL-big}, for example.} The subtle difference is that the MAML update obtains the local model by subtracting the gradient evaluated at the {\em global} model. While MAML was originally proposed as a heuristic, we provide rigorous theoretical guarantees. 

\noindent $\diamond$   {\bf Loopless LGD: non-uniform SGD applied to our formulation.} We then propose a randomized gradient-based method---{\em Loopless Local Gradient Descent (L2GD)}---for solving our new formulation (Algorithm~\ref{alg:L2GD}). This method is, in fact, a non-standard application of SGD to our problem, and can be seen as an instance of SGD with non-uniform sampling applied to the problem of minimizing the sum of two convex functions~\citep{IProx-SDCA, pmlr-v97-qian19b}: the average loss, and the penalty. When the loss function is selected by the randomness in our SGD method, the stochastic gradient step can be {\em interpreted} as the execution of a single local GD step on each device. Since we set the probability of the loss being sampled to be high, this step is typically repeated multiple times, resulting in multiple local GD steps. In contrast to standard LGD, the number of local steps is not fixed, but random, and follows a geometric distribution. This mechanism is similar in spirit to how the recently proposed loopless variants of SVRG \citep{hofmann2015variance, kovalev2019don} work in comparison with the original SVRG~\citep{SVRG, proxSVRG}. Once the penalty is sampled by our method, the resultant SGD step can be interpreted as the execution of an aggregation step. In contrast with standard aggregation, which performs full averaging of the local models, our method merely takes a {\em step towards averaging}. However, the step is relatively large.   We stress that the difference in terms of the algorithm statement between L2GD and LGD is rather minor as the main essence of these methods is identical.  The key difference lies in the problem that the algorithms are employed to solve,  and the resulting communication complexity.

\noindent $\diamond$  {\bf Convergence theory.} By adapting the general theory from \citep{pmlr-v97-qian19b} to our setting, we obtain theoretical convergence guarantees assuming that each $f_i$ is $L$-smooth and $\mu$-strongly convex (see Thm.~\ref{thm:local_gd}). Interestingly, by optimizing  the probability of sampling the penalty (we get $p^\star = \tfrac{\lambda}{\lambda + L}$), which is an  indirect way of fixing the {\em expected number of local steps} to $1+\tfrac{L}{\lambda}$, we prove an $$ \tfrac{2 \lambda }{\lambda + L} \tfrac{L}{\mu}\log \tfrac{1}{\varepsilon}$$  bound on the expected number of communication rounds (see Cor.~\ref{cor:optimalp}).   We believe that this is remarkable in several ways.  By choosing $\lambda$ small, we tilt our goal towards pure local models: the number of communication rounds is tending to 0 as $\lambda\to 0$. If $\lambda\to \infty$, the solution our formulation converges to is the optimal global model,  and L2GD obtains the communication bound  $\cO\left(\tfrac{L}{\mu} \log \tfrac{1}{\varepsilon}\right)$, which matches the efficiency of GD.

\noindent $\diamond$  {\bf Personalization, local steps and communication complexity.}
Indeed, to the best of our knowledge,  there is no result in the literature showing that local SGD communicates less than SGD on the classical FL~\eqref{eq:ERM-traditional} with heterogeneous data. We are the first to show that (variants of) local SGD can outperform (synchronous) minibatch SGD in communication complexity if no data similarity is assumed.\footnote{Admittedly,  local steps can reduce the communication complexity when some measure of data dissimilarity is bounded. However, data homogeneity assumptions are highly contentious in FL applications.} 
However, as we argue in the next point, this happens with a twist: local methods need to be seen as algorithms for solving our personalized FL objective instead.  Consequently, we are also the first to show that personalized FL\footnote{While we show it only for a particular personalized FL objective, we suspect that such a phenomenon applies to other personalized FL objectives too.} can be easier to solve, in terms of communication complexity, compared to the traditional FL objective~\eqref{eq:ERM-traditional}.

\noindent $\diamond$  {\bf What problem do local methods solve?} Noting that L2GD is a (mildly nonstandard) version of LGD,\footnote{To be specific, L2GD is equivalent to Overlap LGD~\citep{wang2020overlap} with random local loop size.} which is a key method most local methods for FL are based on, and noting that, as we show, L2GD solves our new  formulation of FL, we offer a new and surprising  interpretation of the role of local steps in FL.  As mentioned,  the role of local steps in the classical FL formulation~\eqref{eq:ERM-traditional} with heterogeneous data is not to reduce the communication complexity. Instead, their role is to steer the method towards finding a mixture of the traditional global and the purely local models. Given that the stepsize is fixed, the more local steps are taken, the more we bias the method towards the purely local models. Our new optimization formulation of FL formalizes this as it defines the problem that local methods, in this case L2GD, solve. There is an added benefit here: the more we want our formulation to be biased towards purely local models (i.e., the smaller the penalty parameter $\lambda$ is), the more local steps does  L2GD take, and the better the total communication complexity of L2GD becomes. Hence, despite a lot of  research on this topic,  our paper provides {\em the first proof that a local method (e.g., L2GD) can be better than its non-local counterpart (e.g., GD) in terms of total communication complexity in the heterogeneous data setting.} We are able to do this by noting that local methods should better be seen as methods for solving the new FL formulation proposed here.

\noindent $\diamond$   {\bf Generalizations: partial participation, local SGD and variance reduction.} We further generalize and improve our method by allowing for 
(i) stochastic {\em partial participation} of devices in each communication round,(ii) {\em subsampling} on each device which means we can perform local SGD steps instead of local GD steps, and 
(iii)  {\em total variance reduction mechanism} to tackle the variance  coming from three sources: locality of the updates induced by non-uniform sampling (already present in L2GD), partial participation and local subsampling. Due to its level of generality, this method, which we call L2SGD++, is presented in the Appendix only, alongside the associated complexity results.  In the main body of this paper, we instead present a simplified version thereof, which we call L2SGD+ (Algorithm~\ref{alg:L2SGD}). The convergence theory for it is presented in 
Thm.~\ref{thm:saga_simple} and Cor.~\ref{cor:lsd_optimal_p}.

\noindent $\diamond$   {\bf Heterogeneous data.} All our methods and convergence results allow for fully heterogeneous data and do not depend on any assumptions on data similarity across the devices.

\noindent $\diamond$   {\bf Superior empirical performance.}
We show through ample numerical experiments that our theoretical predictions can be observed in practice.

\section{New Formulation of FL} \label{sec:new_formulation}

% seeking an explicit trade-off between the (traditional) {\em global model}  defined as the solution of \eqref{eq:main} and the {\em purely local models} $ \arg \min f_i$, which can be learned by each device $i$ from its own private data  $\cD_i$ without any communication. 

% Instead of learning a single shared {\em global model} $x\in \R^d$, we allow each device $i$ to learn a {\em customized model} $x_i\in \R^d$. These models are allowed to be different but not arbitrarily different. 
% In particular, we introduce a coupling penalty $\psi$, which encourages the models to be similar.

 We now introduce our new formulation for training supervised FL models:
\begin{equation}\label{eq:main}
\begin{aligned}
\min \limits_{x_1,\dots,x_n\in \R^d} &\left\{ F(x) \eqdef f(x) + \lambda \psi(x) \right\}\\
 f(x) \eqdef  \tfrac{1}{n}\sum \limits_{i=1}^n  f_i(&x_i), \quad   \psi(x) \eqdef \tfrac{1}{2 n}\sum \limits_{i=1}^n \norm{x_i-\bar{x}}^2,
\end{aligned}
\end{equation}
where $\lambda \geq 0 $ is a  penalty parameter,  $x_1,\dots,x_n \in \R^d$ are local models, $x\eqdef (x_1,x_2,\dots,x_n) \in \R^{nd}$ and  $\bar{x}\eqdef \tfrac{1}{n}\sum_{i=1}^n x_i$ is the average of the local models. 

Due to the assumptions on $f_i$ we will make in Sec.~\ref{bd798gsd}, $F$ is strongly convex and hence \eqref{eq:main} has a unique solution, which we denote $$x(\lambda) \eqdef (x_1(\lambda),\dots,x_n(\lambda))\in \R^{nd}.$$ We further let $\bar{x}(\lambda) \eqdef \tfrac{1}{n}\sum_{i=1}^n x_i(\lambda)$.
We now comment on the rationale behind the new formulation.

{\bf Local models ($\lambda=0$).} Note that  for each $i$, $x_i(0)$ solves the {\em local problem}
$$   \min \limits_{x_i \in \R^d} f_i(x_i).$$
That is,  $x_i(0)$ is the local model based on data $\cD_i$ stored on device $i$ only. This model can be computed by device $i$ without any communication whatsoever. Typically, $\cD_i$ is not rich enough for this local model to be useful. In order to learn a better model, one has to take into account the date from other clients as well. This, however, requires communication.

{\bf Mixed models ($\lambda \in (0,\infty)$).} As $\lambda$ increases, the penalty $\lambda \psi(x)$ has an increasingly more substantial effect, and communication is needed to ensure that the models are not too dissimilar, as otherwise the penalty $\lambda \psi(x)$ would be too large. 

{\bf Global model ($\lambda = \infty$).} Let us now look at the limit case $\lambda \to \infty$. Intuitively, this limit case should force the optimal local models to be mutually identical, while minimizing the loss $f$. In particular, this limit case will solve\footnote{If $\lambda=\infty$ and $x_1=x_2=\dots=x_n$ does not hold, we have $F(x)=\infty$. Therefore, we can restrict ourselves on set $x_1=x_2=\dots=x_n$ without loss of generality. }
$$\min  \left\{ f(x)  \;:\; x_1,\dots,x_n\in \R^{d}, \; x_1=x_2=\dots=x_n \right\},$$
which is equivalent to the global formulation \eqref{eq:main}.  Because of this, let us define $x_i(\infty)$ for each $i$ to be the optimal global solution of \eqref{eq:ERM-traditional}, and let $x(\infty) \eqdef (x_1(\infty),\dots,x_n(\infty))$. 

\subsection{Technical preliminaries}\label{bd798gsd}
 We make the following assumption on the functions $f_i$:
\begin{assumption}\label{as:smooth_sc_main}
For each $i$, the function $f_i:\R^d\to \R$ is $L$-smooth and $\mu$-strongly convex.
\end{assumption}

For $x_i,y_i \in \R^d$, $\langle x_i, y_i \rangle$ denotes the standard inner product and $\norm{x}\eqdef \langle x_i, x_i \rangle^{1/2}$ is the standard Euclidean norm. For vectors $x=(x_1,\dots,x_n)\in \R^{nd}$, $y=(y_1,\dots,y_n) \in \R^{nd}$ we define the standard inner product and norm via
$  \langle x,y \rangle \eqdef \sum_{i=1}^n \langle x_i, y_i \rangle, \norm{x}^2 \eqdef \sum_{i=1}^n \norm{x_i}^2. $
Note that the separable structure of $f$ implies that $\left(\nabla f(x)\right)_i = \tfrac{1}{n}\nabla f_i (x_i)$, i.e., 
$\nabla f(x) = \tfrac{1}{n} ( \nabla f_1 (x_1), \nabla f_2 (x_2), \dots,   \nabla f_n (x_n)).$   

% Hence, the norm of $\nabla f(x)\in \R^{nd}$ decomposes as $ \norm{\nabla f(x)}^2 = \tfrac1n \sum_{i=1}^n \norm{\nabla f_i (x_i)}^2.$

Note that Assumption~\ref{as:smooth_sc_main} implies that $f$ is $L_f$-smooth with $L_f \eqdef \tfrac{L}{n}$ and $\mu_f$-strongly convex with $\mu_f \eqdef \tfrac{\mu}{n}$.  Clearly, $\psi$ is convex by construction. It can be shown that $\psi$ is $L_{\psi}$-smooth with $L_{\psi}=\tfrac1n$ (see the Appendix). 
% In the limit case $\lambda\to \infty$, all solutions $x_i^\star(+\infty)$ become equal. Indeed, they solve  \[x_i^\star(+\infty) = \arg\min_z \tfrac{1}{n}\sum_{i=1}^n f_i(z), \qquad i=1,2,\dots,n.\]
 We can also easily see that 
$
   \left(\nabla \psi(x)\right)_i = \tfrac1n(x_i - \bar{x})
$ (see the Appendix), which implies
$$   \psi(x)  = \tfrac{n}{2}\sum \limits_{i=1}^n \norm{ (\nabla \psi(x))_i}^2 = \tfrac{n}{2} \norm{\nabla \psi(x)}^2.$$

\subsection{Characterization of optimal solutions}

Our first result describes the behavior of $f(x(\lambda))$ and $\psi(x(\lambda))$ as a function of $\lambda$.

\begin{theorem} \label{thm:penalty}
The function $\lambda \to \psi(x(\lambda))$ is non-increasing, and
for all $\lambda>0$ we have
\begin{equation} \label{eq:dissimilarity} \psi(x(\lambda)) \leq \tfrac{f(x(\infty)) - f(x(0))}{\lambda}. \end{equation}
Moreover, the function $\lambda \to f(x(\lambda))$ is non-decreasing, and for all $\lambda \geq 0$ we have
\begin{equation} \label{eq:bifg9dd8} f(x(\lambda)) \leq f(x(\infty)).\end{equation}
\end{theorem}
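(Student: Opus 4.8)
The plan is to run the classical ``cross'' (four-point) argument used to establish monotonicity of regularization paths, using nothing but the global minimality of $x(\lambda)$ for the objective $F_\lambda \eqdef f + \lambda\psi$, together with the nonnegativity of $\psi$ and the elementary facts that $x(0)$ minimizes $f$ (since $F_0 = f$) and $\psi(x(\infty)) = 0$ (since $x(\infty)$ is the diagonal embedding of the global solution of \eqref{eq:ERM-traditional}). Strong convexity of $F$ from Assumption~\ref{as:smooth_sc_main} is what makes each $x(\lambda)$ well-defined, so all the quantities below make sense.

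First I would fix $0 \le \lambda_1 < \lambda_2 < \infty$ and write the two optimality inequalities: testing the minimality of $x(\lambda_1)$ for $F_{\lambda_1}$ against $x(\lambda_2)$ gives $f(x(\lambda_1)) + \lambda_1 \psi(x(\lambda_1)) \le f(x(\lambda_2)) + \lambda_1 \psi(x(\lambda_2))$, and the symmetric inequality with $\lambda_1,\lambda_2$ swapped holds as well. Adding the two and cancelling the $f$-terms yields $(\lambda_2 - \lambda_1)\big(\psi(x(\lambda_1)) - \psi(x(\lambda_2))\big) \ge 0$, hence $\psi(x(\lambda_1)) \ge \psi(x(\lambda_2))$, proving that $\lambda \mapsto \psi(x(\lambda))$ is non-increasing on $[0,\infty)$; this extends to $\lambda = \infty$ because $\psi(x(\infty)) = 0 \le \psi(x(\lambda))$. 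Feeding $\psi(x(\lambda_2)) \le \psi(x(\lambda_1))$ back into the first optimality inequality gives $f(x(\lambda_1)) \le f(x(\lambda_2)) + \lambda_1\big(\psi(x(\lambda_2)) - \psi(x(\lambda_1))\big) \le f(x(\lambda_2))$, so $\lambda \mapsto f(x(\lambda))$ is non-decreasing.

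Next, for \eqref{eq:bifg9dd8} I would test the minimality of $x(\lambda)$ for $F_\lambda$ against $x(\infty)$: since $\psi(x(\infty)) = 0$ this reads $f(x(\lambda)) + \lambda\psi(x(\lambda)) \le f(x(\infty))$, and dropping the nonnegative term $\lambda\psi(x(\lambda))$ gives $f(x(\lambda)) \le f(x(\infty))$ for every $\lambda \ge 0$. For the quantitative bound \eqref{eq:dissimilarity} I reuse the same inequality in the form $\lambda\psi(x(\lambda)) \le f(x(\infty)) - f(x(\lambda))$, then bound $f(x(\lambda)) \ge \min_x f(x) = f(x(0))$, obtaining $\lambda\psi(x(\lambda)) \le f(x(\infty)) - f(x(0))$, and divide by $\lambda > 0$.

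I do not expect a substantial obstacle: the core argument is two lines of convex-analysis bookkeeping. The only point requiring care is the treatment of the endpoints $\lambda = 0$ and $\lambda = \infty$ — these must be handled by invoking the definitions of $x(0)$ and $x(\infty)$ from Section~\ref{sec:new_formulation} (rather than by a limiting argument), and by using $\psi \ge 0$ and $\psi(x(\infty))=0$ explicitly where needed.
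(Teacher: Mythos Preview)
Your proposal is correct and follows essentially the same approach as the paper: the paper likewise writes the two optimality inequalities for a pair $\lambda,\theta$, adds them to get monotonicity of $\psi$, feeds this back to get monotonicity of $f$, and then uses the chain $f(x(0)) \le f(x(\lambda)) \le f(x(\lambda)) + \lambda\psi(x(\lambda)) \le f(x(\infty))$ (via $\psi(x(\infty))=0$) to obtain both \eqref{eq:dissimilarity} and \eqref{eq:bifg9dd8}. Your handling of the endpoints is, if anything, slightly more explicit than the paper's.
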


Ineq.\ \eqref{eq:dissimilarity} says that the penalty decreases to zero as $\lambda$ grows, and hence the optimal local models $x_i(\lambda)$ are increasingly similar as $\lambda$ grows. The second statement suggest that the loss $f(x(\lambda))$ increases with $\lambda$, but never exceeds the optimal global loss $f(x(\infty))$ of the standard FL formulation~\eqref{eq:ERM-traditional}.

We now characterize the optimal local models which connect our model to the MAML framework~\citep{MAML2017}, as mentioned in the introduction.

\begin{theorem} \label{thm:characterization}
For each $\lambda >0$ and $1\leq i \leq n$ we have
\begin{equation}\label{eq:step_from_mean}
x_i(\lambda) =  \overline{x}(\lambda) - \tfrac{1}{\lambda }\nabla f_i(x_i(\lambda)).
\end{equation}
Further, we have $ \sum \limits_{i=1}^n \nabla f_i(x_i(\lambda)) = 0$
and $$   \psi(x(\lambda)) = \tfrac{1}{2 \lambda^2  } \norm{\nabla f(x(\lambda))}^2.$$
\end{theorem}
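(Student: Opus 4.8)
The plan is to extract everything from the first-order optimality condition for \eqref{eq:main}. Under Assumption~\ref{as:smooth_sc_main} the objective $F=f+\lambda\psi$ is differentiable and $\mu_f$-strongly convex (with $\mu_f=\mu/n>0$), so $x(\lambda)$ is the unique point satisfying $\nabla F(x(\lambda))=0$, that is, $\nabla f(x(\lambda)) + \lambda \nabla \psi(x(\lambda)) = 0$ in $\R^{nd}$. First I would look at the $i$-th $\R^d$-block of this identity. Plugging in the two block formulas recorded in Section~\ref{bd798gsd}, $(\nabla f(x))_i = \tfrac1n \nabla f_i(x_i)$ and $(\nabla \psi(x))_i = \tfrac1n(x_i - \bar x)$, the $i$-th block reads $\tfrac1n \nabla f_i(x_i(\lambda)) + \tfrac{\lambda}{n}\big(x_i(\lambda) - \bar x(\lambda)\big) = 0$; multiplying through by $n$ and solving for $x_i(\lambda)$ (legitimate since $\lambda>0$) yields exactly \eqref{eq:step_from_mean}. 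It is worth stressing that this is an \emph{implicit} relation, as $x_i(\lambda)$ occurs on both sides, which is precisely the fixed-point form that makes the comparison with the MAML update in the introduction meaningful.

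For the identity $\sum_{i=1}^n \nabla f_i(x_i(\lambda)) = 0$ I would sum the rearranged blockwise conditions $\nabla f_i(x_i(\lambda)) = -\lambda\big(x_i(\lambda) - \bar x(\lambda)\big)$ over $i=1,\dots,n$. The right-hand side becomes $-\lambda\big(\sum_{i=1}^n x_i(\lambda) - n\bar x(\lambda)\big)$, which vanishes by the very definition $\bar x(\lambda) = \tfrac1n\sum_{i=1}^n x_i(\lambda)$. (Equivalently, this is just the statement that the sum of the $\R^d$-blocks of $\nabla\psi$ is zero, which holds for any input of $\psi$, combined with the block-free optimality condition.)

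For the last formula, the cleanest route is to take the block-free optimality condition in the form $\nabla f(x(\lambda)) = -\lambda \nabla\psi(x(\lambda))$, so that $\norm{\nabla\psi(x(\lambda))}^2 = \tfrac{1}{\lambda^2}\norm{\nabla f(x(\lambda))}^2$, and substitute this into the identity $\psi(x) = \tfrac n2 \norm{\nabla\psi(x)}^2$ already established in Section~\ref{bd798gsd}; this yields the stated closed form for $\psi(x(\lambda))$ in terms of $\norm{\nabla f(x(\lambda))}^2$. Alternatively one can substitute $x_i(\lambda)-\bar x(\lambda) = -\tfrac1\lambda \nabla f_i(x_i(\lambda))$ from \eqref{eq:step_from_mean} directly into $\psi(x(\lambda)) = \tfrac1{2n}\sum_{i=1}^n\norm{x_i(\lambda)-\bar x(\lambda)}^2$ and then rewrite $\sum_i \norm{\nabla f_i(x_i(\lambda))}^2$ using $\nabla f(x(\lambda)) = \tfrac1n(\nabla f_1(x_1(\lambda)),\dots,\nabla f_n(x_n(\lambda)))$.

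I do not expect a genuine obstacle here: the whole statement is a direct consequence of stationarity of a strongly convex differentiable objective, together with the two elementary block/gradient identities for $f$ and $\psi$ from Section~\ref{bd798gsd}. The only thing requiring care is bookkeeping with the $1/n$ normalizations — in particular using $(\nabla\psi(x))_i = \tfrac1n(x_i-\bar x)$, where the extra $1/n$ arises from differentiating $\bar x$ through every term of $\psi$, and keeping track of the $1/n$ factors when passing between the blockwise form and the norm identity $\psi(x)=\tfrac n2\norm{\nabla\psi(x)}^2$ — so that the constant in the final identity comes out correctly.
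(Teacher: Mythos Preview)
Your proposal is correct and mirrors the paper's own proof: the paper likewise writes the blockwise optimality condition $\nabla f_i(x_i(\lambda))+\lambda(x_i(\lambda)-\bar x(\lambda))=0$, averages over $i$ to obtain $\sum_i\nabla f_i(x_i(\lambda))=0$, and then substitutes $x_i(\lambda)-\bar x(\lambda)=-\tfrac1\lambda\nabla f_i(x_i(\lambda))$ directly into the definition of $\psi$ (your second route). Your caution about the $1/n$ bookkeeping is warranted: with the paper's convention $(\nabla f(x))_i=\tfrac1n\nabla f_i(x_i)$, either of your routes in fact yields $\psi(x(\lambda))=\tfrac{n}{2\lambda^2}\norm{\nabla f(x(\lambda))}^2$, so the constant in the displayed identity (and in the final equality of the paper's proof) is off by a factor of~$n$.
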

The optimal local models $\eqref{eq:step_from_mean}$ are obtained from the average model by subtracting a multiple of the local gradient. Observe that the local gradients always sum up to zero at optimality. This is obviously true for $\lambda=\infty$, but it is a bit less obvious that this holds for any $\lambda>0$.

Next, we argue the optimal local models converge to the traditional FL solution at the rate $\cO(1/\lambda)$. 
\begin{theorem}\label{thm:model_convergence}
Let $P(z) \eqdef \frac{1}{n}\sum_{i=1}^n f_i(z)$. Then, $x(\infty)$ is the unique minimizer of $P$ and we have 
\begin{equation}\label{eq:model_convergence}  
\| \nabla P(\bar{x}(\lambda))\|^2 \leq  \frac{2 L^2}{\lambda}(f(x(\infty)) - f(x(0))).
\end{equation}
\end{theorem}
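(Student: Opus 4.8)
The plan is to derive the bound by chaining together the two preceding theorems, with no genuinely new idea required. First I would dispose of the easy part of the statement: by Assumption~\ref{as:smooth_sc_main} each $f_i$ is $\mu$-strongly convex, so $P=\tfrac1n\sum_{i=1}^n f_i$ is $\mu$-strongly convex and therefore has a unique minimizer. Since $P$ is literally the objective appearing in the standard FL problem \eqref{eq:ERM-traditional}, whose solution we defined the common coordinate value of $x(\infty)$ to be, that unique minimizer is $x(\infty)$.

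For the quantitative estimate \eqref{eq:model_convergence}, the crucial ingredient is the stationarity identity from Theorem~\ref{thm:characterization}, namely $\sum_{i=1}^n \nabla f_i(x_i(\lambda)) = 0$. Using it I would write
$$\nabla P(\bar{x}(\lambda)) = \tfrac1n\sum_{i=1}^n \nabla f_i(\bar{x}(\lambda)) = \tfrac1n\sum_{i=1}^n\left(\nabla f_i(\bar{x}(\lambda)) - \nabla f_i(x_i(\lambda))\right),$$
then apply $L$-smoothness of each $f_i$ to bound $\norm{\nabla f_i(\bar{x}(\lambda)) - \nabla f_i(x_i(\lambda))} \le L\norm{\bar{x}(\lambda) - x_i(\lambda)}$, and finish with Cauchy--Schwarz (equivalently, convexity of $t\mapsto t^2$) to pass from $\tfrac1n\sum_i\norm{\cdot}$ to $\big(\tfrac1n\sum_i\norm{\cdot}^2\big)^{1/2}$. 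This gives
$$\norm{\nabla P(\bar{x}(\lambda))}^2 \le \tfrac{L^2}{n}\sum_{i=1}^n \norm{\bar{x}(\lambda) - x_i(\lambda)}^2.$$

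Next I would recognize the right-hand sum as exactly $2n\,\psi(x(\lambda))$ from the definition of $\psi$ in \eqref{eq:main}, so that $\norm{\nabla P(\bar{x}(\lambda))}^2 \le 2L^2\,\psi(x(\lambda))$, and then invoke the dissimilarity bound \eqref{eq:dissimilarity} of Theorem~\ref{thm:penalty}, $\psi(x(\lambda)) \le (f(x(\infty)) - f(x(0)))/\lambda$, to conclude. I do not anticipate a real obstacle here; the one thing to watch is the bookkeeping of normalization constants (the $1/n$ in $\nabla f$, the $1/(2n)$ in $\psi$, and the $1/n$ in $P$) so that the factor $\tfrac{L^2}{n}\cdot 2n$ collapses cleanly to $2L^2$ and the final constant is $2L^2/\lambda$ rather than being off by a power of $n$. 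It is also worth emphasizing that the estimate is stated at the averaged point $\bar{x}(\lambda)$ precisely because the stationarity identity $\sum_i \nabla f_i(x_i(\lambda)) = 0$ is what makes that evaluation point work.
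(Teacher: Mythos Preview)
Your proposal is correct and follows essentially the same approach as the paper: use the stationarity identity $\sum_i \nabla f_i(x_i(\lambda))=0$ from Theorem~\ref{thm:characterization} to rewrite $\nabla P(\bar{x}(\lambda))$ as an average of differences, apply Jensen and $L$-smoothness to bound by $2L^2\psi(x(\lambda))$, and finish with the dissimilarity bound \eqref{eq:dissimilarity} from Theorem~\ref{thm:penalty}. Your bookkeeping of the normalization constants is exactly right.
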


 \begin{figure}[h]
\centering
\begin{minipage}{0.45\textwidth}
\includegraphics[width =  \textwidth ]{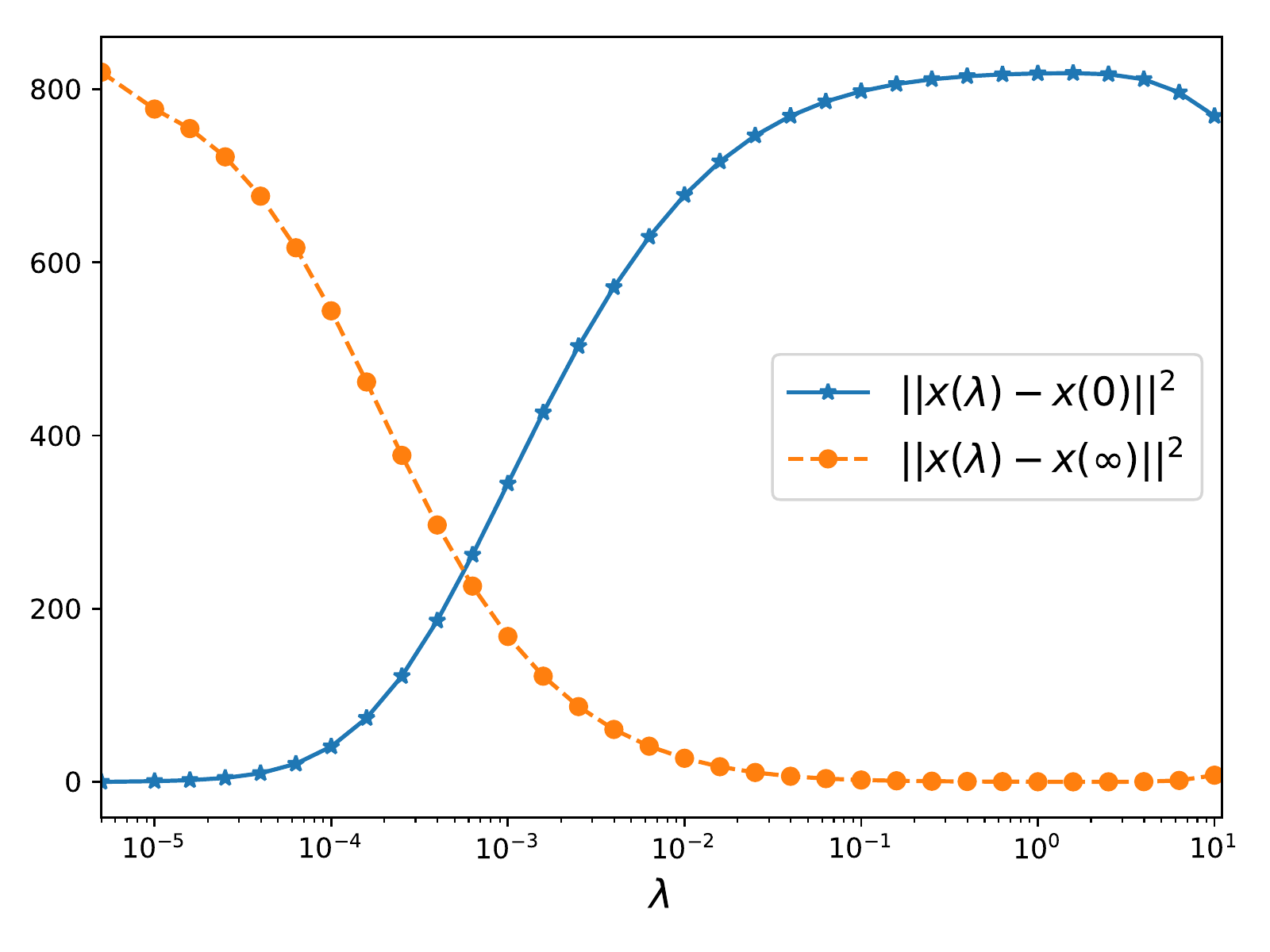}
\end{minipage}%
\caption{Distance of solution $x(\lambda)$ of~\eqref{eq:main} to pure local solution $x(0) $ and global solution $x(\infty)$ as a function of $\lambda$. Logistic regression on a1a dataset. See Appendix for the setup.} 
\label{fig:lambda}
\end{figure}

\section{L2GD: Loopless Local GD} \label{sec:L2GD}

In this section we  describe a new randomized method for solving the formulation \eqref{eq:main}.  Our method is a non-uniform SGD for \eqref{eq:main} seen as a 2-sum problem, sampling either  $\nabla f$ or $\nabla \psi$ to estimate $\nabla F$. Letting $0<p<1$,  we define a stochastic gradient of $F$ at  $x \in \R^{nd}$ as follows
\begin{equation}\label{eq:G(x)}G(x) \eqdef \begin{cases} \tfrac{\nabla f(x)}{1-p} & \text{with probability} \quad 1-p\\
 \tfrac{\lambda \nabla \psi(x)}{p} & \text{with probability} \quad p
\end{cases}.\end{equation}
Clearly,  $G(x)$ is an unbiased estimator of $\nabla F(x)$. This leads to the following method for minimizing $F$, which we call L2GD:
$x^{k+1} = x^k -\alpha G(x^k).$
Plugging the formulas for $\nabla f(x)$ and $\nabla \psi(x)$ into \eqref{eq:G(x)}, and writing the resulting method in a distributed manner, we arrive at Algorithm~\ref{alg:L2GD}. In each iteration, a coin $\xi$ is tossed and lands $1$  with probability $p$ and $0$ with probability $1-p$. If $\xi=0$, all {\color{red}Devices} perform one local GD step \eqref{eq:damkpdaosdaon}, and if $\xi=1$, {\color{blue}Master} shifts each local model towards the average via \eqref{eq:buiufg98fb}.  As we shall see in Sec.~\ref{sec:analysis}, our theory  limits the value of the stepsize $\alpha$, which has the effect that the ratio $\tfrac{\alpha \lambda}{n p}$ cannot exceed $\tfrac{1}{2}$. Hence, \eqref{eq:buiufg98fb} is a convex combination of $x_i^k$ and $\bar{x}^k$.

Note that Algorithm~\ref{alg:L2GD} is only required to communicate when a two consecutive coin tosses land a different value (see the detailed explanation in Sec.~\ref{sec:understanding} of the appendix). Consequently, the expected number of communication rounds in $k$ iterations of L2GD is $p(1-p)k$.

\begin{remark}
Our algorithm statements do not take the data privacy into the consideration. While privacy is a very important aspect of FL; in this paper, we tackle different FL challenges and thus we ignore privacy issues. However, the proposed algorithms can be implemented in a private fashion as well using tricks that are used in the classical FL scenario~\citep{bonawitz2017practical}.
\end{remark}

\subsection{The dynamics of local GD and averaging steps}

Notice that  {\em the average of the local models does not change} during an aggregation step. Indeed, $\bar{x}^{k+1}$ is equal to
$$   \tfrac{1}{n} \sum \limits_{i=1}^n x_i^{k+1}   \overset{\eqref{eq:buiufg98fb}}{=}   \tfrac{1}{n} \sum \limits_{i=1}^n  \left[\left(1-\tfrac{\alpha \lambda}{np}\right) x^k_i + \tfrac{\alpha \lambda}{np} \bar{x}^k \right] = \bar{x}^k.$$

If several averaging steps take place in a sequence, the point $a=\bar{x}^k$ in \eqref{eq:buiufg98fb} remains unchanged, and each local model $x_i^k$ merely moves along the line joining the initial value of the local model at the start of the sequence and  $a$, with each step pushing $x_i^k$ closer to the average $a$. 

{\em In summary, the more local GD steps are taken, the closer the local models get to the pure local models; and the more averaging steps are taken, the closer the local models get to their average value. The relative number of local GD vs.\ averaging steps is controlled by the parameter $p$: the expected \# of local GD steps is $\tfrac{1}{p}$, and the expected number of consecutive aggregation steps is $\tfrac{1}{1-p}$.}

% \begin{remark}As presented, this method requires a global clock governing the coin toss. In practice one can simply fix the number of local GD steps to its expected value $\tfrac{1}{p}$.\end{remark}

\begin{algorithm}[t]
  \caption{L2GD: Loopless Local Gradient Descent}
  \label{alg:L2GD}
\begin{algorithmic}
\STATE{\bfseries Input: }{$x^0_1 = \dots = x_{n}^0\in\R^{d}$, stepsize $\alpha$, probability $p$ }
  \FOR{$k=0,1,\dotsc$}
  \STATE $\xi = 1$ with probability $p$ and $0$ with probability $1-p$ 
  \IF {$\xi=0$}
     \STATE All {\color{red}Devices} $i=1,\dots,n$ perform a local GD step:
     \vskip -0.2cm
        \begin{equation} \label{eq:damkpdaosdaon}
        x^{k+1}_i  =x^k_i - \tfrac{\alpha}{n(1-p)} \nabla f_i(x_i^k)
\end{equation}   \vspace*{-0.5cm}
%{\color{blue}Master}  sends model $x_i^{k+1}$ to {\color{red}Device} $i$ for all $i$
  \ELSE
%  \STATE All {\color{red}Devices} send their  model  $x_i^k$ to the {\color{blue}Master}
  \STATE {\color{blue}Master} computes the average $\bar{x}^k = \tfrac{1}{n}\sum \limits_{i=1}^n x_i^k$
  \STATE {\color{blue}Master}  for each $i$ computes step towards aggregation
\STATE  {\baselineskip=0pt \vskip -0.2cm
 \begin{flalign}
 		x^{k+1}_i  = \left(1-\tfrac{\alpha \lambda}{np}\right) x^k_i + \tfrac{\alpha \lambda}{np} \bar{x}^k  \label{eq:buiufg98fb}
 		\end{flalign}      }		
 		\vspace*{-0.5cm}
  \ENDIF
  \ENDFOR
\end{algorithmic}
\end{algorithm}

\subsection{Understanding communication of L2GD\label{sec:understanding}}

\begin{example} In order to better understand when communication takes place  in Algorithm~\ref{alg:L2GD}, consider the following possible sequence of coin tosses: 
$0, 0, 1, 0, 1, 1, 1, 0.$
The first two coin tosses lead to two local GD steps \eqref{eq:damkpdaosdaon} on all devices. The third coin toss lands $1$, at which point all local models $x_i^k$ are communicated to the master, averaged to form $\bar{x}^k$, and the step \eqref{eq:buiufg98fb} towards averaging is taken. The fourth coin toss is $0$, and at this point, the master communicates the updated local models back to the devices, which subsequently perform a single local GD step \eqref{eq:damkpdaosdaon}. Then come three consecutive coin tosses landing $1$, which means that the local models are again communicated to the master, which performs three averaging steps  \eqref{eq:buiufg98fb}. Finally, the eighth coin toss lands $0$, which makes the master send the updated local models back to the devices, which subsequently perform a single local GD step.
\end{example}

This example illustrates that communication needs to take place whenever two consecutive coin tosses land a different value. If $0$ is followed by a $1$, all devices communicate to the master, and if $1$ is followed by a $0$, the master communicates back to the devices. It is standard to count each pair of communications, {\color{red}Device}$\to${\color{blue}Master} and the subsequent {\color{blue}Master}$\to${\color{red}Device}, as a single communication round.

\begin{lemma} \label{lem:exp_no_comm}The expected number of communication rounds in $k$ iterations of L2GD is $p(1-p)k$.
\end{lemma}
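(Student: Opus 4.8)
The plan is to count, in expectation, the number of indices $k' \in \{0,1,\dots,k-1\}$ at which a communication round is triggered, and then invoke linearity of expectation. As explained in the preceding example and discussion, a communication round is associated precisely with each pair of consecutive coin tosses $(\xi^{k'}, \xi^{k'+1})$ that land on different values: a $0$ followed by a $1$ triggers a Device$\to$Master transmission, and a $1$ followed by a $0$ triggers a Master$\to$Device transmission, and the convention is to bundle each such transition into one communication round. So the total number of communication rounds over $k$ iterations equals $\sum_{k'} \mathbf{1}[\xi^{k'} \neq \xi^{k'+1}]$ where the sum ranges over the consecutive pairs arising in those $k$ iterations.

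First I would fix the indexing carefully: in $k$ iterations the coins $\xi^0, \xi^1, \dots, \xi^{k-1}$ are tossed, giving $k-1$ consecutive pairs $(\xi^{k'},\xi^{k'+1})$ for $k'=0,\dots,k-2$; for large $k$ the distinction between $k-1$ and $k$ is immaterial, and (as in the paper's phrasing) I would simply track the number of transitions, which I will denote $C_k = \sum_{k'=0}^{k-2}\mathbf{1}[\xi^{k'}\neq\xi^{k'+1}]$. Then I would compute, for a single pair, $\Prob{\xi^{k'}\neq\xi^{k'+1}}$. Since the coins are independent and each equals $1$ with probability $p$ and $0$ with probability $1-p$, we have $\Prob{\xi^{k'}\neq\xi^{k'+1}} = \Prob{\xi^{k'}=0,\xi^{k'+1}=1} + \Prob{\xi^{k'}=1,\xi^{k'+1}=0} = (1-p)p + p(1-p) = 2p(1-p)$.

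Now I would have to reconcile this with the claimed answer $p(1-p)k$. The point is the counting convention: a Device$\to$Master step followed later by a Master$\to$Device step together form \emph{one} communication round, so each ``round'' spans two transitions — one $0\to1$ transition and its matching $1\to0$ transition — except possibly for boundary effects at the start and end of the run. Hence the number of communication rounds is half the number of transitions (up to an additive $O(1)$ boundary term), i.e.\ $\E{C_k}/2 = \tfrac12 (k-1)\cdot 2p(1-p) = (k-1)p(1-p) \approx p(1-p)k$. By linearity of expectation this gives the stated $p(1-p)k$.

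The main obstacle, and the only subtle point, is this bookkeeping about what exactly counts as one ``communication round'': whether one counts transitions or pairs of transitions, and how to handle the $O(1)$ boundary discrepancy between $k$ and $k-1$ pairs and the possibly unmatched transition at the end of the iteration window. I would resolve it by being explicit that the per-iteration ``expected number of communication rounds'' is an asymptotic/amortized statement (the additive constants wash out as $k\to\infty$), so the clean formula $p(1-p)k$ holds; this matches the sentence already asserted right after the statement of Algorithm~\ref{alg:L2GD}. Everything else is just independence of the coins plus linearity of expectation, so no real computation is needed.
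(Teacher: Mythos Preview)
Your proposal is correct and matches the reasoning the paper relies on: the paper does not supply a separate formal proof of this lemma, but simply states it as an immediate consequence of the preceding discussion (two consecutive different coin tosses trigger a transmission, and a Device$\to$Master transmission together with the subsequent Master$\to$Device transmission constitute one communication round). Your formalization via indicator variables, linearity of expectation, and the halving from ``two transitions per round'' is exactly the intended argument; a slightly more direct variant is to count only the $0\to 1$ transitions (each of which initiates one round), whose per-pair probability is $(1-p)p$, giving $p(1-p)k$ immediately without the factor-of-two bookkeeping.
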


\begin{remark}
As stressed in Section~\ref{sec:contrib},  L2GD differs from LGD only in terms of the random local loop length and  partial averaging. These two differences are minor and keep the essence of the method intact. We only decided to consider L2GD over LGD for the sake of convenience: it allows us to generate local methods as special cases of (existing and novel variants of) SGD. The major novelty of our approach is applying local methods to solve the personalized FL objective~\eqref{eq:ERM-traditional}, which allows us to obtain better communication guarantees and justifies the role of local steps in the heterogeneous data regime. 
\end{remark}

\subsection{Convergence theory }\label{sec:analysis}

We now present our convergence result for L2GD.

\begin{theorem}\label{thm:local_gd} Let Assumption~\ref{as:smooth_sc_main} hold.
If $\alpha \leq \tfrac{1}{2\cL}$, then
\[\E{\norm{x^k-x(\lambda)}^2} \leq \left(1- \tfrac{\alpha \mu}{n}\right)^k \norm{x^0-x(\lambda)}^2 + \tfrac{2n\alpha \sigma^2}{\mu},\]
where $\cL\eqdef \tfrac1n \max\left\{\tfrac{L}{1-p}, \tfrac{\lambda }{p}\right\}$ and 
$$   \sigma^2 \eqdef  \tfrac{1}{n^2}  \sum \limits_{i=1}^n \left( \tfrac{1}{1-p} \| \nabla f_i(x_i(\lambda))\|^2 + \tfrac{\lambda^2}{p} \|x_i(\lambda) - \overline{x}(\lambda) \|^2 \right).$$

\end{theorem}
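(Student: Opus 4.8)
The plan is to recognize L2GD as an instance of SGD with a biased-free but non-uniform gradient estimator applied to the strongly convex objective $F$, and then invoke a standard SGD-with-fixed-stepsize descent lemma. Concretely, I would first record that $G(x^k)$ defined in \eqref{eq:G(x)} is unbiased for $\nabla F(x^k)$ (already noted in the text), so $x^{k+1} = x^k - \alpha G(x^k)$ is genuine SGD. The key quantitative ingredient is an ``expected smoothness''/second-moment bound of the form
\[
\E{\norm{G(x) - G(x(\lambda))}^2} \le 2\cL \bigl(F(x) - F(x(\lambda)) - \langle \nabla F(x(\lambda)), x - x(\lambda)\rangle\bigr),
\]
i.e. the estimator is $\cL$-expected-smooth with $\cL = \tfrac1n\max\{\tfrac{L}{1-p}, \tfrac{\lambda}{p}\}$. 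To get this I would compute $\E{\norm{G(x)-G(x(\lambda))}^2}$ explicitly: the $f$-branch contributes $\tfrac{1}{1-p}\norm{\nabla f(x) - \nabla f(x(\lambda))}^2$ and the $\psi$-branch contributes $\tfrac{\lambda^2}{p}\norm{\nabla\psi(x)-\nabla\psi(x(\lambda))}^2$; then use that $f$ is $\tfrac{L}{n}$-smooth and convex so $\norm{\nabla f(x)-\nabla f(x(\lambda))}^2 \le \tfrac{2L}{n}D_f(x,x(\lambda))$, and similarly $\psi$ is $\tfrac1n$-smooth and convex so $\lambda^2\norm{\nabla\psi(x)-\nabla\psi(x(\lambda))}^2 \le \tfrac{2\lambda^2}{n}D_\psi(x,x(\lambda)) = \tfrac{2\lambda}{n}D_{\lambda\psi}(x,x(\lambda))$. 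Taking the max of the two prefactors $\tfrac{1}{n(1-p)}$ and $\tfrac{1}{np}$ and using $D_f + D_{\lambda\psi} = D_F$ yields the displayed bound with constant $\cL$.

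Next I would run the standard one-step expansion. Conditioning on $x^k$,
\[
\E{\norm{x^{k+1}-x(\lambda)}^2} = \norm{x^k-x(\lambda)}^2 - 2\alpha\langle \nabla F(x^k), x^k - x(\lambda)\rangle + \alpha^2 \E{\norm{G(x^k)}^2}.
\]
For the last term I split $\E{\norm{G(x^k)}^2} \le 2\E{\norm{G(x^k)-G(x(\lambda))}^2} + 2\E{\norm{G(x(\lambda))}^2}$, identify $\E{\norm{G(x(\lambda))}^2} = \sigma^2$ (this is exactly the stated $\sigma^2$, since at the optimum $\nabla F(x(\lambda))=0$ does \emph{not} make the estimator vanish — the two branches $\tfrac{\nabla f(x(\lambda))}{1-p}$ and $\tfrac{\lambda\nabla\psi(x(\lambda))}{p}$ cancel only in expectation), and bound the first term by $4\alpha^2\cL D_F(x^k,x(\lambda))$ via the expected-smoothness inequality. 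For the inner-product term, strong convexity of $F$ with modulus $\tfrac{\mu}{n}$ gives $\langle \nabla F(x^k), x^k-x(\lambda)\rangle \ge D_F(x^k,x(\lambda)) + \tfrac{\mu}{2n}\norm{x^k-x(\lambda)}^2$. Collecting terms,
\[
\E{\norm{x^{k+1}-x(\lambda)}^2} \le \bigl(1-\tfrac{\alpha\mu}{n}\bigr)\norm{x^k-x(\lambda)}^2 - 2\alpha(1-2\alpha\cL)D_F(x^k,x(\lambda)) + 2\alpha^2\sigma^2,
\]
and the condition $\alpha \le \tfrac{1}{2\cL}$ kills the Bregman term (it is nonnegative), leaving the clean recursion $\E{\norm{x^{k+1}-x(\lambda)}^2} \le (1-\tfrac{\alpha\mu}{n})\norm{x^k-x(\lambda)}^2 + 2\alpha^2\sigma^2$.

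Finally I would unroll this recursion: iterating gives $\E{\norm{x^k-x(\lambda)}^2} \le (1-\tfrac{\alpha\mu}{n})^k\norm{x^0-x(\lambda)}^2 + 2\alpha^2\sigma^2 \sum_{j=0}^{k-1}(1-\tfrac{\alpha\mu}{n})^j \le (1-\tfrac{\alpha\mu}{n})^k\norm{x^0-x(\lambda)}^2 + 2\alpha^2\sigma^2\cdot\tfrac{n}{\alpha\mu}$, which is exactly the claimed bound with additive term $\tfrac{2n\alpha\sigma^2}{\mu}$. The main obstacle — really the only nontrivial point — is establishing the expected-smoothness inequality with the \emph{correct} constant $\cL = \tfrac1n\max\{\tfrac{L}{1-p},\tfrac{\lambda}{p}\}$ rather than a sum; this hinges on the observation that the $f$- and $\psi$-branches of $G$ have disjoint ``support'' in the sense that each branch's contribution is separately controllable by the corresponding Bregman divergence, so one may take a max over branch constants instead of adding them. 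Everything else is bookkeeping; alternatively, one could cite the general non-uniform SGD theory of \citep{pmlr-v97-qian19b} and merely verify that our two-term splitting satisfies their assumptions with the stated parameters, which is the route the paper explicitly says it follows.
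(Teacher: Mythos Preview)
Your proposal is correct and follows essentially the same approach as the paper: the paper proves precisely the expected-smoothness lemma you outline (same branch-by-branch computation, same max-over-prefactors trick to get $\cL$) and identifies $\sigma^2 = \E{\norm{G(x(\lambda))}^2}$, then invokes Theorem~3.1 of \citep{pmlr-v97-qian19b} rather than writing out the one-step expansion and recursion you spell out. Your self-contained unrolling is a faithful unpacking of that cited result, so there is no substantive difference.
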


Let us find the parameters $p$, $\alpha$ which lead to the fastest rate, to push the error within $\left( { \cal O} (\varepsilon)+  \tfrac{2n\alpha \sigma^2}{\mu}  \right)$-neighborhood of the optimum\footnote{In Sec.~\ref{sec:lsgd} we propose a variance reduced algorithm which removes the $( \tfrac{2n\alpha \sigma^2}{\mu})$-neighborhood from~Thm.~\ref{thm:local_gd}. In that setting, our goal will be to achieve $\E{\norm{x^k-x(\lambda)}^2} \leq \varepsilon \norm{x^0-x(\lambda)}^2$. }, i.e., to achieve
\begin{equation}\label{eq:bi7g97ibudd} \E{\norm{x^k-x(\lambda)}^2} \leq \varepsilon \norm{x^0-x(\lambda)}^2 + \tfrac{2n\alpha \sigma^2}{\mu}. \end{equation}

\begin{corollary}\label{cor:optimalp}
The value $p^{\star} = \tfrac{\lambda}{L + \lambda}$ minimizes both the number of iterations and the expected number of communications for achieving \eqref{eq:bi7g97ibudd}. In particular, the optimal number of iterations is $2\tfrac{L+\lambda}{\mu} \log \tfrac{1}{\varepsilon}$, and the optimal expected number of communications is $\tfrac{2\lambda}{\lambda+ L} \tfrac{L}{\mu} \log \tfrac{1}{\varepsilon}.$
\end{corollary}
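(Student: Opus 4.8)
The plan is to optimize over the free parameters $p$ and $\alpha$ in Theorem~\ref{thm:local_gd}, treating the iteration count and the expected communication count separately and then noting they are minimized by the same $p$. First I would fix the strategy for the stepsize: since the contraction factor is $1-\tfrac{\alpha\mu}{n}$, we want $\alpha$ as large as the constraint $\alpha\le \tfrac{1}{2\cL}$ permits, so we take $\alpha = \tfrac{1}{2\cL}$. With this choice the per-iteration contraction becomes $1-\tfrac{\mu}{2n\cL}$, and to drive the $\varepsilon\|x^0-x(\lambda)\|^2$ term below the target it suffices to run $k = \tfrac{2n\cL}{\mu}\log\tfrac{1}{\varepsilon}$ iterations (using $(1-t)^k\le e^{-tk}$ and $\log\tfrac1\varepsilon$). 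So the iteration complexity is proportional to $n\cL = \max\!\left\{\tfrac{L}{1-p},\tfrac{\lambda}{p}\right\}$.

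Next I would minimize $n\cL$ over $p\in(0,1)$. The function $p\mapsto \tfrac{L}{1-p}$ is increasing and $p\mapsto\tfrac{\lambda}{p}$ is decreasing, so the maximum of the two is minimized where they are equal: $\tfrac{L}{1-p}=\tfrac{\lambda}{p}$, i.e. $Lp = \lambda(1-p)$, giving $p^\star = \tfrac{\lambda}{L+\lambda}$. Substituting back, both branches equal $L+\lambda$, so $n\cL = L+\lambda$ at the optimum and the iteration count is $\tfrac{2(L+\lambda)}{\mu}\log\tfrac1\varepsilon$. For the communication count, Lemma~\ref{lem:exp_no_comm} says the expected number of communication rounds in $k$ iterations is $p(1-p)k$; with $k = \tfrac{2n\cL}{\mu}\log\tfrac1\varepsilon$ this is $p(1-p)\cdot\tfrac{2}{\mu}\max\!\left\{\tfrac{L}{1-p},\tfrac{\lambda}{p}\right\}\log\tfrac1\varepsilon = \tfrac{2}{\mu}\max\{Lp,\lambda(1-p)\}\log\tfrac1\varepsilon$. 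Now $Lp$ is increasing and $\lambda(1-p)$ is decreasing in $p$, so their maximum is again minimized at the crossover point $p^\star=\tfrac{\lambda}{L+\lambda}$, where the common value is $Lp^\star = \tfrac{L\lambda}{L+\lambda}$. Hence the optimal expected communication count is $\tfrac{2}{\mu}\cdot\tfrac{L\lambda}{L+\lambda}\log\tfrac1\varepsilon = \tfrac{2\lambda}{\lambda+L}\tfrac{L}{\mu}\log\tfrac1\varepsilon$, as claimed.

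The one point that needs care — and is really the only mildly subtle step — is justifying that $p^\star$ simultaneously minimizes \emph{both} objectives, rather than there being a tradeoff. The key observation is that after plugging in the maximal stepsize, the iteration count is a positive multiple of $\max\{\tfrac{L}{1-p},\tfrac{\lambda}{p}\}$ and the communication count is a positive multiple of $\max\{Lp,\lambda(1-p)\}$; both are of the form ``max of an increasing and a decreasing function of $p$,'' and an elementary argument shows such a max is minimized exactly at the unique point where the two functions coincide — and that coincidence point is the same $p^\star$ for both, because multiplying the first pair of functions by $p(1-p)$ (which is what the communication weighting does) turns $\tfrac{L}{1-p}$ into $Lp$ and $\tfrac{\lambda}{p}$ into $\lambda(1-p)$, preserving which branch is active on each side of the crossover. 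So no genuine obstacle arises; the proof is a short optimization argument. I would also remark briefly that the $\tfrac{2n\alpha\sigma^2}{\mu}$ neighborhood term is unaffected by this analysis since it is simply carried along, and that the expected number of local steps at $p^\star$ is $\tfrac1{p^\star} = 1+\tfrac{L}{\lambda}$, reconciling with the discussion preceding the corollary.
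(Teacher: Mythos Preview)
Your proposal is correct and follows essentially the same approach as the paper's own proof: set $\alpha=\tfrac{1}{2\cL}$, write the iteration count as $\tfrac{2n\cL}{\mu}\log\tfrac1\varepsilon$ and the expected communication count as $p(1-p)k=\tfrac{2}{\mu}\max\{Lp,\lambda(1-p)\}\log\tfrac1\varepsilon$, and observe that both maxima are minimized at the crossover $p^\star=\tfrac{\lambda}{L+\lambda}$. Your additional justification for why the max of an increasing and a decreasing function is minimized at the crossover, and why the same $p^\star$ works for both objectives, makes the argument slightly more explicit than the paper's version but is otherwise identical in substance.
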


If we choose $p=p^\star$, then $\tfrac{\alpha \lambda }{np} = \tfrac{1}{2}$, and the aggregation  rule~\eqref{eq:buiufg98fb} in Algorithm~\ref{alg:L2GD} becomes
\begin{equation}\label{eq:bui87g9f}
x^{k+1}_i = \tfrac12 \left( x^k_i +\bar{x}^k\right) 
\end{equation}
while the local GD step~\eqref{eq:damkpdaosdaon} becomes
$
x^{k+1}_i  =x^k_i - \tfrac{1}{2L} \nabla f_i(x_i^k).
$

Notice that while our method does not support full averaging as that is too unstable,  \eqref{eq:bui87g9f} suggests that one should take a large step {\em towards} averaging.  As $\lambda $ get smaller, the solution to the optimization problem \eqref{eq:main} will increasingly favour pure local models, i.e., $x_i(\lambda) \to x_i(0) \eqdef \arg\min f_i$ for all $i$ as $\lambda \to 0$. Pure local models can be computed without any communication whatsoever and Cor.~\ref{cor:optimalp} confirms this intuition: the optimal number of communication round decreases to zero as $\lambda \to 0$. On the other hand, as $\lambda \to \infty$, the optimal number of communication rounds converges to $2\tfrac{L}{\mu}\log \tfrac{1}{\varepsilon}$, which recovers the performance of GD for finding the globally optimal model (see Fig.~\ref{fig:lambda}).

{\em In summary, we recover the communication efficiency of GD for finding the globally optimal model as $\lambda \to \infty$ (ignoring the $(\tfrac{2n\alpha \sigma^2}{\mu})$-neighborhood). However, for other values of $\lambda$, the communication complexity of L2GD is better and decreases to $0$ as $\lambda \to 0$. Hence, our communication complexity result interpolates between the communication complexity of GD for finding the global model and the zero communication complexity for finding the pure local models. }

\paragraph{L2GD and full averaging.} Is there a setup such that conditions of Thm.~\ref{thm:local_gd} are satisfied and the aggregation update~\eqref{eq:buiufg98fb} is identical to full averaging? This is equivalent requiring $0<p<1$ such that $\alpha \lambda =np$. However, we have $\alpha \lambda   \leq \tfrac{\lambda  }{2\cL}  \leq np$, which means that full averaging is not supported by our theory.

\section{Loopless Local SGD with Variance Reduction} \label{sec:lsgd}

As we have seen in Sec.~\ref{sec:analysis}, L2GD is a specific instance of SGD, thus only converges linearly to a neighborhood of the optimum. In this section, we resolve the mentioned issue by incorporating control variates to the stochastic gradient~\citep{SVRG, defazio2014saga}. We go further: we assume that each local objective has a finite-sum structure and propose an algorithm, L2SGD+,  which takes \emph{local stochastic} gradient steps, while maintaining (global) linear convergence rate. As a consequence, L2SGD+ is the first local SGD with linear convergence.\footnote{We are aware that a linearly converging local SGD (with $\lambda = \infty$) can be obtained as a particular instance of the decoupling method~\citep{mishchenko2019stochastic}. Other variance reduced local SGD algorithms~\citep{liang2019variance, karimireddy2019scaffold, wu2019federated} do not achieve  linear convergence.} For  convenience, we present variance reduced local GD (i.e., no local subsampling) in the Appendix. 

\begin{assumption}\label{as:smooth_sc_simple}
Assume that $f_\tR$ has a finite-sum structure: $$    f_\tR(x_{\tR}) = \tfrac1m \sum \limits_{j =1}^m \flocc_{i,j}(x_{\tR}).$$ Let $\flocc_{i,j}$ be convex, $\Lloc$-smooth while $f_i$ is $\mu$-strongly convex (for each $1\leq j \leq m, 1\leq \tR \leq \TR$).
\end{assumption}

\subsection{Algorithm}
Denote $\onesm\in \R^m$ to be vector of ones. We are now ready to state L2SGD+ as Algorithm~\ref{alg:L2SGD}. 
\begin{algorithm}[h]
  \caption{L2SGD+: Loopless Local  SGD with Variance Reduction}
  \label{alg:L2SGD}
\begin{algorithmic}
\STATE{\bfseries Input: }{$x^0_1 = \dots = x_{n}^0\in\R^{d}$, stepsize $\alpha$, probability $p$ }
\STATE $\mJf^0_{\tR}  = 0 \in \R^{d\times m}, \mJpsi^0_{\tR}  = 0 \in \R^{d}$ (for $\tR = 1,\dots, \TR$)
  \FOR{$k=0,1,\dotsc$}
  \STATE $\xi = 1$ with probability $p$ and $0$ with probability $1-p$ 
  \IF {$\xi=0$}
 \STATE All {\color{red}Devices} $i=1,\dots,n$:      
          \STATE \hskip .3cm Sample $j \in \{1,\dots, m\}$ (uniformly at random)
       \STATE \hskip .3cm $g^k_\tR = \tfrac{1}{\TR(1-\pagg)} \left(\nabla \flocc_{i,j}(x^k_{\tR})- \left(\mJf^k_{\tR}\right)_{:,j}\right) + \tfrac{ \mJf^k_{\tR} \onesm}{\TR m}  +   \tfrac{\mJpsi^k_{\tR}}{\TR}  $ 
         \STATE \hskip .3cm $x^{k+1}_\tR = x^{k}_\tR - \alpha g^k_\tR$
        \STATE  \hskip .3cm Set $(\mJ^{k+1}_i)_{:,j} = \nabla \flocc_{i,j}(x^k_\tR)$, $\mJpsi^{k+1}_{\tR}= \mJpsi^k_{\tR}$,
        \STATE \hskip 0.83cm  $(\mJ^{k+1}_i)_{:,l} = (\mJ^{k+1}_i)_{:,l} $ for all $l\neq j$
         		\vspace*{-0.1cm}
 \ELSE 
  \STATE {\color{blue}Master} computes the average $\bar{x}^k = \tfrac{1}{n}\sum_{i=1}^n x_i^k$
\STATE {\color{blue}Master} does for all $i=1,\dots,n$:       
   \STATE \hskip .3cm $g^k_\tR=   \tfrac{\lambda }{\TR \pagg} (x^k_{\tR} - \bar{x}^k) - \tfrac{\pagg^{-1} -1}{ \TR}\mJpsi^k_{\tR} + \tfrac{1}{\TR m} \mJf_{\tR}^k \onesm$
 		\STATE  \hskip .3cm Set $x_{\tR}^{k+1} = x_{\tR}^k  - \alpha g^k_\tR $
 		\STATE  \hskip .3cm Set $\mJpsi^{k+1}_{\tR} =  \lambda (x_{\tR}^k - \bar{x}^k )$, $\mJf^{k+1}_{\tR} = \mJf^{k}_{\tR} $
    \ENDIF
      \ENDFOR
\end{algorithmic}
\end{algorithm}
L2SGD+ only communicates when a two consecutive coin tosses land a different value, thus, on average $p(1-p)k$ times per $k$ iterations. However, L2SGD+ requires communication of control variates $\mJf_i \onesm,  \mJpsi_i$ as well -- each communication round is thus three times more expensive. In the Appendix, we provide an implementation of L2SGD+ that does not require the communication of $\mJf_i \onesm,  \mJpsi_i$.

\subsection{Convergence theory}
We are now ready to present a convergence rate of L2SGD+ (the algorithm, along with the efficient implementation is presented in Appendix~\ref{sec:implement}).

\begin{theorem} \label{thm:saga_simple}

Let Assumption~\ref{as:smooth_sc_simple} hold and choose
$  \alpha =\TR \min \left \{  \tfrac{(1-\pagg) }{ 4  \Lloc + \mu m} , \tfrac{\pagg}{4\lambda+ \mu} \right\}.$
Then the iteration complexity of Algorithm~\ref{alg:L2SGD} is 
$$    \max \left\{   \tfrac{4\Lloc  + \mu m}{ (1-\pagg)\mu}, \tfrac{4\lambda +\mu}{\pagg\mu}\right \}\log\tfrac1\varepsilon.$$
\end{theorem}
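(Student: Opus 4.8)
The plan is to realize L2SGD+ as an instance of a known variance-reduced SGD method applied to the lifted problem \eqref{eq:main} viewed as a sum of two groups of terms (the finite-sum losses on the devices and the penalty $\psi$), and then invoke an off-the-shelf linear-convergence theorem for that generic method, much as Theorem~\ref{thm:local_gd} is obtained by specializing the non-uniform SGD analysis of \citep{pmlr-v97-qian19b}. Concretely, I would write $F$ as the average of $N := nm + n$ convex smooth components: for each device $i$ and each sample $j$, the component $\frac{1}{nm}\flocc_{i,j}(x_i)$ weighted appropriately, plus for each device $i$ the component $\frac{\lambda}{n}\cdot\frac12\|x_i-\bar x\|^2$. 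The sampling in Algorithm~\ref{alg:L2SGD} (first toss $\xi$ with probability $p$, then on a device sample $j$ uniformly) corresponds exactly to a non-uniform minibatch sampling over these $N$ components, and the Jacobian tables $\mJf_i,\mJpsi_i$ are precisely the SAGA-style control variates for those components. So the first step is to make this correspondence explicit and check that the stochastic gradient $g^k$ built in the algorithm equals the SAGA estimator $\nabla F_{S^k}(x^k) - \mathbf{J}^k_{S^k} + \bar{\mathbf J}^k$ for this sampling.

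**Next** I would identify the two problem constants that drive the generic rate: the expected smoothness / "expected Lipschitz" constant of the sampled gradient, and the strong convexity constant $\mu_F$. Strong convexity is easy: $F$ inherits $\mu_f = \mu/n$ from the $f_i$ (the penalty only helps), so on $\R^{nd}$ the relevant modulus is $\mu/n$. For the smoothness side, the loss components restricted to device $i$ have gradients that are $\Lloc$-Lipschitz, and when sampled with probability $\tfrac{1-p}{m}$ each they contribute an expected-smoothness term of order $\tfrac{\Lloc + \text{(finite-sum correction)}}{1-p}$ — the $\mu m$ appearing inside the max is the standard "$L + \mu n$"-type term from SAGA-type analyses of finite sums (e.g. \citep{defazio2014saga}), here with $m$ terms per device. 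The penalty component, sampled with probability $p$, is $\lambda/n \cdot L_\psi$-ish smooth, i.e. contributes a term of order $\tfrac{\lambda + \mu}{p}$. Plugging the prescribed stepsize $\alpha = n\min\{\tfrac{1-p}{4\Lloc+\mu m}, \tfrac{p}{4\lambda+\mu}\}$ into the generic guarantee $\bigl(1-\alpha\mu_F\bigr)^k$-type contraction (after the control variates have been accounted for in the Lyapunov function, which is the whole point of variance reduction: no neighborhood term remains), the $\alpha\mu_F = \alpha\mu/n$ exponent becomes $\min\{\tfrac{(1-p)\mu}{4\Lloc+\mu m},\tfrac{p\mu}{4\lambda+\mu}\}$, whose reciprocal (times $\log\tfrac1\varepsilon$) is exactly the claimed iteration complexity $\max\{\tfrac{4\Lloc+\mu m}{(1-p)\mu},\tfrac{4\lambda+\mu}{p\mu}\}\log\tfrac1\varepsilon$.

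**The main obstacle** I anticipate is twofold. First, correctly bookkeeping the weights: the lifted objective has the nonstandard $\tfrac1n$ scaling in front of $f$ and the $\tfrac\lambda n$ in front of $\psi$, and the separable structure gives $(\nabla f(x))_i = \tfrac1n\nabla f_i(x_i)$, so the probabilities, stepsizes, and the $\tfrac{1}{n}$'s sprinkled through the update of $g^k_i$ in Algorithm~\ref{alg:L2SGD} must be matched digit-for-digit against the generic SAGA estimator — an error here changes the constants. Second, and more delicate, is arguing the expected-smoothness constant is genuinely of the stated form; the penalty $\psi$ couples all the blocks $x_i$ through $\bar x$, so it is not separable, and one has to verify that treating "sample $\psi$" as a single component (rather than $n$ per-device components) is legitimate and yields the $\tfrac{\lambda}{p}$ scaling rather than something worse. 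I would handle this by using the explicit formula $\psi(x) = \tfrac n2\|\nabla\psi(x)\|^2$ and $L_\psi = \tfrac1n$ from Section~\ref{bd798gsd}, so that the penalty block behaves like a single $\tfrac\lambda n$-smooth convex function whose SAGA control variate is exactly $\mJpsi$. Everything else — forming the Lyapunov function combining $\|x^k - x(\lambda)\|^2$ with a weighted sum of $\|\mathbf J^k_{(\cdot)} - \nabla(\cdot)(x(\lambda))\|^2$, taking conditional expectations, and telescoping — is routine once the reduction is set up, so I would cite the generic theorem (the finite-sum / SAGA-with-arbitrary-sampling result) and present only the verification that L2SGD+ is the promised special case with the promised constants.
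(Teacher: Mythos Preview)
Your plan is essentially the paper's own proof: cast L2SGD+ as a special case of a known variance-reduced method on the lifted finite sum and read off the rate. The paper carries this out by first embedding Algorithm~\ref{alg:L2SGD} into the more general Algorithm~\ref{alg:vr_loc2} (L2SGD++) and then identifying that method as an instance of the GJS (Generalized JacSketch) framework of \citep{hanzely2019one}; Theorem~\ref{thm:saga_simple} is then Theorem~\ref{thm:saga_sam} specialized to $\ptg=1$, $\NRt=m$, uniform local sampling $p_j=1/m$, and $v_j=\Lloc$.

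Two small points where your sketch drifts from what actually works. First, the finite sum has $nm+1$ components, not $nm+n$: the penalty $\psi$ must be kept as a \emph{single} component with one Jacobian column in $\R^{nd}$ (whose $i$th block is $\lambda(x_i-\bar x)$), not split into $n$ per-device pieces---precisely because of the coupling through $\bar x$ that you flag as the obstacle. The paper's GJS reduction handles this cleanly, and the $n$ separate $\mJpsi_i$ in the algorithm are just the blocks of that single column. Second, the ``$\mu m$'' is not the classical SAGA $L+\mu n$ correction per se; it arises from the choice of the diagonal Lyapunov weights $b_j^2$ in the GJS inequalities~\eqref{eq:small_step}--\eqref{eq:small_step2}, which is where the $\tfrac{4\Lloc+\mu m}{(1-p)\mu}$ and $\tfrac{4\lambda+\mu}{p\mu}$ terms are produced simultaneously. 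If you try to cite a plain SAGA-with-importance-sampling theorem you will likely not get exactly these constants; the GJS theorem (or an equivalent ESO-based argument) is the right black box here.
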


Next, we find the value of $p$ that yields both the best iteration and communication complexity. 

\begin{corollary}\label{cor:lsd_optimal_p}
Both communication and iteration complexity of L2SGD+ are minimized for $\pagg  = \tfrac{4\lambda +\mu}{4\lambda +4\Lloc +(m+1)\mu}$. The resulting iteration complexity is $\left(4 \tfrac{\lambda }{\mu} +4 \tfrac{\Lloc}{\mu} +m+1\right) \log \tfrac1\varepsilon$, while the communication complexity is $$   \tfrac{4 \lambda + \mu }{4\Lloc + 4 \lambda + (m+1)\mu} \left( 4\tfrac{\Lloc}{\mu}+m\right) \log \tfrac{1}{\varepsilon}.$$
\end{corollary}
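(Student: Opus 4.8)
The plan is to treat this as a one-variable optimization over the free probability $p\in(0,1)$, starting from the iteration complexity bound already established in Theorem~\ref{thm:saga_simple}. Write $a \eqdef 4\Lloc + \mu m$ and $b \eqdef 4\lambda + \mu$. Theorem~\ref{thm:saga_simple} states that the iteration complexity of L2SGD+ equals $\tfrac{1}{\mu}\max\left\{\tfrac{a}{1-p},\, \tfrac{b}{p}\right\}\log\tfrac1\varepsilon$. On $(0,1)$ the first term inside the maximum is strictly increasing in $p$, the second strictly decreasing, and both diverge at the endpoints; hence the maximum is minimized at the unique $p$ where the two coincide, i.e. $\tfrac{a}{1-p}=\tfrac{b}{p}$, which gives $p^\star = \tfrac{b}{a+b}$. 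Substituting $a$ and $b$ yields $p^\star = \tfrac{4\lambda+\mu}{4\Lloc + 4\lambda + (m+1)\mu}$, and at this value the maximum equals $\tfrac{b}{p^\star}=a+b$, so the iteration complexity becomes $\tfrac{a+b}{\mu}\log\tfrac1\varepsilon = \left(4\tfrac{\Lloc}{\mu} + 4\tfrac{\lambda}{\mu} + m + 1\right)\log\tfrac1\varepsilon$, as claimed.

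For the communication complexity, I would invoke the fact (noted right after Algorithm~\ref{alg:L2SGD}, by the same argument as Lemma~\ref{lem:exp_no_comm}) that L2SGD+ communicates only when two consecutive coin tosses differ, so the expected number of communication rounds over $k$ iterations is $p(1-p)k$. Multiplying the iteration complexity by $p(1-p)$ and cancelling one factor of $p$ or $1-p$ from each branch of the maximum gives communication complexity $\tfrac{1}{\mu}\max\left\{p a,\, (1-p) b\right\}\log\tfrac1\varepsilon$. Here $pa$ is increasing and $(1-p)b$ is decreasing in $p$, so this maximum is again minimized precisely where the two branches agree, $pa=(1-p)b$, which is the same point $p^\star = \tfrac{b}{a+b}$; this establishes that a single choice of $p$ is simultaneously optimal for both complexities. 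At $p^\star$ the maximum equals $p^\star a = \tfrac{ab}{a+b}$, so the communication complexity is $\tfrac{ab}{(a+b)\mu}\log\tfrac1\varepsilon = p^\star\cdot\tfrac{a}{\mu}\log\tfrac1\varepsilon = \tfrac{4\lambda+\mu}{4\Lloc+4\lambda+(m+1)\mu}\left(4\tfrac{\Lloc}{\mu} + m\right)\log\tfrac1\varepsilon$.

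There is no substantial obstacle: the argument is just minimizing a pointwise maximum of a monotone increasing and a monotone decreasing function. The only points needing a little care are (i) checking $p^\star\in(0,1)$, which is immediate since $a,b>0$ under Assumption~\ref{as:smooth_sc_simple} (as $\mu>0$); (ii) noting that the stepsize prescription $\alpha = \TR\min\{(1-p)/(4\Lloc+\mu m),\, p/(4\lambda+\mu)\}$ is automatically respected, because the bound of Theorem~\ref{thm:saga_simple} already incorporates that $\alpha$, so optimizing its right-hand side over $p$ is legitimate; and (iii) the routine algebraic simplification of the two rational expressions into the stated forms.
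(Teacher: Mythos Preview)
Your proof is correct and follows essentially the same approach as the paper: both minimize the max of the two branches from Theorem~\ref{thm:saga_simple} by equating them, obtain $p^\star=\tfrac{4\lambda+\mu}{4\Lloc+4\lambda+(m+1)\mu}$, and then multiply by the factor $p(1-p)$ from Lemma~\ref{lem:exp_no_comm} to handle communication complexity, arriving at the same optimizer. Your use of the shorthand $a=4\Lloc+\mu m$, $b=4\lambda+\mu$ and the explicit monotonicity argument is a slightly cleaner presentation, but the content is identical.
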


Note that with $\lambda \rightarrow \infty$, the communication complexity of L2SGD+ tends to $ \left( 4\tfrac{\Lloc}{\mu}+m\right) \log \tfrac{1}{\varepsilon}$, which is communication complexity of minibatch SAGA to find the globally optimal model~\citep{hanzely2019one}. On the other hand, in the pure local setting ($\lambda=0$), the communication complexity becomes $\log \tfrac1\epsilon$ -- this is because the Lyapunov function involves a term that measures the distance of local models, which requires communication to be estimated.

 \begin{figure}[h]
\centering
    \begin{minipage}{0.45\textwidth}
\includegraphics[width =  \textwidth ]{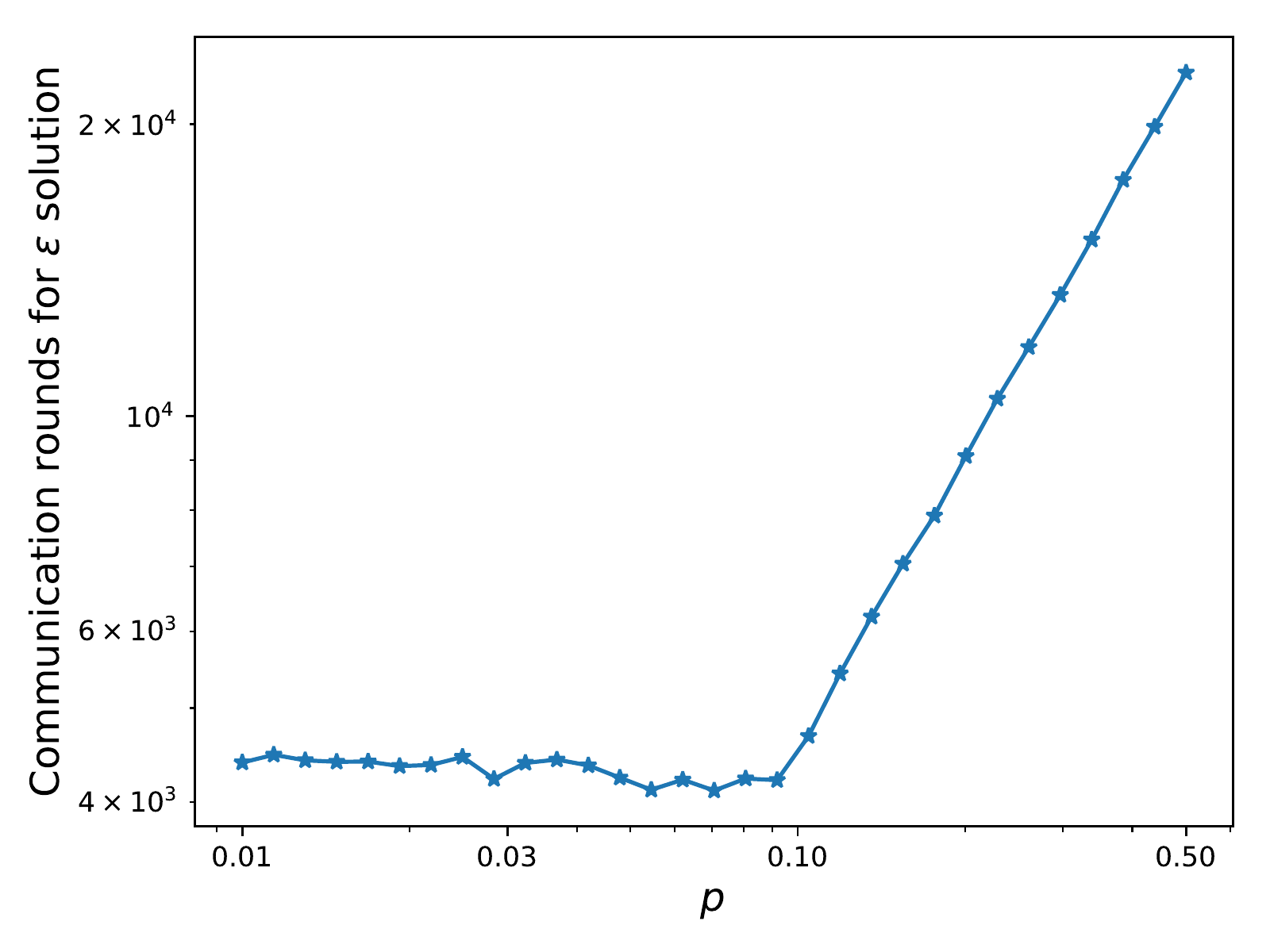}
\end{minipage}%
\caption{ Communication rounds to get $\tfrac{F(x^k)-F(x^*)}{F(x^0)-F(x^*)}\leq 10^{-5}$ as a function of $p$ with $p^* \approx 0.09$ (for L2SGD+). Logistic regression on a1a dataset with $\lambda = 0.1$. } 
\label{fig:comm_p}
\end{figure}

\begin{remark}
L2SGD+ is the simplest local SGD method with variance reduction. In the Appendix, we present L2SGD++ which allows for 1) an arbitrary number of data points per client and arbitrary local subsampling, 2) partial participation of clients, and 3) local SVRG-like updates of control variates (thus better memory). Lastly, L2SGD++ exploits the complex smoothness structure of the local objectives, resulting in tighter rates.
\end{remark}

\section{Experiments \label{sec:exp}}
In this section, we numerically verify the theoretical claims from this paper. We only present a single experiment here, all remaining ones along with the missing details about the setup are in the Appendix. In particular, the Appendix includes two more experiments. The first one studies how $\pagg$ (parameter determining the communication frequency) influences the convergence of L2SGD+. The second experiment aims to examine the effect of parameter $\lambda$ on the convergence rate of L2SGD+. Both of these experiments confirm our theoretical findings

We consider logistic regression problem with LibSVM data~\citep{chang2011libsvm}. The data were normalized so that $\flocc_{i,j}$ is 1-smooth for each $j$, while the local objectives are $10^{-4}$-strongly convex. In order to cover a range of possible scenarios, we have chosen a different number of clients for each dataset (see the Appendix). Lastly, the stepsize was always chosen according to Thm.~\ref{thm:saga_simple}.

We compare three different methods: L2SGD+, L2GD with local subsampling (L2SGD in the Appendix), and L2GD with local subsampling and control variates constructed for $\psi$ only (L2SGD2 in the Appendix; similar to~\citep{liang2019variance}). We expect L2SGD+ to converge to the global optimum linearly, while both L2SGD and L2SGD2 to converge to certain neighborhood. Each method is applied to two objectives constructed by a different split of the data among the devices. For the homogeneous split, we randomly reshuffle the data. For heterogeneous split, we first sort the data based on the labels and then construct the local objectives according to the current order.

\begin{figure*}[h]
\centering
\begin{minipage}{0.28\textwidth}
  \centering
\includegraphics[width =  \textwidth ]{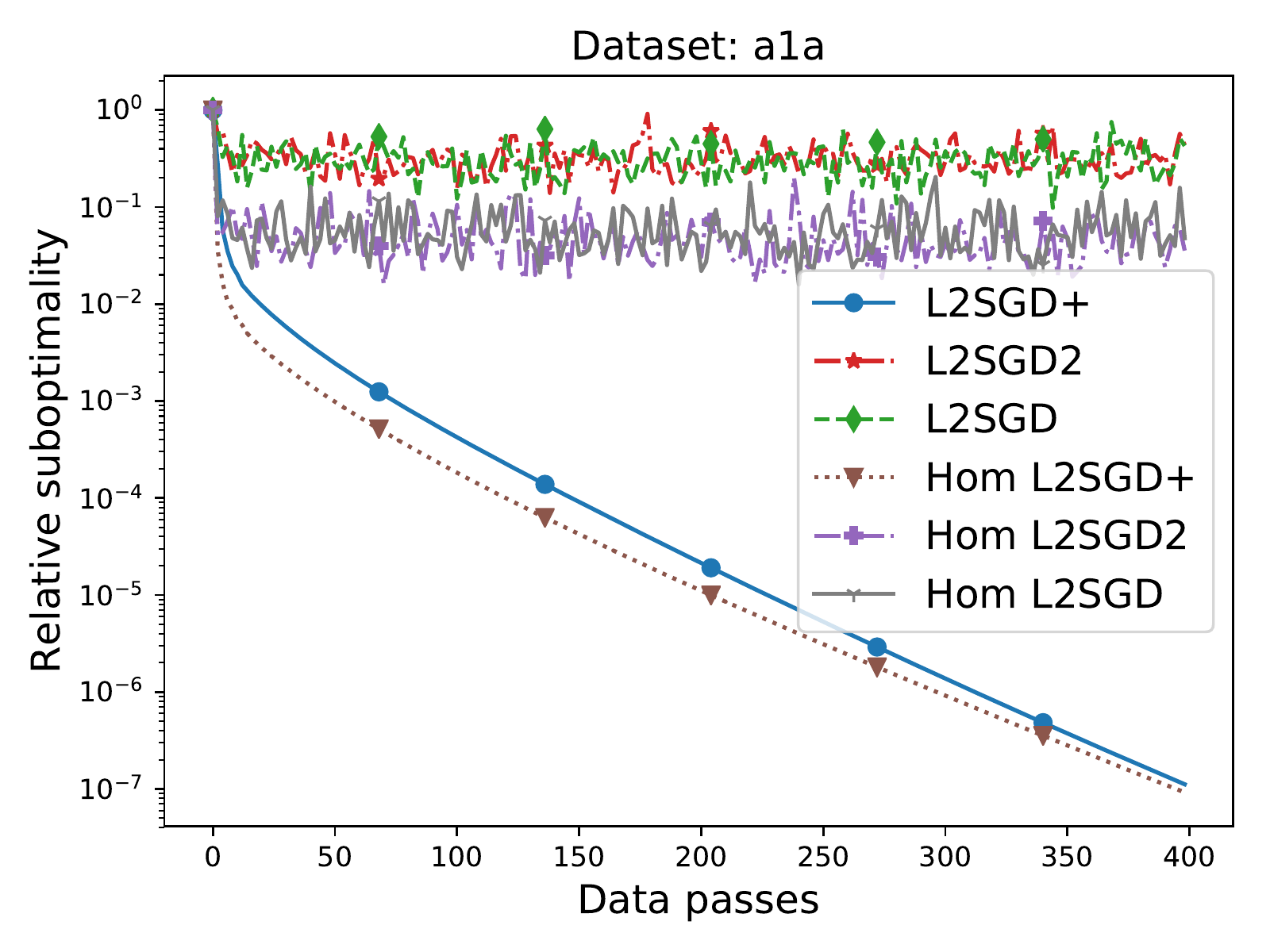}
        %\caption{ Residual vs. iteration  }\label{fig:bl_ex_flops}
\end{minipage}%
\begin{minipage}{0.28\textwidth}
  \centering
\includegraphics[width =  \textwidth ]{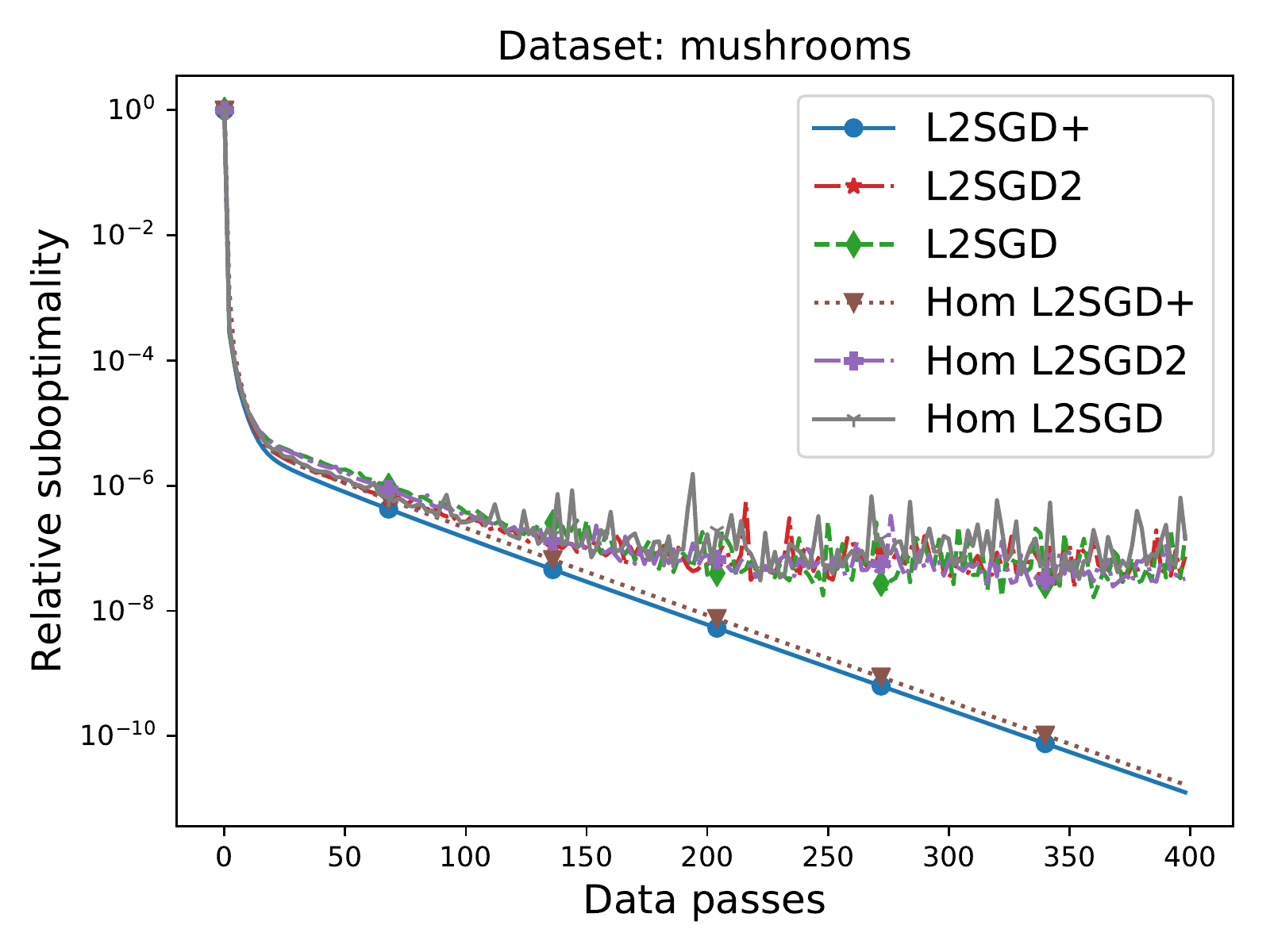}
        %\caption{ Residual vs. iteration  }\label{fig:bl_ex_flops}
\end{minipage}%
\begin{minipage}{0.28\textwidth}
  \centering
\includegraphics[width =  \textwidth ]{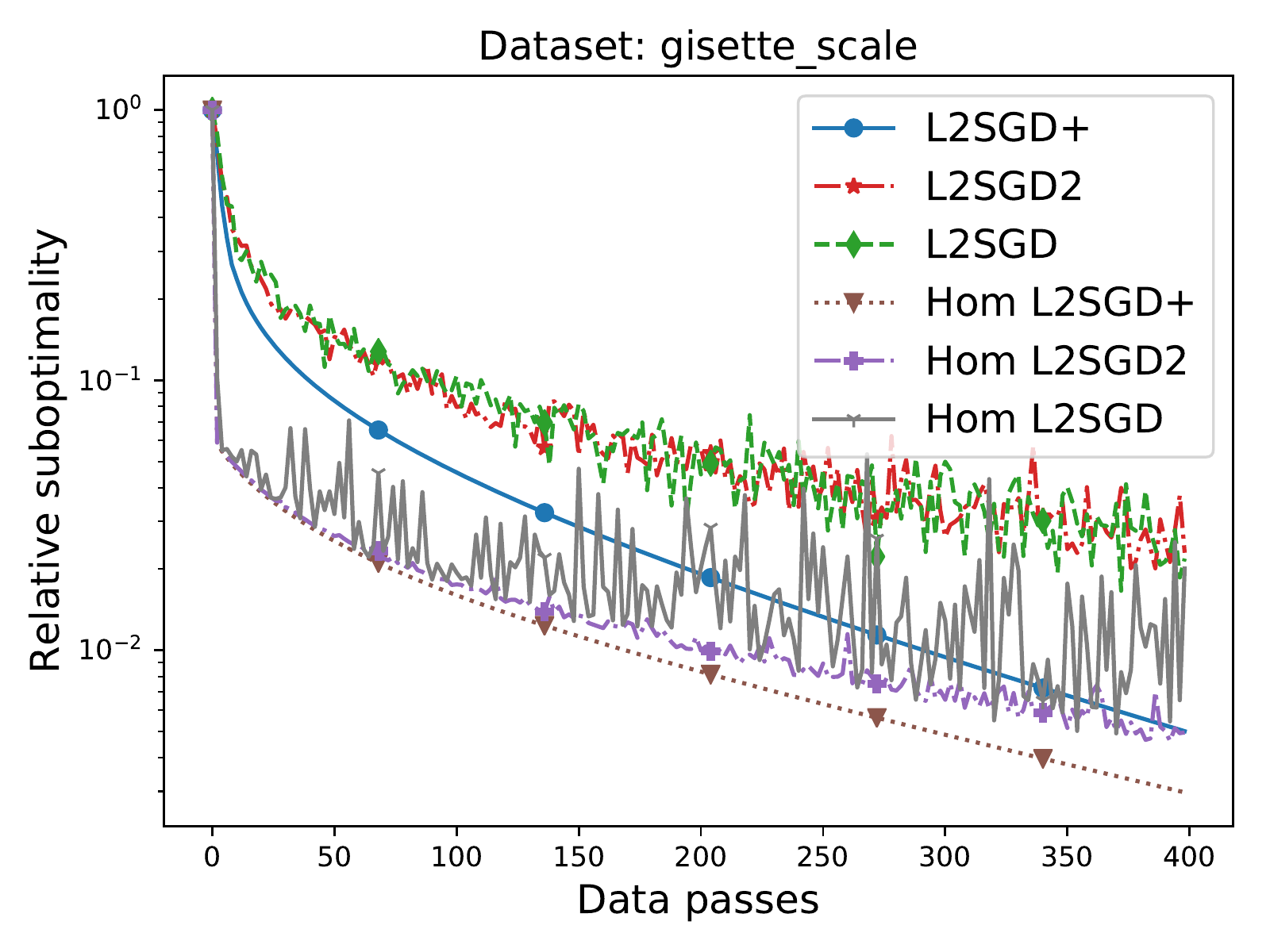}
        %\caption{ Residual vs. iteration  }\label{fig:bl_ex_flops}
\end{minipage}%
\\
\begin{minipage}{0.28\textwidth}
  \centering
\includegraphics[width =  \textwidth ]{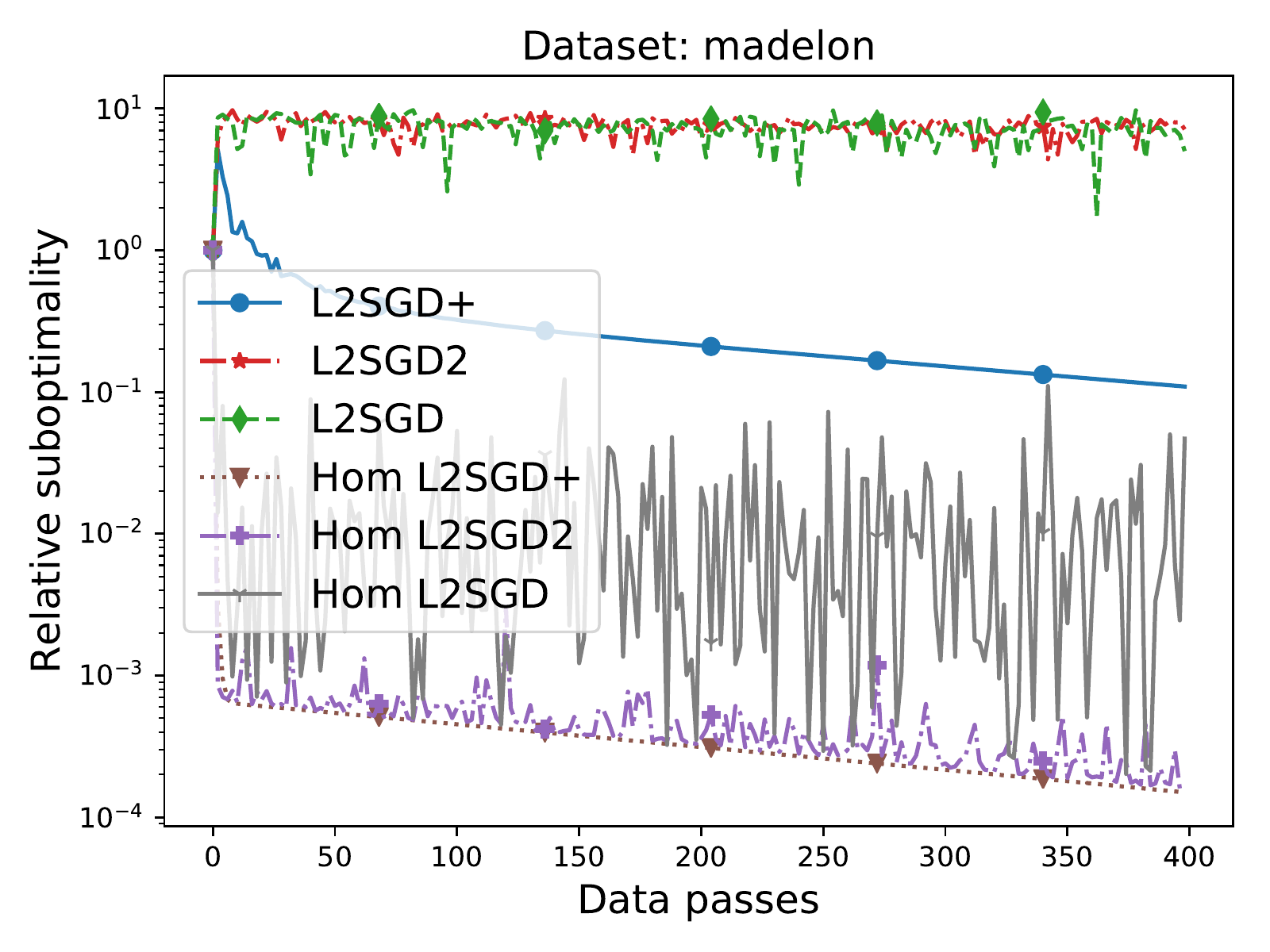}
        %\caption{ Residual vs. iteration  }\label{fig:bl_ex_flops}
\end{minipage}%
\begin{minipage}{0.28\textwidth}
  \centering
\includegraphics[width =  \textwidth ]{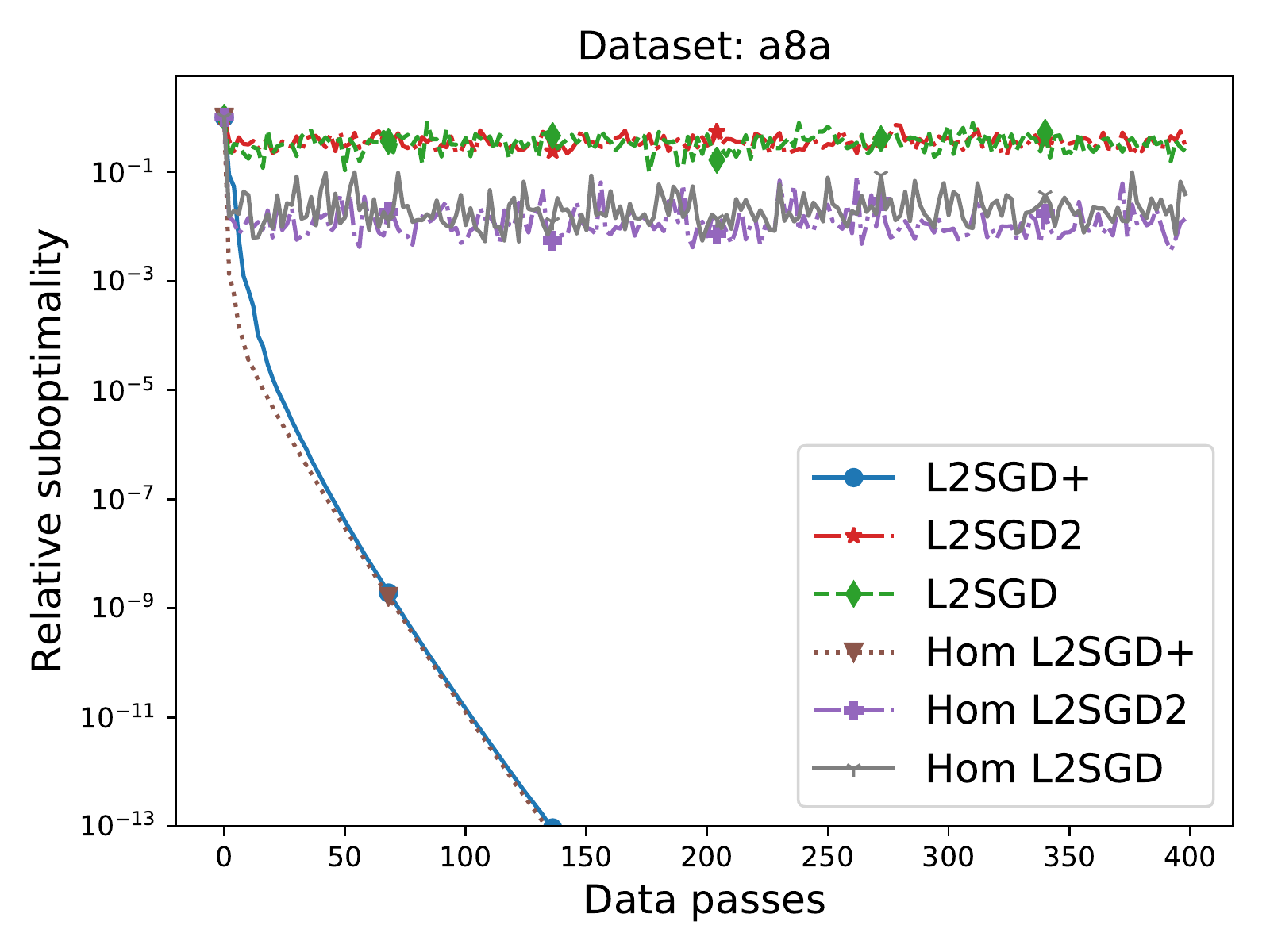}
        %\caption{ Residual vs. iteration  }\label{fig:bl_ex_flops}
\end{minipage}%
\begin{minipage}{0.28\textwidth}
  \centering
\includegraphics[width =  \textwidth ]{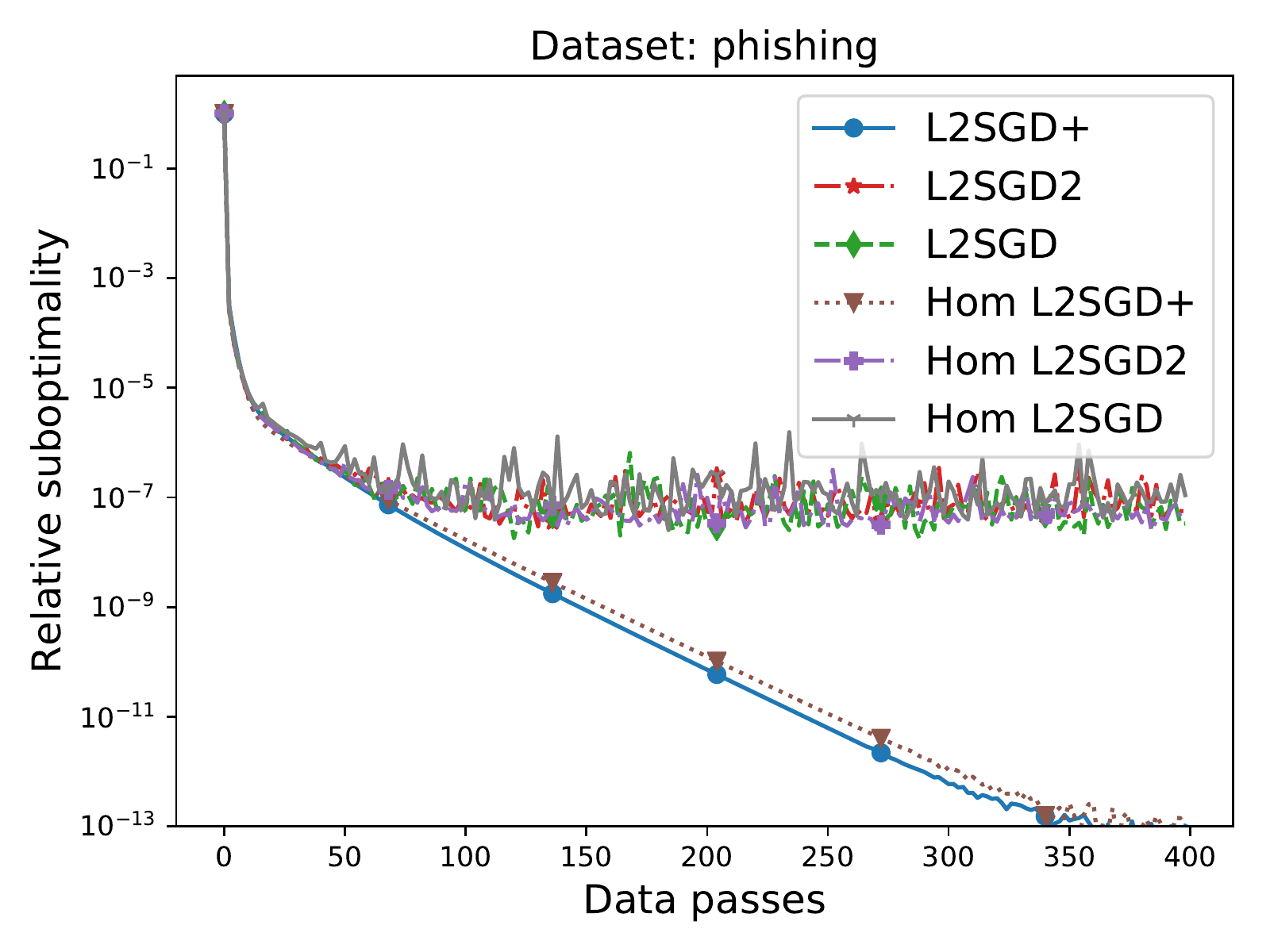}
        %\caption{ Residual vs. iteration  }\label{fig:bl_ex_flops}
\end{minipage}%
\caption{L2SGD+, vs L2SGD vs L2SGD2 with identical stepsize (details in the Appendix).} 
\label{fig:libsvm_methods_main}
\end{figure*}

Fig.~\ref{fig:libsvm_methods_main} demonstrates the importance of variance reduction -- it ensures a fast global convergence of L2SGD+, while the neighborhood is slightly smaller for L2SGD2 compared to L2SGD. As predicted, data heterogeneity does not affect the convergence speed of the proposed methods.

\section{Possible Extensions}

Our analysis of L2GD can be extended to cover smooth convex and non-convex loss functions $f_i$ (we do not explore these directions). Further, our methods can be extended to a decentralized regime where the devices correspond to devices of a connected network, and communication is allowed along the edges of the graph only. This can be achieved by introducing an additional randomization over the penalty $\psi$. Further, our approach can be accelerated in the sense of Nesterov~\citep{NesterovBook} by adapting the a variant of Katyusha~\citep{allen2017katyusha, L-SVRG-AS} to our setting, thus further reducing the number of communication rounds.

\clearpage

\bibliography{../literature}
\bibliographystyle{icml2021}

\appendix
\onecolumn

\part*{\Large Appendix\\Federated Learning of a Mixture of Global and Local Models}

\tableofcontents

\clearpage

\section{Experimental Setup and Further Experiments~\label{sec:exp_appendix}}

In all experiments in this paper, we consider a simple binary classification model -- logistic regression. In particular, suppose that device $\tR$ owns data matrix $\mA_{\tR}\in \R^{m \times d}$ along with corresponding labels $b_{\tR}\in \{-1, 1\}^{m}$. The local objective for client $\tR$ is then given as follows
\[
f_{\tR}(x) \eqdef  \frac1m \sum_{j=1}^m \flocc_{\tR, j}(x) +\frac{\mu}{2} \| x\|^2, \quad \text{where} \quad \flocc_{\tR m+ j}(x) = \log \left(1+\exp\left((\mA_{\tR})_{j,:}x\cdot  b_{\tR}\right) \right).
\]
The rows of data matrix $\mA$ were normalized to have length 4 so that each $\flocc_{i,j}$ is $1$-smooth for each $j$. At the same time, the local objective on each device is $10^{-4}$ strongly convex. Next, datasets are from LibSVM~\citep{chang2011libsvm}. 

In each case, we consider the simplest locally stochastic algorithm. In particular, each dataset is evenly split among the clients, while the local stochastic method samples a single data point each iteration.

We have chosen a different number of clients for each dataset -- so that we cover different possible scenarios. See Table~\ref{tbl:data} for details (it also includes sizes of the datasets). Lastly, the stepsize was always chosen according to Thm.~\ref{thm:saga_simple}.

\begin{table}[!h]
\caption{Setup for the experiments.}
\label{tbl:data}
\begin{center}
{
\tiny
\begin{tabular}{|c|c|c|c|c|c|c|c|c|c|}
\hline
Dataset &  \begin{tabular}{c} $\nlocc$ \\ $=\TR m$ \end{tabular}   & $d$ &  $\TR$ & $m$   & $\mu$&$L$& \begin{tabular}{c} $\pagg$ \\ (Sec.~\ref{sec:exp_comp}) \end{tabular} & \begin{tabular}{c} $\lambda$ \\ (Sec.~\ref{sec:exp:pagg}) \end{tabular} & \begin{tabular}{c} $\pagg$ \\ (Sec.~\ref{sec:exp:omega}) \end{tabular}\\
\hline
\hline
\texttt{a1a} & 1 605	& 123 & 5 & 321 & $10^{-4}$ & $1$&0.1 &0.1 &0.1\\ \hline
\texttt{mushrooms} & 8 124 & 112 & 12 &677 & $10^{-4}$ & $1$ &0.1 & 0.05 & 0.3 \\ \hline
\texttt{phishing} & 11 055	&	68 & 11 & 1 005 & $10^{-4}$ & $1$ &0.1 &0.1 & 0.001\\ \hline
\texttt{madelon} & 2 000& 500& 50 &40 & $10^{-4}$& $1$ &0.1  &0.02&0.05\\ \hline
\texttt{duke} &44 & 7 129& 4 & 11& $10^{-4}$ & $1$ & 0.1 & 0.4&0.1\\ \hline
\texttt{gisette\_scale} & 6 000& 5 000& 100 & 60 & $10^{-4}$ & $1$&0.1 &0.2&0.003  \\ \hline
\texttt{a8a} &22 696 & 123& 8 & 109& $10^{-4}$ & $1$ &0.1&0.1  &0.1\\ \hline
\end{tabular}
}
%\end{adjustbox}
\end{center}
\end{table}

\subsection{Comparison of the methods \label{sec:exp_comp}}

In our first experiment, we verify two phenomena:
\begin{itemize}
\item The effect of variance reduction on the convergence speed of local methods. We compare 3 different methods: local SGD with full variance reduction (Algorithm~\ref{alg:L2SGD}), shifted local SGD (Algorithm~\ref{alg:lsgd_partial}) and local SGD (Algorithm~\ref{alg:lsgd_none}). Our theory predicts that a fully variance reduced algorithm converges to the global optimum linearly, while both shifted local SGD and local SGD converge to a neighborhood of the optimum. At the same time, the neighborhood should be smaller for shifted local SGD.

\item The claim that heterogeneity of the data does not influence the convergence rate. We consider two splits of the data heterogeneous and homogeneous. For the homogeneous split, we first randomly reshuffle the data and then construct the local objectives according to the current order (i.e., the first client owns the first $m$ indices, etc.). For heterogeneous split, we first sort the data based on the labels and then construct the local objectives accordingly (thus achieving the worst-case heterogeneity). Note that the overall objective to solve is different in homogeneous and heterogeneous case -- we thus plot relative suboptimality of the objective (i.e., $\tfrac{F(x^k)-F(x^{\star})}{F(x^0)-F(x^{\star})}$) to directly compare the convergence speed. 
\end{itemize} 

In each experiment, we choose $\pagg=0.1$ and $\lambda = \tfrac19$ -- such choice mean that $\pagg$ is very close to optimal. The other parameters (i.e.,  number of clients) are provided in Table~\ref{tbl:data}. Fig.~\ref{fig:libsvm_methods} presents the result.

\begin{figure}[!h]
\centering
\begin{minipage}{0.4\textwidth}
  \centering
\includegraphics[width =  \textwidth ]{method_exp_epochDataseta1a.pdf}
        %\caption{ Residual vs. iteration  }\label{fig:bl_ex_flops}
\end{minipage}%
\begin{minipage}{0.4\textwidth}
  \centering
\includegraphics[width =  \textwidth ]{method_exp_epochDatasetmushrooms.pdf}
        %\caption{ Residual vs. iteration  }\label{fig:bl_ex_flops}
\end{minipage}%
\\
\begin{minipage}{0.4\textwidth}
  \centering
\includegraphics[width =  \textwidth ]{method_exp_epochDatasetphishing.pdf}
        %\caption{ Residual vs. iteration  }\label{fig:bl_ex_flops}
\end{minipage}%
\begin{minipage}{0.4\textwidth}
  \centering
\includegraphics[width =  \textwidth ]{method_exp_epochDatasetmadelon.pdf}
        %\caption{ Residual vs. iteration  }\label{fig:bl_ex_flops}
\end{minipage}%
\\
\begin{minipage}{0.4\textwidth}
  \centering
\includegraphics[width =  \textwidth ]{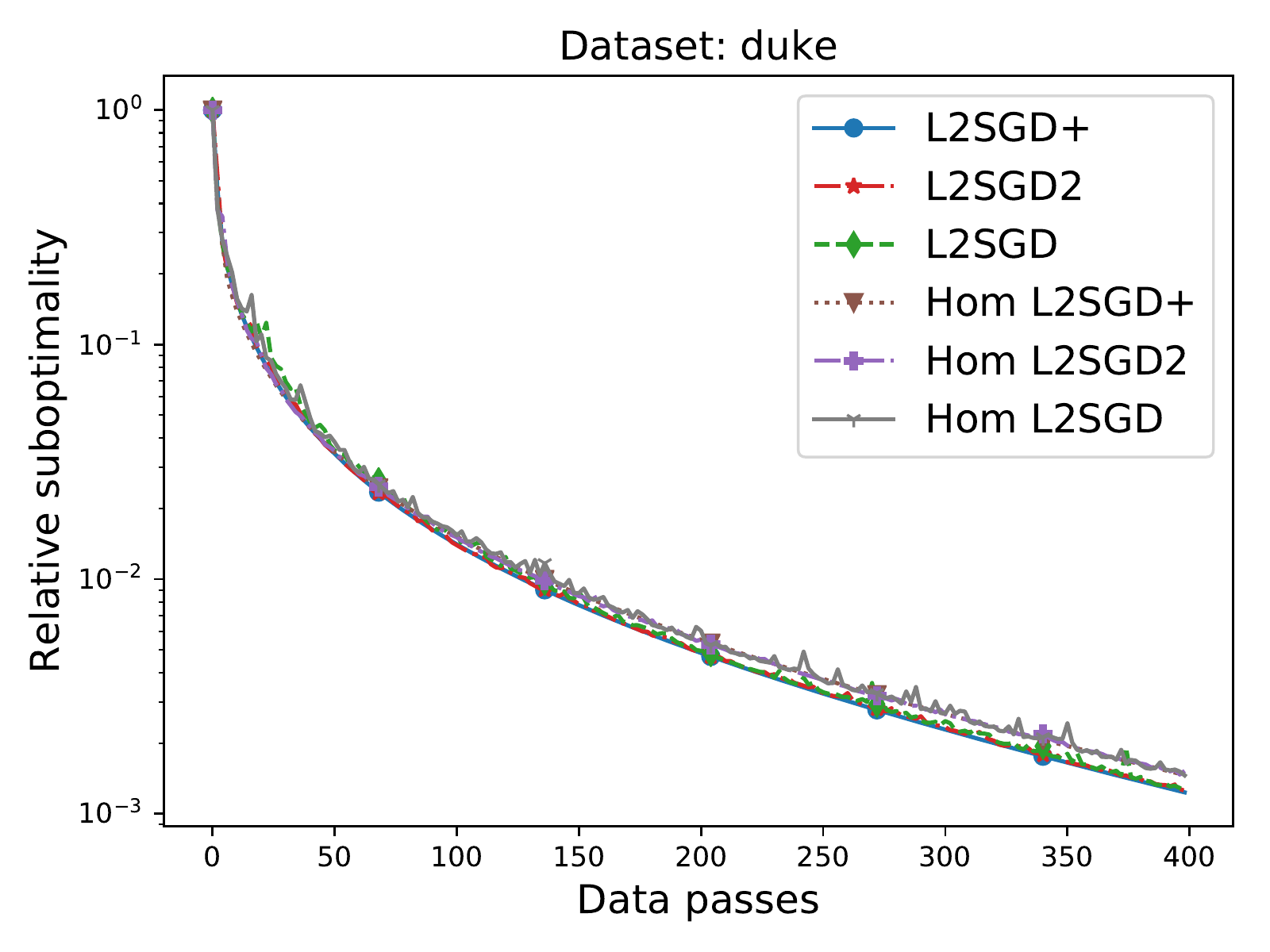}
        %\caption{ Residual vs. iteration  }\label{fig:bl_ex_flops}
\end{minipage}%
\begin{minipage}{0.4\textwidth}
  \centering
\includegraphics[width =  \textwidth ]{method_exp_epochDatasetgisette_scale.pdf}
        %\caption{ Residual vs. iteration  }\label{fig:bl_ex_flops}
\end{minipage}%
\\
\begin{minipage}{0.4\textwidth}
  \centering
\includegraphics[width =  \textwidth ]{method_exp_epochDataseta8a.pdf}
        %\caption{ Residual vs. iteration  }\label{fig:bl_ex_flops}
\end{minipage}%
\caption{Variance reduced local SGD (Algorithm~\ref{alg:L2SGD}), shifted local SGD (Algorithm~\ref{alg:lsgd_partial}) and local SGD (Algorithm~\ref{alg:lsgd_none}) applied on LibSVM problems for both homogeneous split of data and Heterogeneous split of the data. Stepsize for non-variance reduced method was chosen the same as for the analogous variance reduced method.} 
\label{fig:libsvm_methods}
\end{figure}

As expected, Figure~\ref{fig:libsvm_methods} clearly demonstrates the following: 
\begin{itemize}
\item Full variance reduction always converges to the global optima, methods with partial variance reduction only converge to a neighborhood of the optimum.
\item Partial variance reduction (i.e., shifting the local SGD) is better than not using control variates at all. Although the improvement in the performance is rather negligible.
\item Data heterogeneity does not affect the convergence speed of the proposed methods. Therefore, unlike standard local SGD, mixing the local and global models does not suffer the problems with heterogeneity.
\end{itemize}

\subsection{Effect of $\pagg$\label{sec:exp:pagg}}

In the second experiment, we study the effect of $\pagg$ on the convergence rate of variance reduced local SGD. Note that $\pagg$ immediately influences the number of communication rounds -- on average, the clients take $(\pagg^{-1}-1)$ local steps in between two consecutive rounds of communication (aggregation). 

In Section~\ref{sec:lsgd}, we argue that, it is optimal (in terms of the convergence rate) to choose $\pagg$ of order $\pagg^{\star}\eqdef \tfrac{\lambda}{\Lloc+\lambda}$. Figure~\ref{fig:libsvm_pagg} compares $\pagg=  \pagg^{\star}$ against other values of $\pagg$ and confirms its optimality (in terms of optimizing the convergence rate). 

While the slower convergence of Algorithm~\ref{alg:L2SGD} with $\pagg<\pagg^{\star}$ is expected (i.e., communicating more frequently yields a faster convergence), slower convergence for $\pagg>\pagg^{\star}$ is rather surprising; in fact, it means that communicating less frequently yields faster convergence. This effect takes place due to the specific structure of problem~\eqref{eq:problem_thispaper}; it would be lost when enforcing $x_1=\dots = x_{\TR}$ (corresponding to $\lambda=\infty$).

\begin{figure}[!h]
\centering
\begin{minipage}{0.4\textwidth}
  \centering
\includegraphics[width =  \textwidth ]{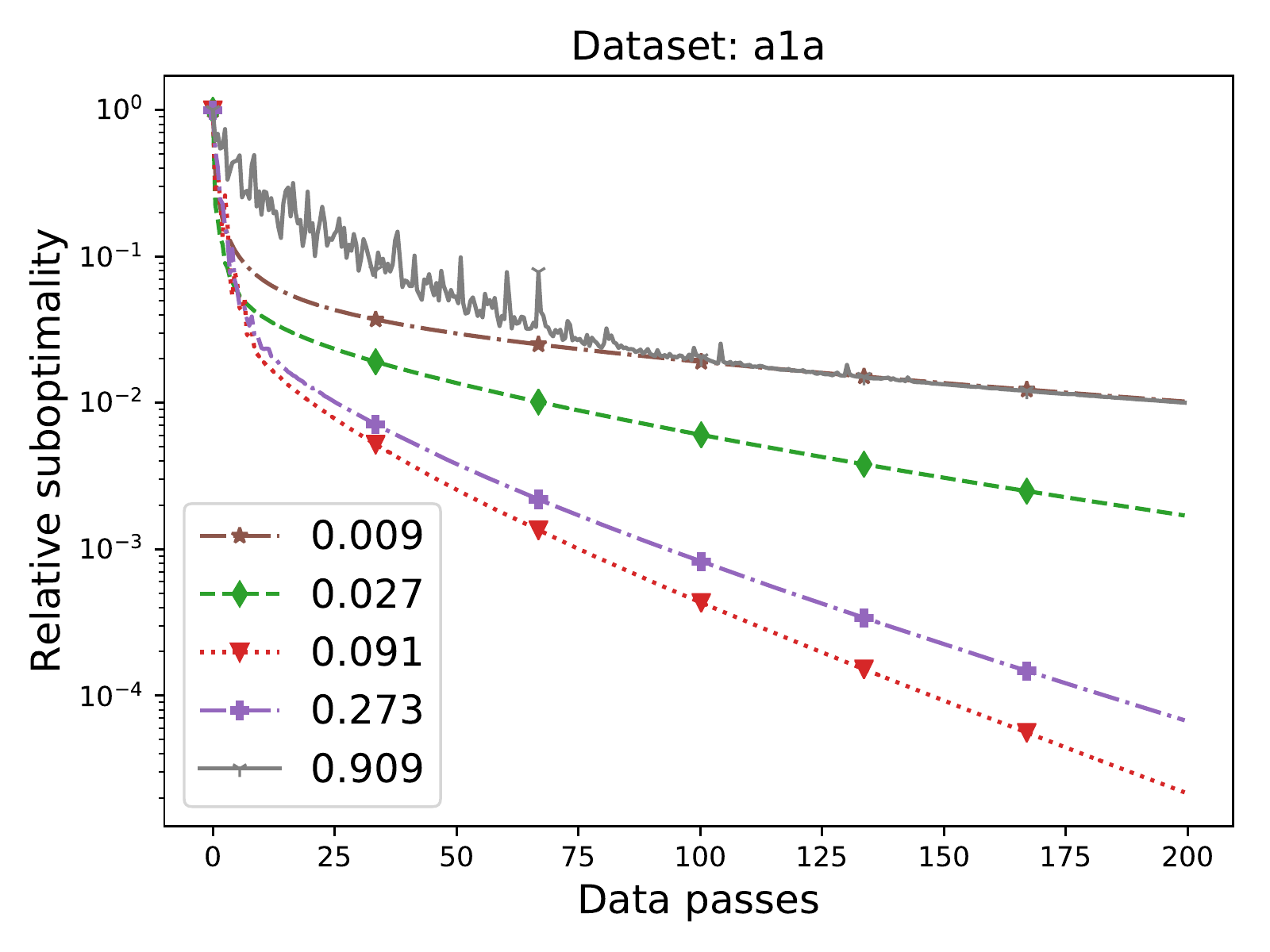}
        %\caption{ Residual vs. iteration  }\label{fig:bl_ex_flops}
\end{minipage}%
\begin{minipage}{0.4\textwidth}
  \centering
\includegraphics[width =  \textwidth ]{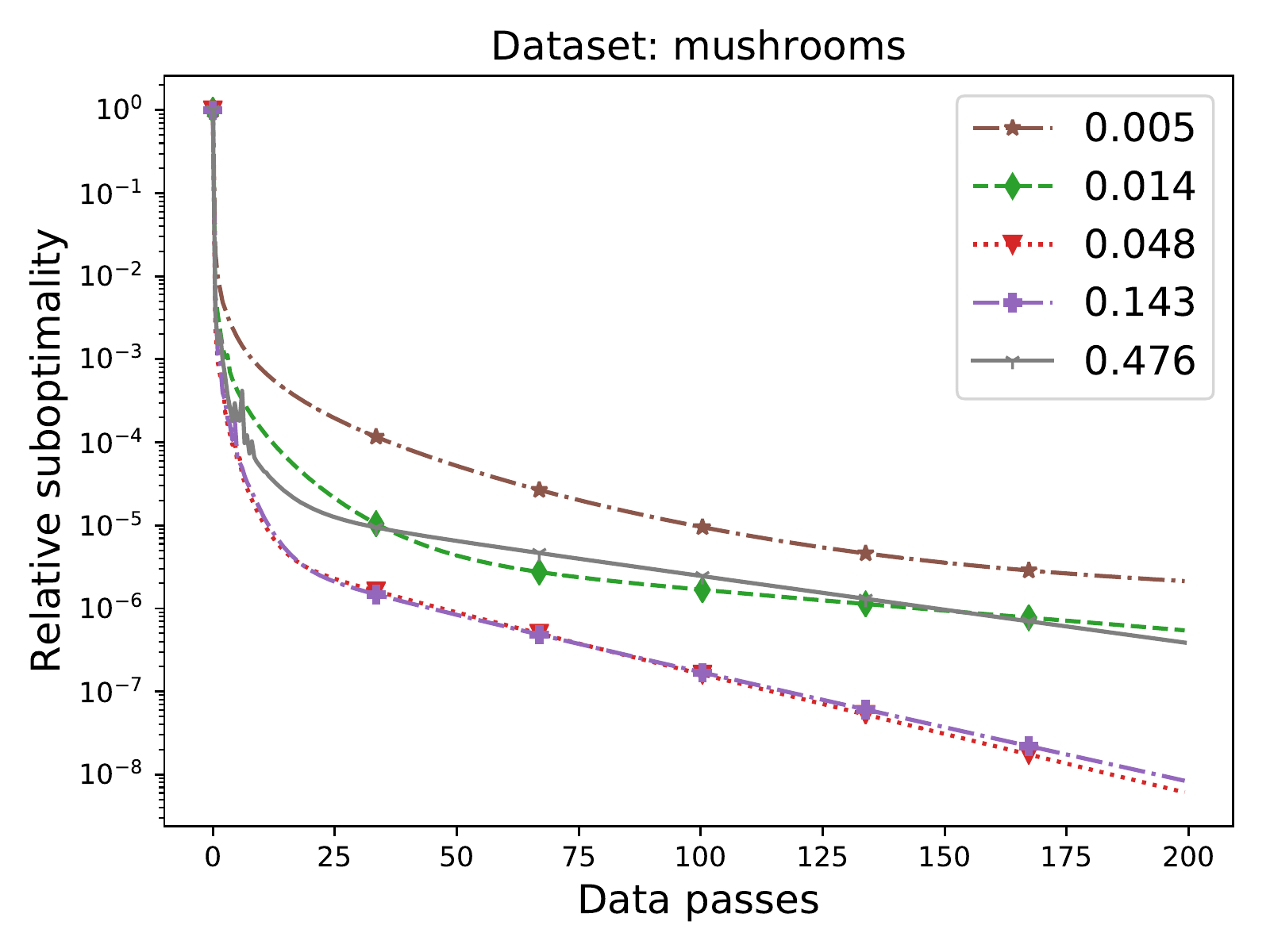}
        %\caption{ Residual vs. iteration  }\label{fig:bl_ex_flops}
\end{minipage}%
\\
\begin{minipage}{0.4\textwidth}
  \centering
\includegraphics[width =  \textwidth ]{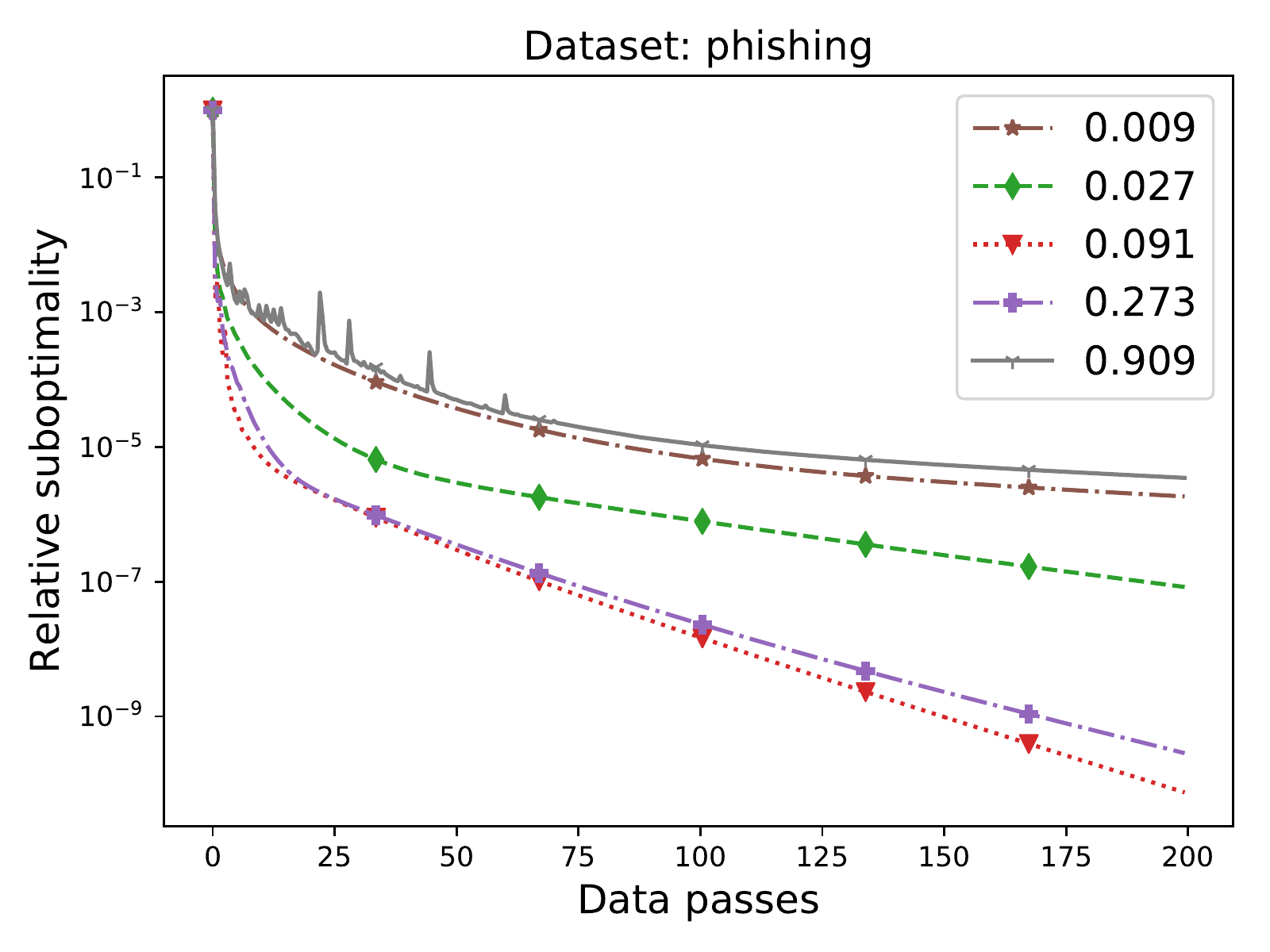}
        %\caption{ Residual vs. iteration  }\label{fig:bl_ex_flops}
\end{minipage}%
\begin{minipage}{0.4\textwidth}
  \centering
\includegraphics[width =  \textwidth ]{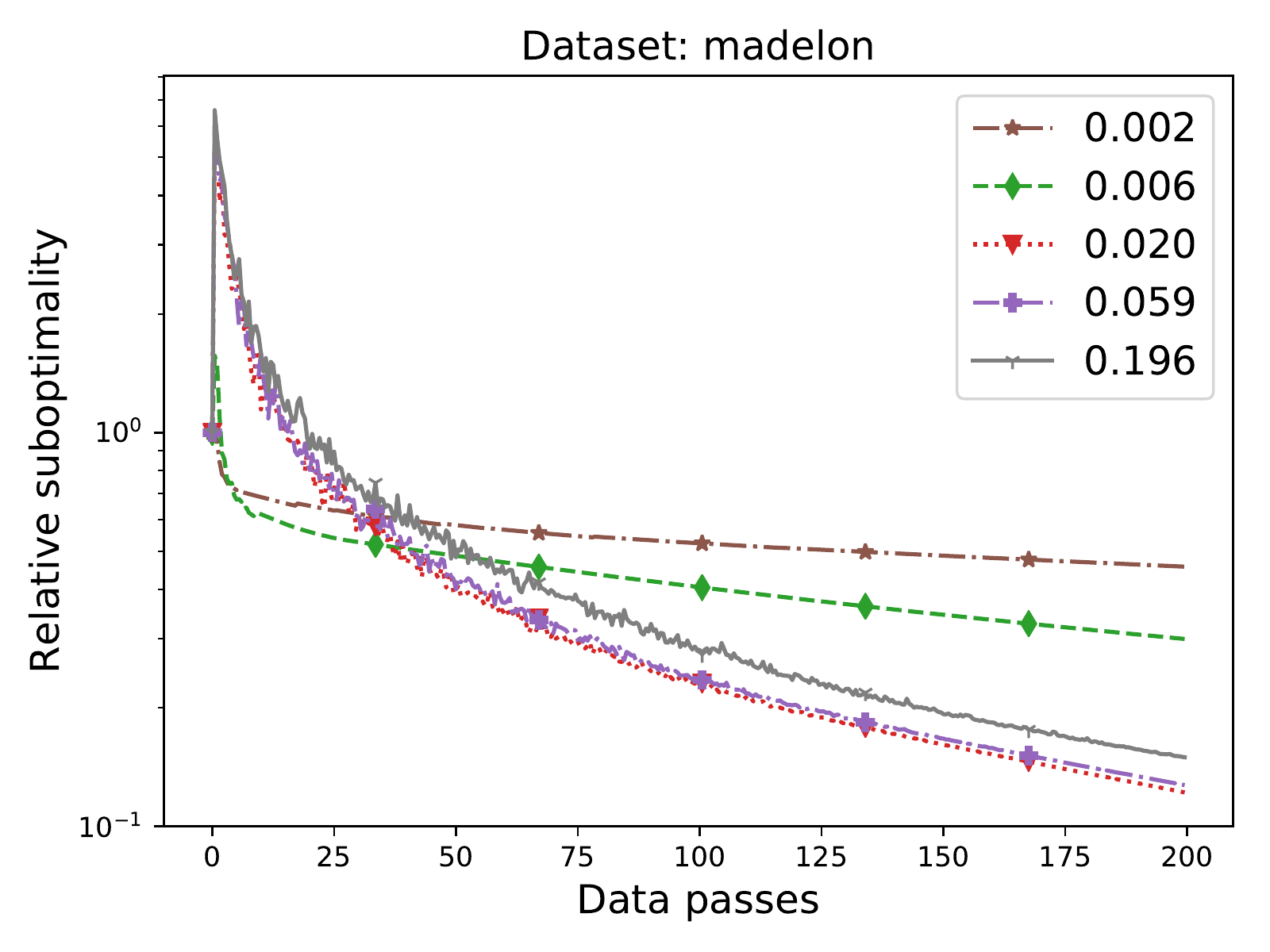}
        %\caption{ Residual vs. iteration  }\label{fig:bl_ex_flops}
\end{minipage}%
\\
\begin{minipage}{0.4\textwidth}
  \centering
\includegraphics[width =  \textwidth ]{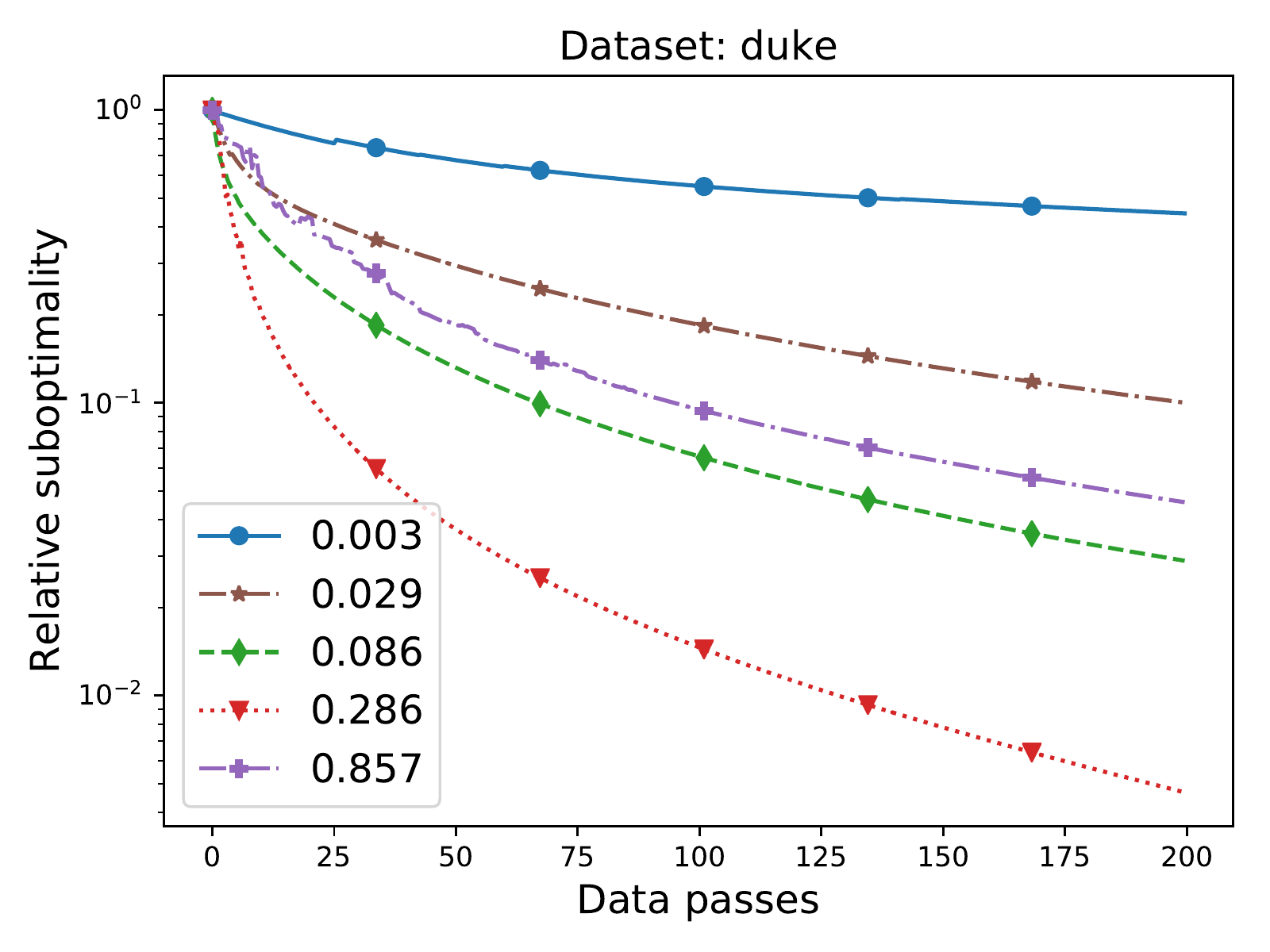}
        %\caption{ Residual vs. iteration  }\label{fig:bl_ex_flops}
\end{minipage}%
\begin{minipage}{0.4\textwidth}
  \centering
\includegraphics[width =  \textwidth ]{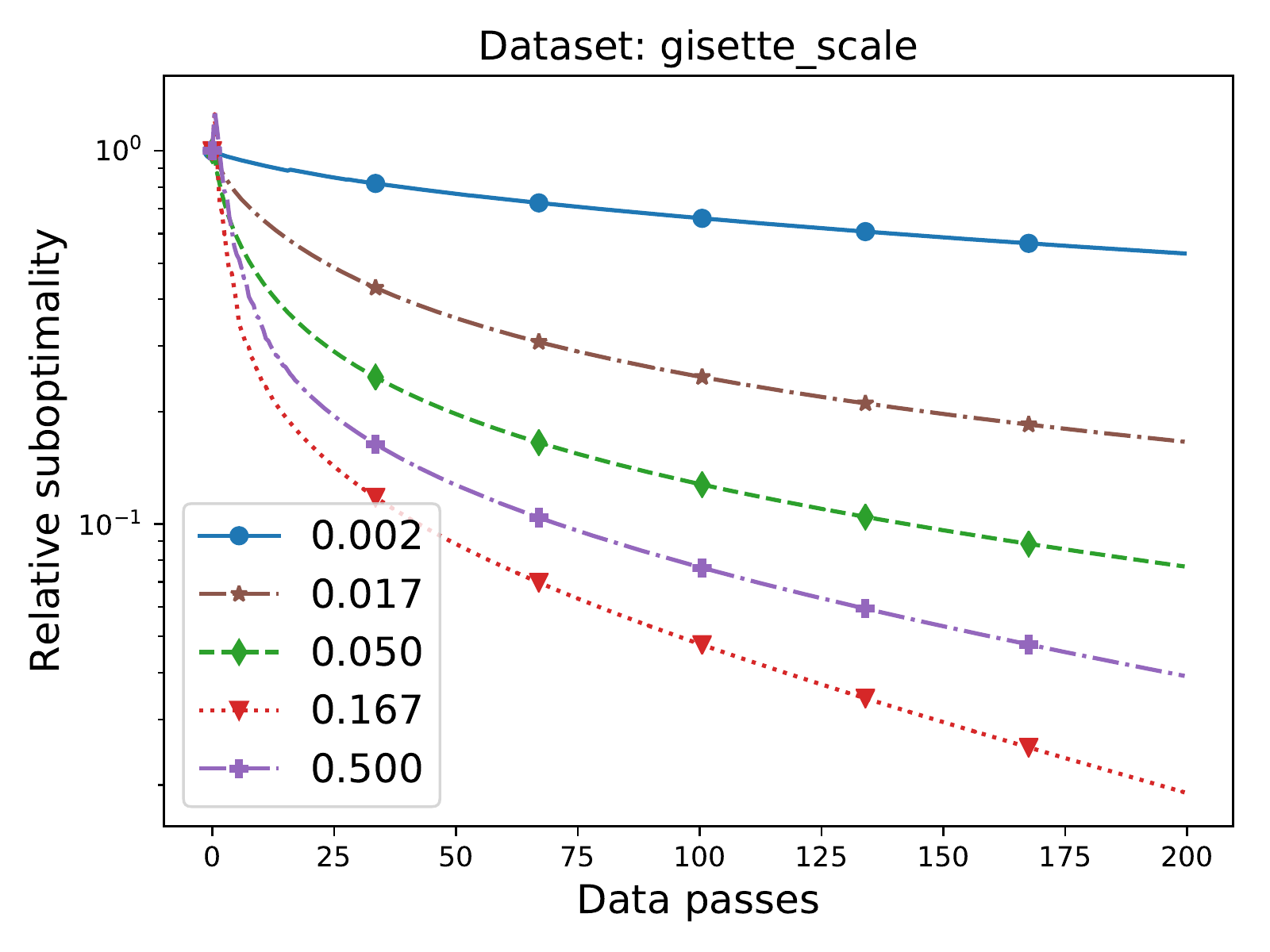}
        %\caption{ Residual vs. iteration  }\label{fig:bl_ex_flops}
\end{minipage}%
\\
\begin{minipage}{0.4\textwidth}
  \centering
\includegraphics[width =  \textwidth ]{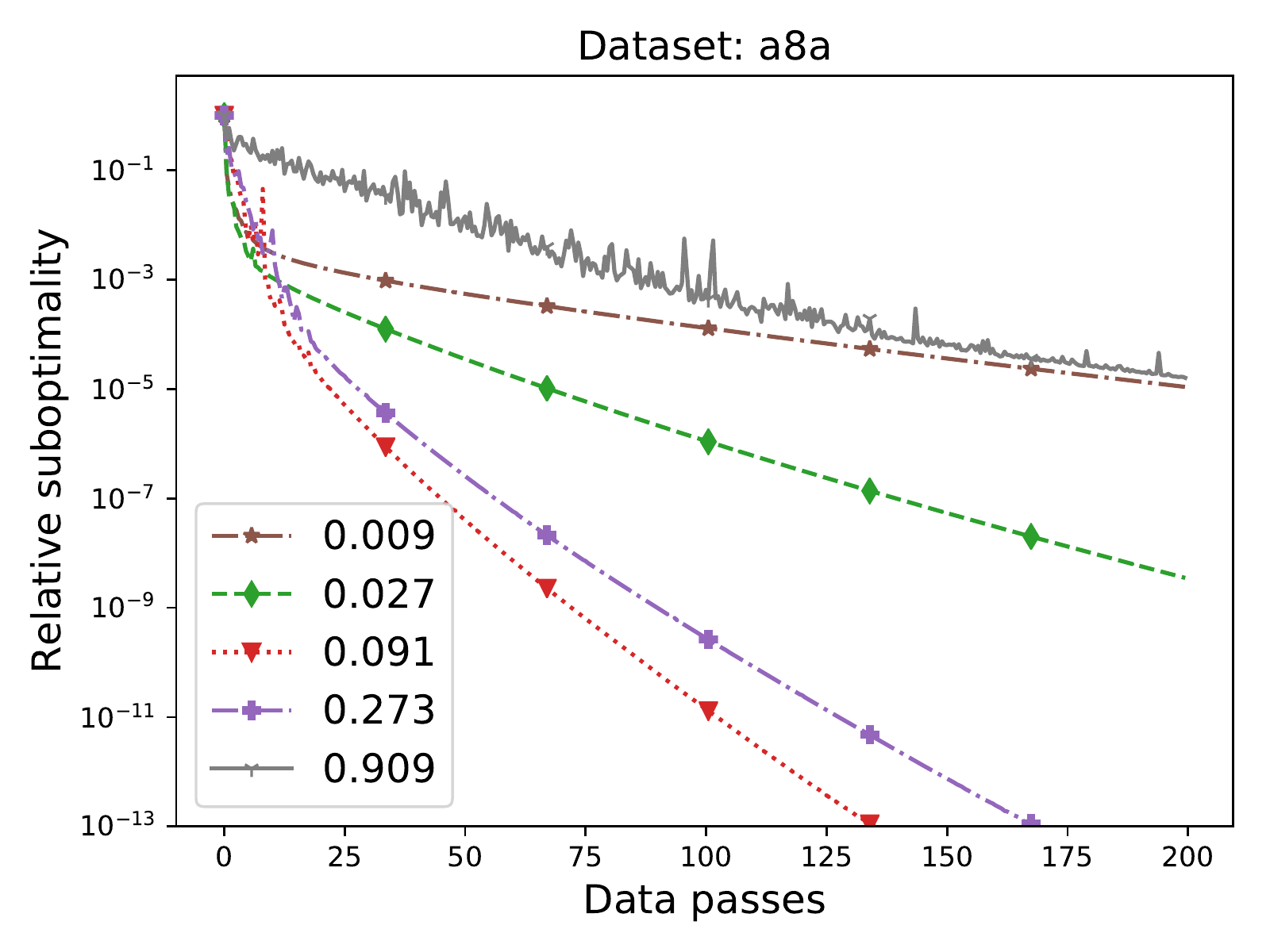}
        %\caption{ Residual vs. iteration  }\label{fig:bl_ex_flops}
\end{minipage}%
\caption{Effect of the aggregation probability $\pagg$ (legend of the plots) on the convergence rate of Algorithm~\ref{alg:L2SGD}. Choice $\pagg = \pagg^{\star}$ corresponds to red dotted line with triangle marker. Parameter $\lambda$ was chosen in each case as Table~\ref{tbl:data} indicates. } 
\label{fig:libsvm_pagg}
\end{figure}

\subsection{Effect of $\lambda$ \label{sec:exp:omega}}
In this experiment we study how different values of $\lambda$ influence the convergence rate of Algorithm~\ref{alg:L2SGD}, given that everything else (i.e.,  $\pagg$) is fixed. Note that for each value of $\lambda$ we get a different instance of problem~\eqref{eq:problem_thispaper}; thus the optimal solution is different as well. Therefore, in order to make a fair comparison between convergence speeds, we plot the relative suboptimality (i.e.,  $\tfrac{F(x^k)-F(x^{\star})}{F(x^0)-F(x^{\star})}$) against the data passes. Figure~\ref{fig:libsvm_omega} presents the results.

The complexity of Algorithm~\ref{alg:L2SGD} is\footnote{Given that $\mu$ is small.} $\cO\left(\tfrac{\Lloc}{(1-\pagg)\mu} \right)\log\tfrac1\varepsilon$ as soon as $\lambda< \lambda^{\star} \eqdef \tfrac{L\pagg}{(1-\pagg)}$; otherwise the complexity is $\cO\left(\tfrac{\lambda}{\pagg\mu} \right)\log\tfrac1\varepsilon$. This perfectly consistent with what  Figure~\ref{fig:libsvm_omega} shows -- the choice $\lambda<\lambda^{\star}$ resulted in comparable convergence speed than $\lambda=\lambda^{\star}$; while the choice $\lambda>\lambda^{\star}$ yields noticeably worse rate than $\lambda=\lambda^{\star}$.

\begin{figure}[!h]
\centering
\begin{minipage}{0.4\textwidth}
  \centering
\includegraphics[width =  \textwidth ]{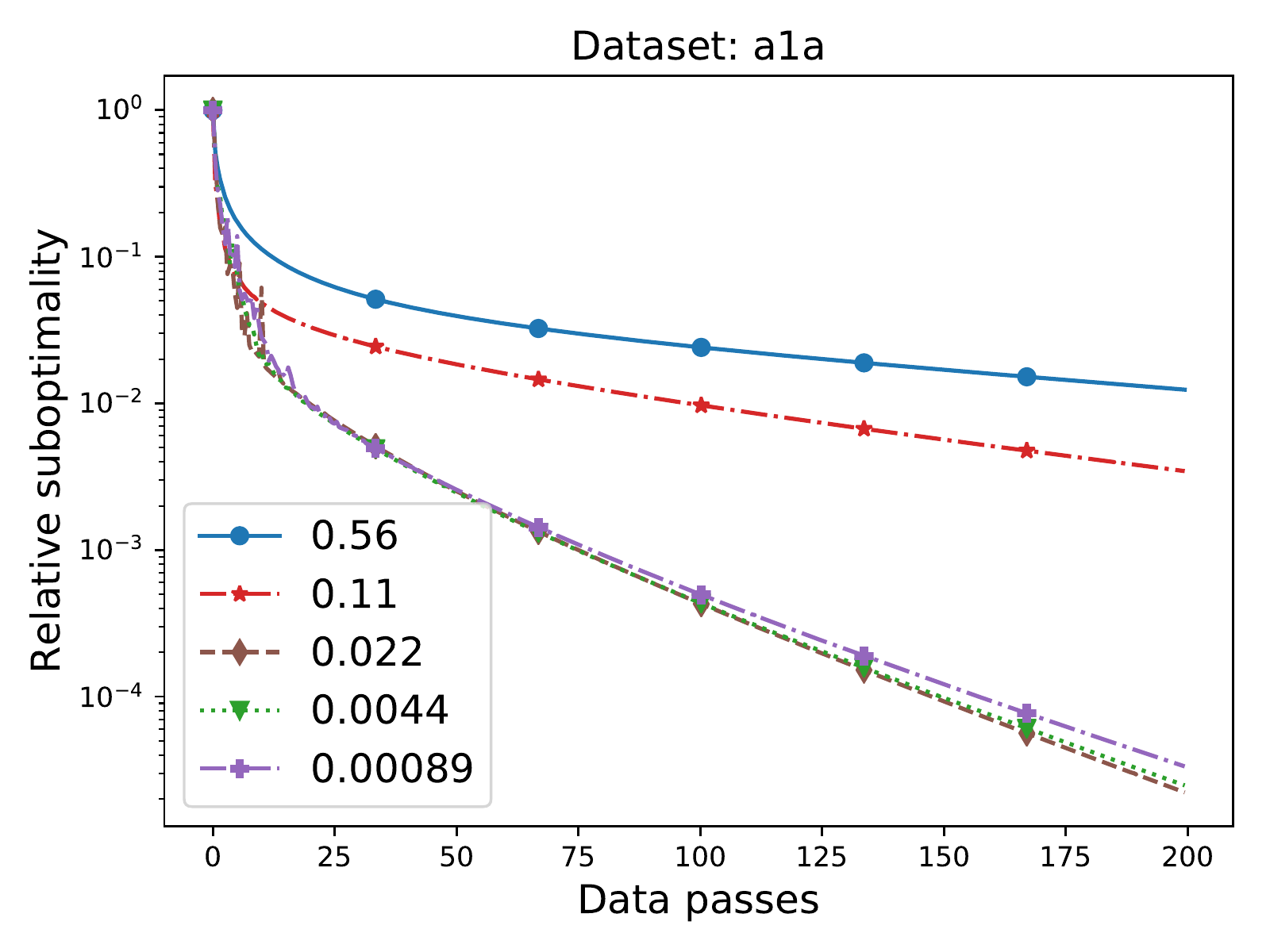}
        %\caption{ Residual vs. iteration  }\label{fig:bl_ex_flops}
\end{minipage}%
\begin{minipage}{0.4\textwidth}
  \centering
\includegraphics[width =  \textwidth ]{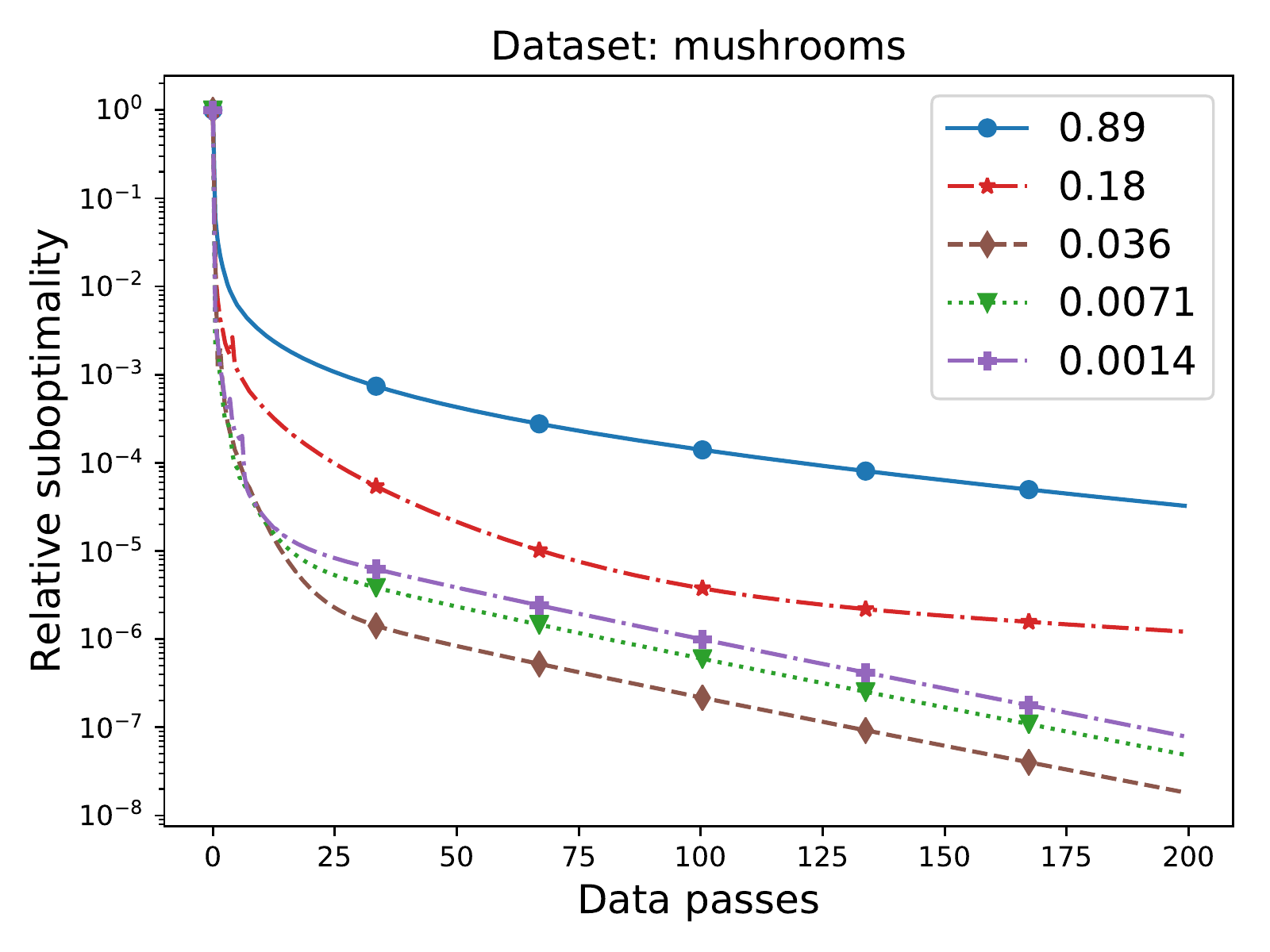}
        %\caption{ Residual vs. iteration  }\label{fig:bl_ex_flops}
\end{minipage}%
\\
\begin{minipage}{0.4\textwidth}
  \centering
\includegraphics[width =  \textwidth ]{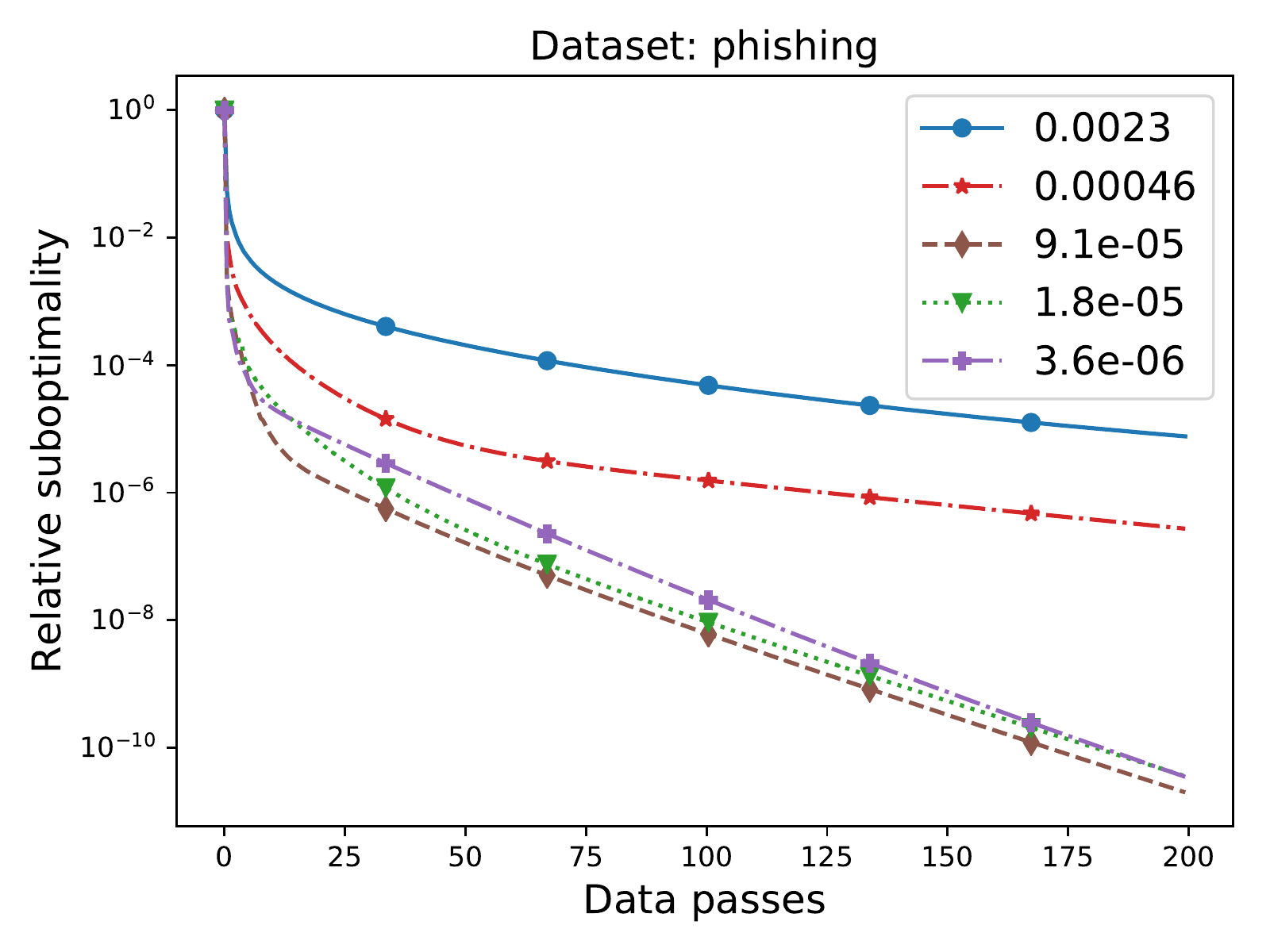}
        %\caption{ Residual vs. iteration  }\label{fig:bl_ex_flops}
\end{minipage}%
\begin{minipage}{0.4\textwidth}
  \centering
\includegraphics[width =  \textwidth ]{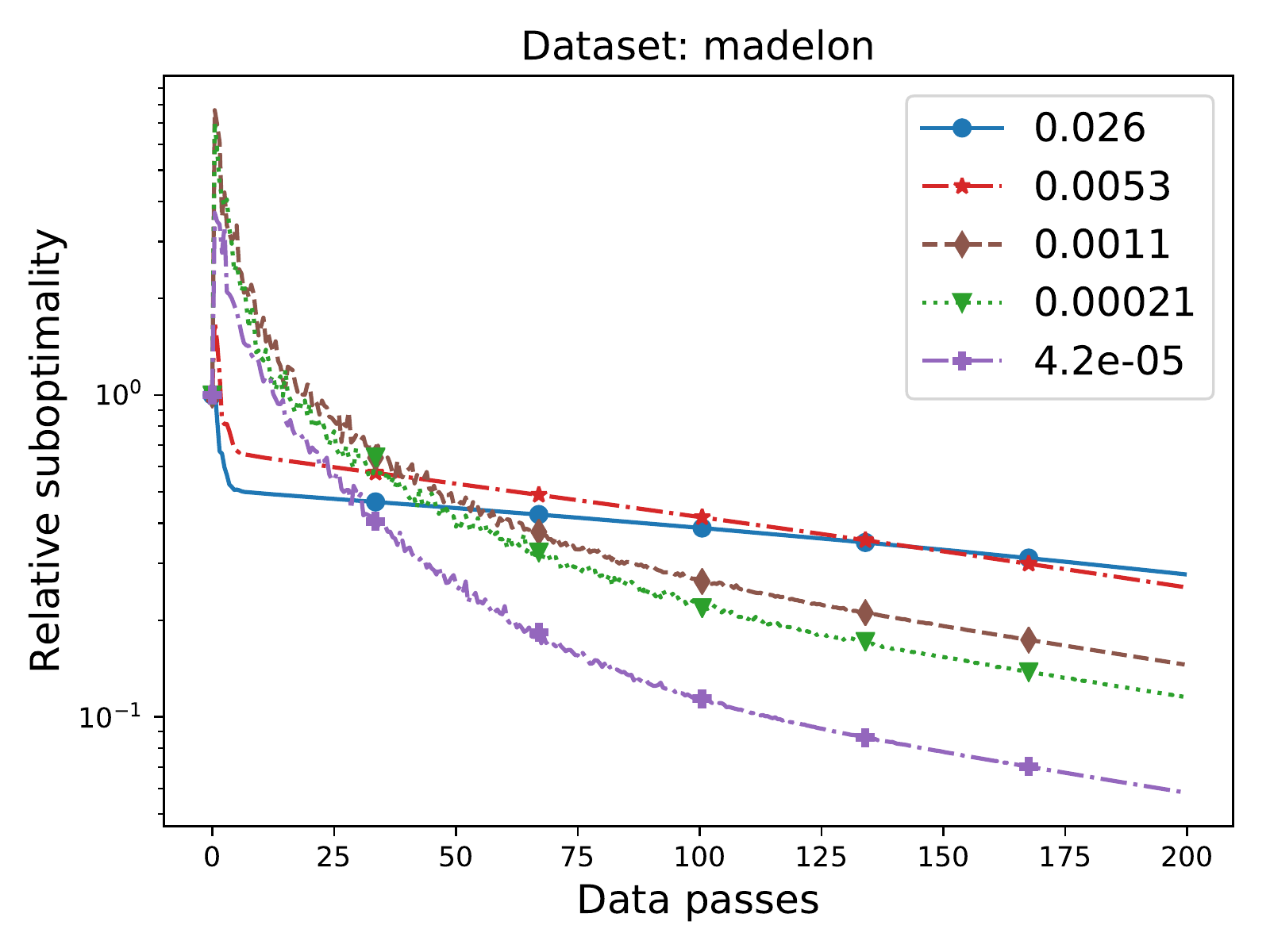}
        %\caption{ Residual vs. iteration  }\label{fig:bl_ex_flops}
\end{minipage}%
\\
\begin{minipage}{0.4\textwidth}
  \centering
\includegraphics[width =  \textwidth ]{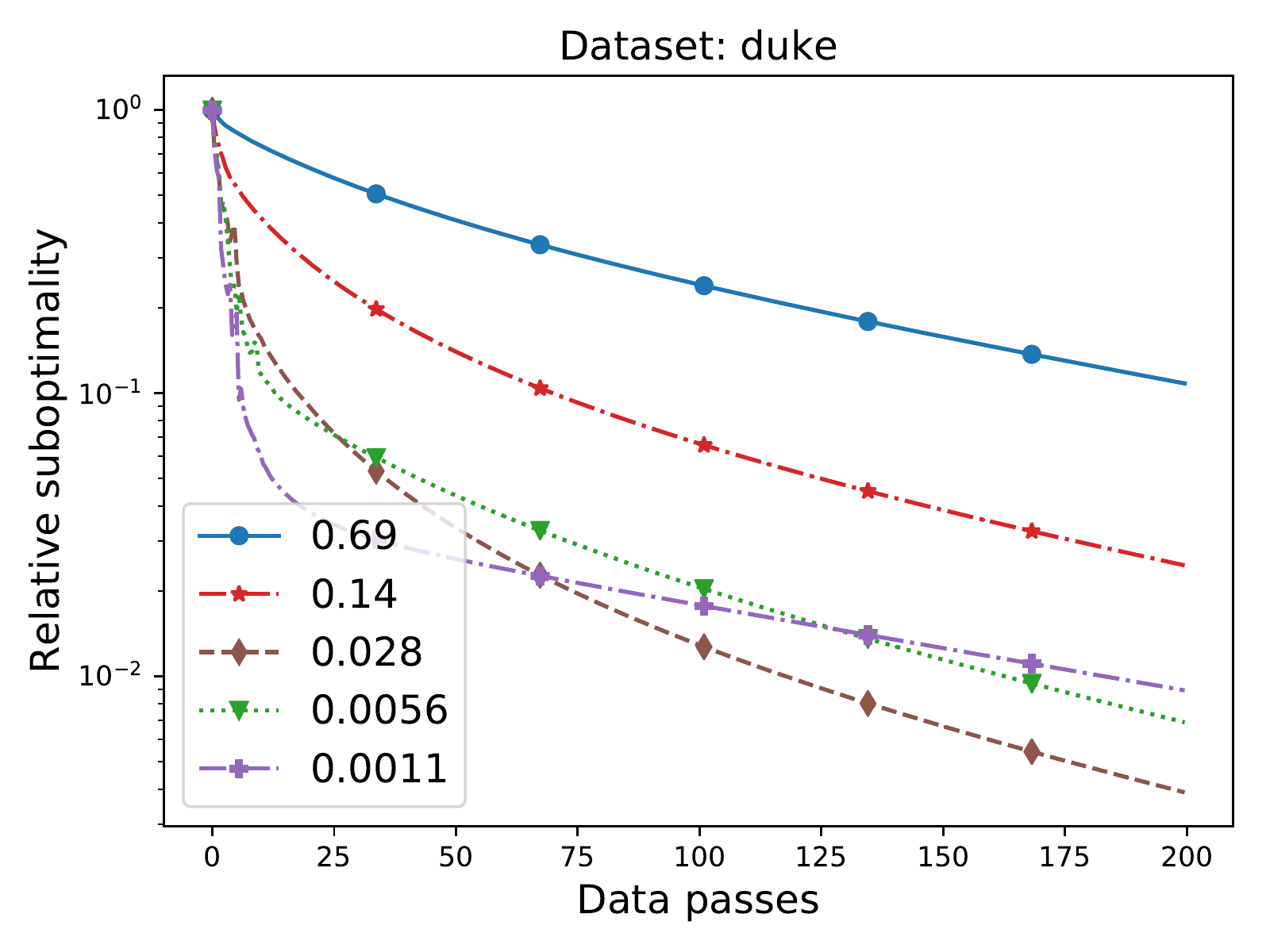}
        %\caption{ Residual vs. iteration  }\label{fig:bl_ex_flops}
\end{minipage}%
\begin{minipage}{0.4\textwidth}
  \centering
\includegraphics[width =  \textwidth ]{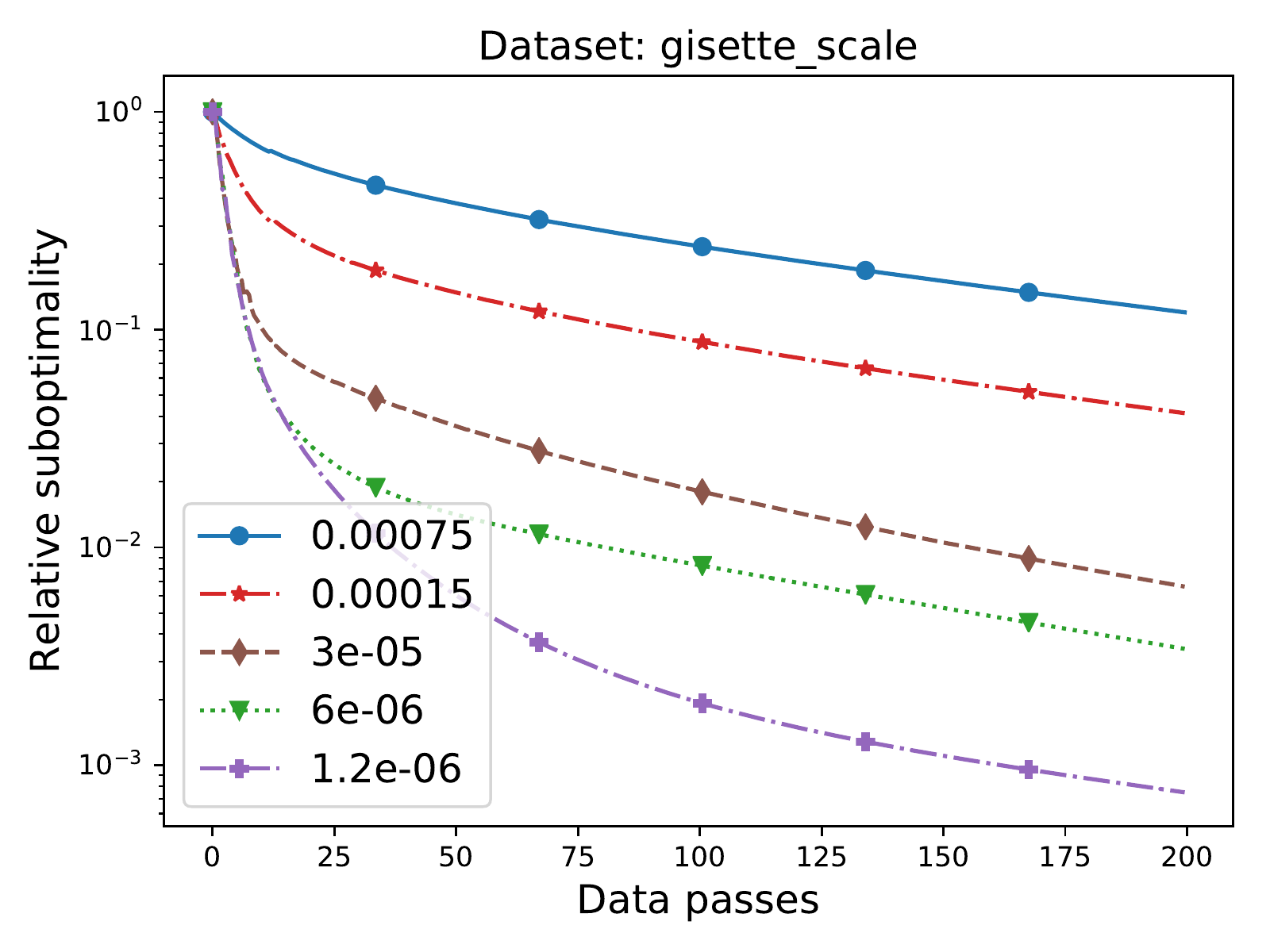}
        %\caption{ Residual vs. iteration  }\label{fig:bl_ex_flops}
\end{minipage}%
\\
\begin{minipage}{0.4\textwidth}
  \centering
\includegraphics[width =  \textwidth ]{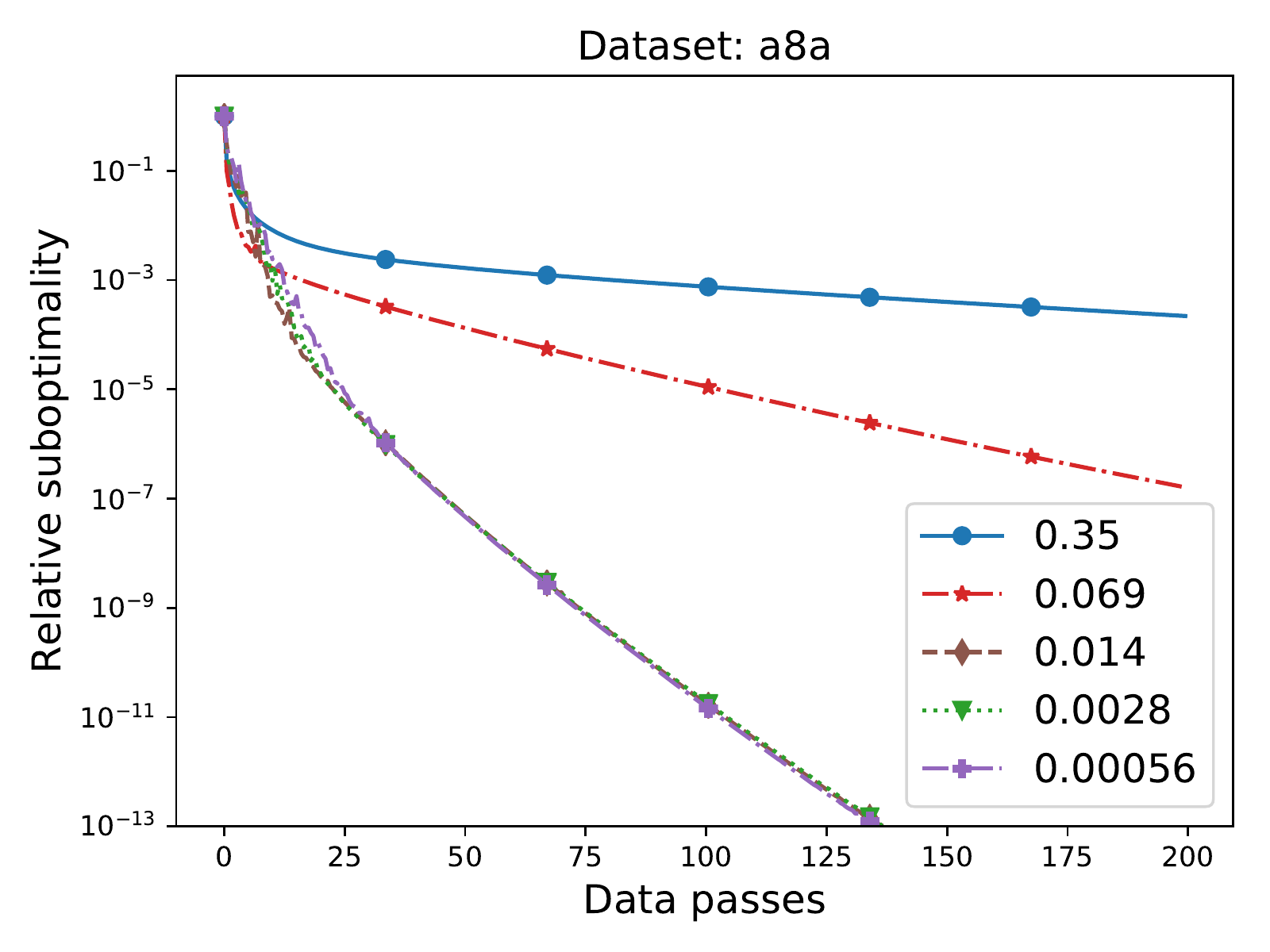}
        %\caption{ Residual vs. iteration  }\label{fig:bl_ex_flops}
\end{minipage}%
\caption{Effect of parameter $\lambda$ (legend of the plot) on the convergence rate of Algorithm~\ref{alg:L2SGD}. The choice $\lambda=\lambda^{\star}$ corresponds to borwn dash-dotted line with diamond marker (the third one from the legend). Aggregation probability $\pagg$ was chosen in each case as Table~\ref{tbl:data} indicates. } 
\label{fig:libsvm_omega}
\end{figure}

\clearpage
\section{Remaining Algorithms}

\subsection{Local GD with variance reduction \label{sec:lgd}}
In this section, we present variance reduced local gradient descent with partial aggregation. In particular, the proposed algorithm (Algorithm~\ref{alg:vr_loc_gd}) incorporates control variates to Algorithm~\ref{alg:L2GD}. Therefore, the proposed method can be seen as a special case of Algorithm~\ref{alg:L2SGD} with $m=1$. We thus only present it for pedagogical purposes, as it might shed additional insights into our approach. 

In particular, the update rule of proposed method will be $x^{k+1} =x^k - \alpha g^k$ where

\[
g^k =  \begin{cases}
\pagg^{-1} (\lambda\nabla \psi(x^k) - \TR^{-1}\mJpsi^k) + \TR^{-1}\mJf^k + \TR^{-1}\mJpsi^k& \text{with probability} \quad \pagg \\
(1-\pagg)^{-1} (\nabla f(x^k) - \TR^{-1}\mJf^k) + \TR^{-1}\mJf^k + \TR^{-1}\mJpsi^k &\text{with probability} \quad 1-\pagg 
\end{cases}.
\]
for some control variates vectors $\mJf^k, \mJpsi^k \in \R^{nd}$. A quick check  gives 
\[
\E{g^k\, | \, x^k} = \nabla f(x^k) + \lambda \nabla \psi(x^k) = \nabla F(x^k), 
\]
thus the direction we are taking is unbiased regardless of the value of control variates $ \mJf^k,\mJpsi^k$. 
The goal is to make control variates $\mJf^k,\mJpsi^k$ correlated\footnote{Specifically we aim to have $\Corr{ \mJf^k,\TR \nabla f(x^k) } \rightarrow 1$ and $\Corr{ \TR^{-1}\mJpsi^k,\lambda \nabla \psi(x^k) } \rightarrow 1$ as $x^k \rightarrow x^{\star}$.} with $\TR \nabla f(x^k)$ and $\TR \lambda \nabla \psi(x^k)$. One possible solution to the problem is for $\mJf^k,\mJpsi^k$ to track most recently observed values of $\TR \nabla f(\cdot)$ and $\TR\lambda \nabla \psi(\cdot)$, which corresponds to the following update rule
\[
\left( \mJpsi^{k+1}, \mJf^{k+1}\right)  = 
\begin{cases}
\left(\TR\lambda \nabla \psi(x^k), \mJf^{k} \right)& \text{with probability} \quad \pagg \\
\left( \mJpsi^{k}, \TR\nabla f(x^k) \right) &\text{with probability} \quad 1-\pagg
\end{cases}.
\]

A specific, distributed implementation of the described method is presented as Algorithm~\ref{alg:vr_loc_gd}. The only communication between the devices takes place when the average model $\bar{x}^k$ is being computed (with probability $\pagg$), which is analogous to standard local SGD. Therefore we aim to set $\pagg$ rather small.

\begin{algorithm}[h]
  \caption{ Variance reduced local gradient descent }
  \label{alg:vr_loc_gd}
\begin{algorithmic}
\STATE{\bfseries Input: }{$x^0_1 = \dots = x_{n}^0\in\R^{d}$,  stepsize $\alpha$, probability $p$  }
\STATE $\mJf^0_1 = \dots =  \mJf^0_{\TR}  = \mJpsi^0_1 = \dots =  \mJpsi^0_{\TR} = 0 \in \R^{d}$
  \FOR{$k=0,1,\dotsc$}
  \STATE $\xi = 1$ with probability $p$ and $0$ with probability $1-p$ 
  \IF {$\xi$}
     \STATE All {\color{red}Devices} $i=1,\dots,n$:      
        \STATE \hskip .3cm Compute $\nabla f_{\tR}(x^k_{\tR})$ 
        \STATE \hskip .3cm $x^{k+1}_\tR =  x^k_i - \alpha\left(\TR^{-1}(1-\pagg)^{-1} \nabla f_{\tR}(x^k_{\tR}) - \TR^{-1}\tfrac{\pagg}{1-\pagg} \mJf^k_{\tR} + \TR^{-1}\mJpsi^k_{\tR} \right)$
         		\STATE \hskip .3cm Set $\mJf^{k+1}_{\tR}=  \nabla f_{\tR}(x^k_{\tR}) $, $\mJpsi^{k+1}_{\tR} = \mJpsi^{k}_{\tR} $
         		  \ELSE 
  \STATE {\color{blue}Master} computes the average $\bar{x}^k = \tfrac{1}{n}\sum_{i=1}^n x_i^k$
\STATE {\color{blue}Master} does for all $i=1,\dots,n$:        
 		\STATE \hskip .3cm Set $x_{\tR}^{k+1} =x_{\tR}^k  - \alpha \left( \tfrac{\lambda}{\TR\pagg } (x^k_{\tR} - \bar{x}^k) -(\pagg^{-1} -1) \TR^{-1}\mJpsi^k_{\tR} + \TR^{-1}\mJf_{\tR}^k\right) $ 
 		\STATE \hskip .3cm Set $\mJpsi^{k+1}_{\tR} =  \lambda (x^k_{\tR} - \bar{x}^k )$, $\mJf^{k+1}_{\tR} = \mJf^{k}_{\tR} $
  \ENDIF
  \ENDFOR
\end{algorithmic}
\end{algorithm}

Note that Algorithm~\ref{alg:vr_loc_gd} is a particular special case of SAGA with importance sampling~\citep{qian2019saga}; thus, we obtain convergence rate of the method for free. We state it as Thm.~\ref{thm:basic}.

\begin{theorem}\label{thm:basic}
Let Assumption~\ref{as:smooth_sc_main} hold. Set $\alpha = \TR \min \left(\tfrac{(1-\pagg)}{4L+ \mu}, \tfrac{\pagg}{4\lambda +  \mu}  \right)$. Then, iteration complexity of Algorithm~\ref{alg:vr_loc_gd} is \[
\max \left( \tfrac{4L +\mu}{ \mu(1-\pagg)}, \tfrac{4\lambda +\mu}{\mu \pagg} \right)\log \tfrac1\varepsilon.\] 
\end{theorem}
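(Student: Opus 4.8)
\textbf{Proof plan for Theorem~\ref{thm:basic}.}
The plan is to recognize Algorithm~\ref{alg:vr_loc_gd} as a concrete instance of SAGA with importance sampling applied to the finite-sum problem
$\min_x F(x) = \tfrac1N \sum_{i=1}^N F_i(x)$, where the two ``summands'' are the rescaled loss term and the rescaled penalty term. Concretely, I would set up the two-term decomposition matching the stochastic gradient $g^k$ in the definition of the method: with probability $1-\pagg$ we sample the ``loss'' term with gradient estimator $(1-\pagg)^{-1}\nabla f(x^k)$, and with probability $\pagg$ we sample the ``penalty'' term with gradient estimator $\pagg^{-1}\lambda\nabla\psi(x^k)$. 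First I would verify unbiasedness, $\E{g^k\mid x^k} = \nabla f(x^k) + \lambda\nabla\psi(x^k) = \nabla F(x^k)$, which is already observed in the text, and then check that the control-variate update rule (tracking the most recently observed $N\nabla f$ and $N\lambda\nabla\psi$) is exactly the SAGA memory update for this two-element sampling.

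Next I would identify the smoothness and strong-convexity constants that feed into the generic SAGA-with-importance-sampling rate of \citet{qian2019saga}. Since $f$ is $L/N$-smooth (Assumption~\ref{as:smooth_sc_main} and the discussion in Sec.~\ref{bd798gsd}), the rescaled term $\tfrac{1}{1-\pagg}f$ has the relevant ``expected smoothness''-type constant proportional to $\tfrac{L}{N(1-\pagg)}$; similarly, $\psi$ is $\tfrac1N$-smooth, so $\tfrac{\lambda}{\pagg}\psi$ contributes $\tfrac{\lambda}{N\pagg}$. The function $F$ is $\tfrac{\mu}{N}$-strongly convex. Plugging these into the SAGA rate — which for probability-$q_i$ sampling of term $i$ with smoothness $L_i$ gives a contraction factor governed by $\max_i\{ \mu + L_i/q_i\}$ type quantities divided by $\mu$, after taking the stepsize $\alpha$ as the harmonic-type minimum over the two terms — yields the stated stepsize $\alpha = N\min\{(1-\pagg)/(4L+\mu),\ \pagg/(4\lambda+\mu)\}$ and the iteration complexity $\max\{(4L+\mu)/(\mu(1-\pagg)),\ (4\lambda+\mu)/(\mu\pagg)\}\log\tfrac1\varepsilon$. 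The factor $N$ in $\alpha$ absorbs the $\tfrac1N$ scaling of $\nabla f(x) = \tfrac1N(\nabla f_1(x_1),\dots)$ so that the per-device update in Algorithm~\ref{alg:vr_loc_gd} reads cleanly with $N^{-1}$ factors; I would check that the distributed update lines in the pseudocode match $x^{k+1} = x^k - \alpha g^k$ coordinate-blockwise, using $(\nabla f(x))_i = \tfrac1N\nabla f_i(x_i)$ and $(\nabla\psi(x))_i = \tfrac1N(x_i-\bar x)$.

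The main obstacle I anticipate is bookkeeping rather than conceptual: matching the precise form of the constants (especially the ``$4$'' in $4L+\mu$ and $4\lambda+\mu$, which comes from the specific Lyapunov-function coefficients in the SAGA analysis) to the hypotheses of the cited theorem, and making sure the two-term sampling with \emph{non-uniform} probabilities $1-\pagg$ and $\pagg$ is exactly the importance-sampling regime covered by \citet{qian2019saga} rather than a variant requiring a separate argument. I would also need to confirm that the convexity of each term (not just of $F$) is available: $f_i$ is strongly convex hence convex, and $\psi$ is convex by construction, so both terms in the two-sum are convex, which is what SAGA needs. Once the dictionary between our notation and theirs is fixed, the result is immediate, and the statement follows with no further computation beyond substituting $L_i$, $q_i$, and $\mu_F$ into their rate.
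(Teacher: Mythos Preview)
Your proposal is correct and follows essentially the same approach as the paper: both cast Algorithm~\ref{alg:vr_loc_gd} as SAGA with importance sampling on the two-term sum and invoke the convergence theorem of \citet{qian2019saga} (the paper also cites \citet{gazagnadou2019optimal}). The only minor difference is that the paper makes the rescaling explicit by writing $F(x) = \tfrac12\bigl(2f(x) + 2\lambda\psi(x)\bigr)$, so that the two summands have smoothness constants $\tfrac{2L}{n}$ and $\tfrac{2\lambda}{n}$ and $F$ is $\tfrac{\mu}{n}$-strongly convex; this is exactly the bookkeeping step you anticipated, and once it is in place the stepsize and complexity drop out directly from the cited result together with the Lyapunov function $\Upsilon(\mJf^k,\mJpsi^k) = \tfrac{4}{n^2}\sum_{i=1}^n\bigl(\|\mJpsi_i^k - \lambda(x_i(\lambda)-\bar x(\lambda))\|^2 + \|\mJf_i^k - \nabla f_i(x_i(\lambda))\|^2\bigr)$.
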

\begin{proof}
Clearly, $$ F(x) = f(x) + \lambda\psi(x) = \tfrac12 \left( \underbrace{2f(x)}_{\eqdef \floc(x)} +  \underbrace{2\lambda \psi(x)}_{\eqdef \psiloc(x)}\right).$$
Note that $\psiloc$ is $\tfrac{2\lambda}{\TR}$ smooth and $\floc$ is $\tfrac{2L}{\TR}$ smooth. At the same time, $F$ is $\tfrac{\mu}{\TR}$ strongly convex. Using convergence theorem of SAGA with importance sampling from~\citep{qian2019saga, gazagnadou2019optimal}, we get 
\[
\E{F(x^k) + \tfrac{\alpha}{2}\Upsilon(\mJf^k, \mJpsi^k) } 
\leq 
\left(1- \alpha \tfrac{\mu}{\TR} \right)^k
\left( 
F(x^0) +\tfrac{\alpha}{2}\Upsilon(\mJf^0, \mJpsi^0)
\right),
\]
where $$\Upsilon(\mJf^k, \mJpsi^k)\eqdef \tfrac{4}{n^2} \sum_{\tR=1}^\TR \left( \left\| \mJpsi^k_\tR- \lambda( x_\tR(\lambda) - \bar{x}(\lambda)) \right\|^2 +\| \mJf_\tR^k -   \nabla f_\tR(x_\tR(\lambda)) \|^2\right)$$ and $\alpha =\TR \min \left(\tfrac{(1-\pagg)}{4L+ \mu}, \tfrac{\pagg}{4\lambda +  \mu}  \right)$, as desired. 
\end{proof}

\begin{corollary}\label{cor:lgd_optimal_p}
Iteration complexity of Algorithm~\ref{alg:vr_loc_gd} is minimized for $\pagg  = \tfrac{4\lambda +\mu}{4\lambda +4L +2\mu}$, which yields complexity $4\left( \tfrac{\lambda }{\mu} + \tfrac{L}{\mu} +\tfrac12\right) \log \tfrac1\varepsilon$. The communication complexity is minimized for any $\pagg \leq  \tfrac{4\lambda +\mu}{4\lambda +4L +2\mu}$, in which case the total number of communication rounds to reach $\varepsilon$-solution is $\left(\tfrac{4\lambda}{\mu}+1 \right)\log\tfrac{1}{\varepsilon}$. 
\end{corollary}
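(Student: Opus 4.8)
The plan is to treat both assertions as elementary one-variable optimization problems over the aggregation probability $\pagg\in(0,1)$, taking the complexity bound of Theorem~\ref{thm:basic} as given. Write $K(\pagg)\eqdef\max\left\{\tfrac{4L+\mu}{\mu(1-\pagg)},\,\tfrac{4\lambda+\mu}{\mu\pagg}\right\}$, so that the iteration complexity of Algorithm~\ref{alg:vr_loc_gd} equals $K(\pagg)\log\tfrac1\varepsilon$. On $(0,1)$ the first term in the maximum is strictly increasing (from $\tfrac{4L+\mu}{\mu}$ up to $+\infty$) and the second is strictly decreasing (from $+\infty$ down to $\tfrac{4\lambda+\mu}{\mu}$); hence $K$ is the maximum of an increasing and a decreasing function and is minimized at the unique crossing point. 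Solving $\tfrac{4L+\mu}{1-\pagg}=\tfrac{4\lambda+\mu}{\pagg}$ gives $\pagg^\star=\tfrac{4\lambda+\mu}{4\lambda+4L+2\mu}$, and substituting back (so that $1-\pagg^\star=\tfrac{4L+\mu}{4\lambda+4L+2\mu}$) yields $K(\pagg^\star)=\tfrac{4\lambda+4L+2\mu}{\mu}=4\left(\tfrac{\lambda}{\mu}+\tfrac{L}{\mu}+\tfrac12\right)$. For completeness I would note that $K(\pagg)>K(\pagg^\star)$ for $\pagg\neq\pagg^\star$ because one of the two branches strictly exceeds its value at $\pagg^\star$, which confirms $\pagg^\star$ is the minimizer.

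For the communication complexity, recall from the description preceding Algorithm~\ref{alg:vr_loc_gd} that a communication round occurs exactly when the averaging step is executed, i.e.\ independently with probability $\pagg$ in each iteration. Therefore the expected number of communication rounds to reach an $\varepsilon$-solution is $\pagg\,K(\pagg)\log\tfrac1\varepsilon=\max\left\{\tfrac{\pagg(4L+\mu)}{\mu(1-\pagg)},\,\tfrac{4\lambda+\mu}{\mu}\right\}\log\tfrac1\varepsilon$. Here the second argument is a constant, while the first is $0$ at $\pagg=0$ and strictly increasing to $+\infty$; hence the maximum equals the constant $\tfrac{4\lambda+\mu}{\mu}=\tfrac{4\lambda}{\mu}+1$ precisely as long as $\tfrac{\pagg(4L+\mu)}{1-\pagg}\leq 4\lambda+\mu$, which rearranges to $\pagg\leq\tfrac{4\lambda+\mu}{4\lambda+4L+2\mu}$, and is strictly larger for any bigger $\pagg$. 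This simultaneously identifies the set of minimizers and the minimal value $\left(\tfrac{4\lambda}{\mu}+1\right)\log\tfrac1\varepsilon$.

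The computations are routine, so I do not expect a genuine obstacle; the only point requiring care is the bookkeeping of the per-iteration communication cost, namely that it is $\pagg$ (one round per executed averaging step) rather than $\pagg(1-\pagg)$ as in the non-variance-reduced L2GD of Lemma~\ref{lem:exp_no_comm}. The reason is that each averaging step in Algorithm~\ref{alg:vr_loc_gd} also refreshes the control variate $\mJpsi$ and moves $\bar{x}^k$, so consecutive averaging steps can no longer be fused into a single communication as they could in plain L2GD. Once this is pinned down, both parts follow from the ``maximum of a monotone-increasing and a monotone-decreasing function'' argument applied above.
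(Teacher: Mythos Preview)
Your proposal is correct and is essentially the argument the paper has in mind; the paper does not give a separate proof of this corollary, treating it as an immediate consequence of Theorem~\ref{thm:basic}, so your ``maximum of an increasing and a decreasing function'' computation is exactly what is required.

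One small remark on the communication accounting. Your conclusion that the per-iteration communication cost is $\pagg$ (not $\pagg(1-\pagg)$) is what the paper intends here---indeed, just above Algorithm~\ref{alg:vr_loc_gd} it states that ``the only communication between the devices takes place when the average model $\bar{x}^k$ is being computed (with probability $\pagg$)''---and it is the only accounting consistent with the corollary as stated. However, your \emph{justification} (that consecutive averaging steps cannot be fused because $\mJpsi$ and $\bar{x}^k$ move) is not quite right: the master can in fact track $\mJpsi_i^k$ and $\bar{x}^k$ across consecutive aggregation steps without fresh device-to-master communication, and the paper exploits precisely this in the efficient implementation of L2SGD+ (Algorithm~\ref{alg:L2SGD_efficient}) to recover the $\pagg(1-\pagg)$ rate used in Corollary~\ref{cor:lsd_optimal_p}. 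The cleaner explanation is simply that Algorithm~\ref{alg:vr_loc_gd}, as written, performs one communication per aggregation step and does not attempt this fusion; hence the $\pagg$ cost.
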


As a direct consequence of Corollary~\ref{cor:lgd_optimal_p} we see that the optimal choice of $p$ that minimizes both communication and number of iterations to reach $\varepsilon$ solution of problem~\eqref{eq:problem} is $\pagg =  \tfrac{4\lambda +\mu}{4\lambda +4L +2\mu}$.

\begin{remark}
While both Algorithm~\ref{alg:vr_loc_gd} and Algorithm~\ref{alg:L2SGD} are a special case of SAGA, the practical version of variance reduced local SGD (presented in Section~\ref{sec:general_full}) is not. In particular, we wish to run the SVRG-like method locally in order to avoid storing the full gradient table.\footnote{SAGA does not require storing a full gradient table for problems with linear models by memorizing the residuals. However, in full generality, SVRG-like methods are preferable.} Therefore, variance reduced local SGD that will be proposed in Section~\ref{sec:general_full} is neither a special case of SAGA nor a special case of SVRG (or a variant of SVRG). However, it is still a special case of a more general algorithm from~\citep{hanzely2019one}.
\end{remark}

As mentioned, Algorithm~\ref{alg:L2SGD} is a generalization of Algorithm~\ref{alg:vr_loc_gd} when the local subproblem is a finite sum. Note that Algorithm~\ref{alg:vr_loc_gd} constructs a control variates for both local subproblem and aggregation function $\psi$ and constructs corresponding unbiased gradient estimator. In contrast, Algorithm~\ref{alg:L2SGD} constructs extra control variates within the local subproblem in order to reduce the variance of gradient estimator coming from the local subsampling. 

\subsection{L2SGD+: Algorithm and the efficient implementation \label{sec:implement}}

Here we present an efficient implementation of L2SGD+ as Algorithm~\ref{alg:L2SGD_efficient} so that we do not have to communicate control variates. As a consequence, Algorithm~\ref{alg:L2SGD_efficient} needs to communicate on average $p(1-p)k$ times per $k$ iterations, while each communication consists of sending only local models to the master and back.

\begin{algorithm}[!h]
  \caption{L2SGD+: Loopless Local  SGD with Variance Reduction (communication-efficient implementation)}
  \label{alg:L2SGD_efficient}
\begin{algorithmic}
\STATE{\bfseries Input: }{$x^0_1 = \dots = x_{n}^0=\xx\in\R^{d}$, stepsize $\alpha$, probability $p$ }
\STATE Initialize control variates $\mJf^0_{\tR}  = 0 \in \R^{d\times m}, \mJpsi^0_{\tR}  = 0 \in \R^{d}$ (for $\tR = 1,\dots, \TR$), initial coin toss $\xi^{-1}=0$
  \FOR{$k=0,1,\dotsc$}
  \STATE $\xi^k = 1$ with probability $p$ and $0$ with probability $1-p$ 
  \IF {$\xi^k = 0$}
\STATE \hskip -0.3cm All {\color{red}Devices} $i=1,\dots,n$:      
  \IF {$\xi^{k-1} = 1$}
  \STATE Receive $ \xb^k_i, c$ from  {\color{blue}Master} 
  \STATE Reconstruct $\bar{x}^k = \bar{x}^{k-c}$ using  $ \xb^k_i, x^{k-c}_i, c$
   \STATE Set $x^k_i = {\color{blue} x}^k_i -c\alpha \tfrac{1}{\TR m} \mJf_{\tR}^{k} \onesm$,  $ \mJf_{\tR}^{k} = \mJf_{\tR}^{k-c} $, $ \mJpsi_{\tR}^{k} = \lambda(x^{k-c}_i - \bar{x}^k)$, 
  \ENDIF
          \STATE   Sample $j \in \{1,\dots, m\}$ (uniformly at random)
       \STATE   $g^k_\tR = \tfrac{1}{\TR(1-\pagg)} \left(\nabla \flocc_{i,j}(x^k_{\tR})- \left(\mJf^k_{\tR}\right)_{:,j}\right) + \tfrac{ \mJf^k_{\tR} \onesm}{\TR m}  +   \tfrac{\mJpsi^k_{\tR}}{\TR}  $ 
         \STATE   $x^{k+1}_\tR = x^{k}_\tR - \alpha g^k_\tR$
        \STATE    Set $(\mJ^{k+1}_i)_{:,j} = \nabla \flocc_{i,j}(x^k_\tR)$, $\mJpsi^{k+1}_{\tR}= \mJpsi^k_{\tR}$,
        \STATE \hskip 0.53cm  $(\mJ^{k+1}_i)_{:,l} = (\mJ^{k+1}_i)_{:,l} $ for all $l\neq j$
 \ELSE 
\STATE \hskip -0.3cm {\color{blue}Master} does for all $i=1,\dots,n$:       
   \IF {$\xi^{k-1} = 0$}
        \STATE Set $c=0$
       \STATE Receive $x_i^k$ from {\color{red}Device} and set ${\color{blue} \bar{x}} = \tfrac{1}{n}\sum_{i=1}^n x_i^k$, $\xb^k_i = x_i^k$
   \ENDIF
 		\STATE   Set $\xb_{\tR}^{k+1} = \xb_{\tR}^k  - \alpha \left( \tfrac{\lambda }{\TR \pagg} (\xb^k_{\tR} - {\color{blue} \bar{x}} ) -\tfrac{\pagg^{-1} -1}{ \TR} \lambda (\xx -{\color{blue} \bar{x}}  ) \right) $
 		\STATE Set $\xx = \xb^{k}_i$
 		\STATE Set $c=c+1$
  \ENDIF
  \ENDFOR
\end{algorithmic}
\end{algorithm}

\subsection{Local SGD with variance reduction -- general method \label{sec:general_full}}

In this section, we present a fully general variance reduced local SGD. We consider a more general instance of~\eqref{eq:main} where each local objective includes a possibly nonsmooth regularizer, which admits a cheap evaluation of proximal operator. In particular, the objective becomes

\begin{equation}\label{eq:problem_thispaper}
\compactify \min_{x\in \R^{d\TR}}  \underbrace{ \underbrace{ \tfrac1\nlocc \sum \limits_{\tR=1}^{\TR} \underbrace{  \left(\sum_{j=1}^{\NRt}\flocc_{i,j}(x_{\tR})\right)}_{= \tfrac{\nlocc}{\TR} f_{\tR}(x) } }_{= f(x)}+ \lambda\underbrace{\tfrac{1}{2\TR} \sum_{\tR=1}^{\TR} \| x_{\tR} - \bar{x}\|^2}_{= \psi(x)}}_{= F(x)} + \underbrace{\sum \limits_{\tR=1}^{\TR}R_{\tR}(x_{\tR})}_{\eqdef R(x)},
\end{equation}
where $\NRt$ is the number of data points owned by client $i$ and $\nlocc = \sum_{i=1}^\TR \NRt$.

In order to squeeze a faster convergence rate from minibatch samplings, we will assume that $\flocc_{i,j}$ is smooth with respect to a matrix $\mM_{i,j}$ (instead of scalar $\Lloc_{i,j} = \lambda_{\max} \mM_{i,j}$).

\begin{assumption}\label{as:smooth_sc}
Suppose that $\flocc_{i,j}$ is $\mM_{i,j}$ smooth ($\mM_{i,j} \in \R^{d\times d}, \mM_{i,j}\succ 0 $) and $\mu$ strongly convex  for $1\leq j\leq \NRt,1\leq i \leq \TR$, i.e.
\begin{equation} 
\compactify \flocc_{i,j}(y) + \< \nabla \flocc_{i,j}(y) ,x- y> \leq \flocc_{i,j}(x)\leq    \flocc_{i,j}(y) + \< \nabla \flocc_{i,j}(y) ,x- y> +\tfrac{1}{2} \norm{y - x}_{{\mM_{i,j}}}^2, \quad \forall x,y\in \R^d.  \label{eq:smooth_ass}
\end{equation}
Furthermore, assume that $R_\tR$ is convex for $1\leq \tR \leq \TR$. 
\end{assumption}

Our method (Algotihm~\ref{alg:vr_loc2}) allows for arbitrary aggregation probability (same as Algorithms~\ref{alg:vr_loc_gd}, \ref{alg:L2SGD}), arbitrary sampling of clients (to model the inactive clients) and arbitrary structure/sampling of the local objectives (i.e., arbitrary size of local datasets, arbitrary smoothness structure of each local objective and arbitrary subsampling strategy of each client). Moreover, it allows for the SVRG-like update rule of local control variates $\mJ^k$, which requires less storage given an efficient implementation.

To be specific, each device owns a distribution $\cDR_\tR$ over subsets of $\NRt$. When the aggregation is not performed (with probability $1-\pagg$), a subset of active devices $\CS$ is selected ($\CS$ follows arbitrary fixed distribution $\cDR$). Each of the active clients ($\tR \in \CS$) samples a subset of local  indices $S_\tR \sim\cDR_{\tR}$ and observe the corresponding part of local Jacobian $\mG_\tR(x^k)_{ (:,S_\tR)}$ (where $\mG_{\tR}(x^k)\eqdef [\nabla \flocc_{\tR,1}(x^k), \nabla \flocc_{\tR,2}(x^k), \dots \nabla \flocc_{\tR, \NRt}(x^k)$). When the aggregation is performed (with probability $\pagg$) we evaluate $\bar{x}^k$ and distribute it to each device; using which each device computes a corresponding component of $\lambda\nabla\psi(x^k)$. Those are the key components in constructing the unbiased gradient estimator (without control variates).

It remains to construct control variates and unbiased gradient estimator. If the aggregation is done, we just simply replace the last column of the gradient table. If the aggregation is not done, we have two options -- either keep replacing the columns of the Jacobian table (in such case, we obtain a particular case of SAGA~\citep{defazio2014saga}) or do LSVRG-like replacement~\citep{hofmann2015variance, kovalev2019don} (in such case, the algorithm is a particular case of GJS~\citep{hanzely2019one}, but is not a special case of neither SAGA nor LSVRG. Note that LSVRG-like replacement is preferrable in practice due to a better memory efficiency (one does not need to store the whole gradient table) for the models other than linear.

In order to keep the gradient estimate unbiased, it will be convenient to define vector $\pR \in \R^{\NRt}$ such that for each $j\in  \{ 1,\dots, \NRt\}$ we have $\Prob{ j\in S_\tR} =\pRj$.

Next, to give a tight rate for any given pair of smoothness structure and sampling strategy, we use a standard tool first proposed for the analysis of randomized coordinate descent methods \citep{PCDM, ESO} called \emph{Expected Separable Overapproximation (ESO)} assumption. ESO provides us with smoothness parameters of the objective which ``account'' for the given sampling strategy. 

\begin{assumption}\label{ass:eso}
Suppose that there is $v_i\in \R^{\NRt}$ such for each client we have:
\begin{equation} \label{eq:ESO_saga}
\E{\left\|\sum_{j \in S_\tR} \mM^{\tfrac12}_{i,j} h_{i,j}\right\|^{2}} \leq \sum_{j=1}^{\NRt} \pRj v_{i,j}\left\|h_{i,j}\right\|^{2}, \qquad \forall \; 1\leq \tR \leq \TR, \, \forall  h_{i,j} \in \R^{\NRt}, j\in  \{ 1,\dots, \NRt\}.
\end{equation}
\end{assumption}

 Lastly, denote $\ptg$ to be the probability that worker $\tR$ is active and $\onest \in \R^{\NRt}$ to be the vector of ones.  The resulting algorithm is stated as Algorithm~\ref{alg:vr_loc2}.

\begin{algorithm}[!h]
  \caption{L2SGD++: Loopless Local SGD with Variance Reduction and Partial Participation}
  \label{alg:vr_loc2}
\begin{algorithmic}
\STATE{\bfseries Input: }{$x^0_1, \dots x_{\TR}^0\in\R^{d}$, \# parallel units $\TR$, each of them owns $\NRt$ data points (for $1\leq \tR\leq \TR$), distributions $\cDR_t$ over subsets of $\{ 1, \dots, \NRt\}$, distribution $\cDR$ over subsets of $\{1,2,\dots \TR\}$, aggregation probability $\pagg$, stepsize $\alpha$ }
\STATE $\mJf^0_{\tR}  = 0 \in \R^{d\times \NRt},\mJpsi^0_{\tR}  = 0 \in \R^{d}$ (for $\tR = 1,\dots, \TR$)
  \FOR{$k=0,1,\dotsc$}
  \STATE $\xi = 1$ with probability $p$ and $0$ with probability $1-p$ 
  \IF {$\xi=0$}
 \STATE Sample $\CS\sim \cDR$
 \STATE All {\color{red}Devices} $i\in \CS$:      
         \STATE \hskip .3cm  Sample $ S_\tR \sim \cDR_\tR$; $S_\tR\subseteq \{ 1,\dots, \NRt\}$ (independently on each machine)
        \STATE \hskip .3cm  Observe $\nabla \flocc_{i, j}(x^k_{\tR})$ for all $j \in S_\tR $ 
        \STATE \hskip .3cm $g^k_\tR = \tfrac{1}{\nlocc(1-\pagg)\ptg} \left( \sum_{j \in S_\tR} \pRj^{-1} \left(\nabla \flocc_{i,j}(x^k_{\tR})- \left(\mJf^k_{\tR}\right)_{:,j}\right)  \right) + \tfrac1\nlocc\mJf^k_{\tR} \onest  + \TR^{-1}\mJpsi^k_{\tR}  $
        \STATE \hskip .3cm $x^{k+1}_\tR =  \proxt (x^{k}_\tR - \alpha g^k_\tR)$
        \STATE \hskip .3cm For all $j\in \{ 1,\dots, \NRt\}$ set $\mJ^{k+1}_{:,j} =
        \begin{cases}
        \begin{cases}
        \nabla \flocc_{i,j}(x^k_\tR) & \text{if} \quad  j\in S_\tR \\
        \mJ^{k}_{:,j} & \text{otherwise}
        \end{cases} & \text{if} \quad \mathrm{SAGA} \\
                \begin{cases}
        \nabla \flocc_{i,j}(x^k_\tR); & \text{w. p.} \quad \psvrg_\tR \\
        \mJ^{k}_{:,j} & \text{otherwise}
        \end{cases} & \text{if} \quad \mathrm{L-SVRG}
         \end{cases} $
         \STATE \hskip .3cm Set $\mJpsi^{k+1}_{\tR}=\mJpsi^k_{\tR}$
 \STATE All {\color{red}Devices} $i \not \in \CS$:      
        \STATE \hskip .3cm $g^k_\tR = \tfrac1\nlocc\mJf^k_{\tR} \onest  + \TR^{-1}\mJpsi^k_{\tR}  $
        \STATE \hskip .3cm $x^{k+1}_\tR =  \proxt (x^{k}_\tR - \alpha g^k_\tR)$
         \STATE \hskip .3cm Set $\mJ^{k+1}_{\tR} = \mJ^{k}_{\tR}, \mJpsi^{k+1}_{\tR}= \mJpsi^k_{\tR}$
 \ELSE 
\STATE {\color{blue}Master} computes the average $\bar{x}^k = \tfrac{1}{n}\sum_{i=1}^n x_i^k$
\STATE {\color{blue}Master} does for all $i=1,\dots,n$:            
          \STATE \hskip .3cm $g^k_\tR=  \pagg^{-1}  \lambda (x^k_{\tR} - \bar{x}^k) -(\pagg^{-1} -1) \TR^{-1}\mJpsi^k_{\tR} + \tfrac1\nlocc \mJf_{\tR}^k \onest$
 		\STATE \hskip .3cm  Set $x_{\tR}^{k+1} = \proxt\left( x_{\tR}^k  - \alpha g^k_\tR \right)$
 		\STATE \hskip .3cm Set $\mJpsi^{k+1}_{\tR} =  \lambda (x_{\tR}^k - \bar{x}^k )$, $\mJf^{k+1}_{\tR} = \mJf^{k}_{\tR} $
  \ENDIF
  \ENDFOR
\end{algorithmic}
\end{algorithm}

Next, Theorems~\ref{thm:saga_sam} and~\ref{thm:svrg_sam} present convergence rate of Algorithm~\ref{alg:vr_loc2} (SAGA and SVRG variant, respectively).

\begin{theorem}\label{thm:saga_sam}  Suppose that Assumptions~\ref{as:smooth_sc} and~\ref{ass:eso} hold. 
Let 
\[\alpha = \min \left \{ \min_{j\in  \{ 1,\dots, \NRt\}, 1\leq \tR\leq \TR} \tfrac{\nlocc (1-\pagg) \pRj \ptg}{ 4 v_j +\nlocc\tfrac{\mu}{\TR}} , \tfrac{\TR\pagg}{4\lambda+ \mu} \right\}.\] 
Then the iteration complexity of Algorithm~\ref{alg:vr_loc2} (SAGA option)  is 
\[\max \left\{ \max_{j\in  \{ 1,\dots, \NRt\}, 1\leq  \tR\leq \TR} \left(  \tfrac{4v_j \tfrac{\TR}{\nlocc} + \mu}{\mu(1-\pagg) \pRj\ptg}\right), \tfrac{4\lambda +\mu}{\pagg\mu}  \right \}\log\tfrac1\varepsilon.\] 
\end{theorem}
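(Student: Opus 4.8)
The plan is to obtain Theorem~\ref{thm:saga_sam} by realizing Algorithm~\ref{alg:vr_loc2} (SAGA option) as a particular instance of the Generalized Jacobian Sketching (GJS) scheme of \citep{hanzely2019one} and then invoking its linear-convergence theorem — the same route that produced Theorem~\ref{thm:basic} from importance-sampling SAGA, only with a considerably richer sketch. On the product space $\R^{d\TR}$ one writes the smooth part of~\eqref{eq:problem_thispaper} as a (re-weighted) finite sum whose summands are the $\nlocc$ data-point functions $\flocc_{i,j}$, each viewed as a function of the whole iterate that reads only the $i$-th block, together with the single quadratic $\lambda\psi$; the separable nonsmooth $R$ is absorbed by the step $\proxt$. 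The stochastic sketch of the corresponding Jacobian is the nested one the algorithm performs: with probability $\pagg$ it recomputes the $\psi$-part of the Jacobian exactly (a ``full'' sketch of that part, which is why $\mJpsi^{k+1}_{\tR}=\lambda(x_{\tR}^k-\bar x^k)$), and with probability $1-\pagg$ it draws the active set $\CS\sim\cDR$ and, on each active client, an index set $S_{\tR}\sim\cDR_{\tR}$, recomputing exactly the columns $\{\nabla\flocc_{i,j}(x_{\tR}^k):j\in S_{\tR}\}$ of the local Jacobian — inactive clients get a zero sketch on their blocks, and their branch in Algorithm~\ref{alg:vr_loc2} is precisely the GJS update under a zero sketch. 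The first thing to verify is that the direction $g^k_{\tR}$ assembled in the algorithm is exactly the GJS gradient estimator for this sketch: the normalizations $\tfrac{1}{\nlocc(1-\pagg)\ptg}$, $\pRj^{-1}$ and $\tfrac{1}{\TR\pagg}$ are arranged so that $\mathbb{E}[g^k\mid x^k]=\nabla F(x^k)$ independently of the Jacobian tables.

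Next I would check the hypotheses of the GJS theorem. Matrix smoothness of each $\flocc_{i,j}$ with matrix $\mM_{i,j}$ and $\mu$-strong convexity of the $f_i$ are Assumption~\ref{as:smooth_sc}; $\psi$ is convex and $\tfrac1\TR$-smooth on each block (Section~\ref{bd798gsd}), so $\lambda\psi$ carries smoothness scale $\lambda/\TR$; and $F$ is $\tfrac\mu\TR$-strongly convex on $\R^{d\TR}$. For the $f$-part of the sketch, the ESO inequality~\eqref{eq:ESO_saga} of Assumption~\ref{ass:eso}, combined with the client-activation probabilities $\ptg$ and the coordinate inclusion probabilities $\pRj$ (and the $\tfrac{\TR}{\nlocc}$ reweighting of $\flocc_{i,j}$ inside $f$), produces an effective expected-smoothness constant of order $v_{i,j}\big/\bigl(\nlocc(1-\pagg)\pRj\ptg\bigr)$ for column $j$ on client $i$, while the exact aggregation sketch contributes a term of order $\lambda/(\TR\pagg)$. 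Feeding these constants and the strong-convexity modulus $\mu/\TR$ into the GJS stepsize bound yields precisely
\[
\alpha=\min\left\{\min_{j,\,i}\tfrac{\nlocc(1-\pagg)\pRj\ptg}{4v_{i,j}+\nlocc\mu/\TR},\ \tfrac{\TR\pagg}{4\lambda+\mu}\right\},
\]
the factor $4$ being the usual Lyapunov constant in the GJS analysis — the same $4$ that appears in the Lyapunov function $\Upsilon$ in the proof of Theorem~\ref{thm:basic}.

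With this $\alpha$, the GJS theorem delivers a Lyapunov recursion of exactly the type used for Theorem~\ref{thm:basic}:
\[
\mathbb{E}\!\left[\,\bigl(F(x^k)-F(x^\star)\bigr)+\tfrac{\alpha}{2}\Upsilon\bigl(\mJf^k,\mJpsi^k\bigr)\right]\le\bigl(1-\alpha\tfrac\mu\TR\bigr)^{k}\left(\bigl(F(x^0)-F(x^\star)\bigr)+\tfrac{\alpha}{2}\Upsilon\bigl(\mJf^0,\mJpsi^0\bigr)\right),
\]
with $\Upsilon$ the appropriate weighted sum of squared Jacobian errors and $x^\star$ the minimizer of~\eqref{eq:problem_thispaper}. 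Solving $\bigl(1-\alpha\tfrac\mu\TR\bigr)^{k}\le\varepsilon$ gives $k=\tfrac{\TR}{\alpha\mu}\log\tfrac1\varepsilon$; substituting the above $\alpha$ and using $\tfrac{\TR}{\nlocc}\cdot\tfrac{4v_{i,j}+\nlocc\mu/\TR}{\mu(1-\pagg)\pRj\ptg}=\tfrac{4v_{i,j}\TR/\nlocc+\mu}{\mu(1-\pagg)\pRj\ptg}$ returns exactly the announced iteration complexity $\max\bigl\{\max_{j,i}\tfrac{4v_{i,j}\TR/\nlocc+\mu}{\mu(1-\pagg)\pRj\ptg},\,\tfrac{4\lambda+\mu}{\pagg\mu}\bigr\}\log\tfrac1\varepsilon$.

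The main obstacle is the reduction itself rather than any analytic estimate: one must build the product-space reformulation and the composite (aggregate-or-subsample, then client-then-coordinate) sketch carefully enough that (i) $g^k_{\tR}$ is verifiably the GJS estimator with all normalizations correct, (ii) the inactive-client branch is a bona fide zero sketch, and (iii) the quadratic $\lambda\psi$ is assigned the right smoothness matrix so that its effective expected-smoothness constant is $\lambda/(\TR\pagg)$ and not a multiple of it — this is exactly where the numerical constants (the $4$'s, $4v_{i,j}+\nlocc\mu/\TR$, $4\lambda+\mu$) get pinned down. Once the instance is identified, the rate is immediate from \citep{hanzely2019one}, and the SVRG option is handled identically, with the SAGA column-replacement rule replaced by the L-SVRG one.
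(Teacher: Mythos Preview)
Your proposal is correct and follows essentially the same route as the paper: reformulate~\eqref{eq:problem_thispaper} as an $(\nlocc{+}1)$-term finite sum (the $\nlocc$ data-point functions plus the aggregation term $\lambda\psi$), identify Algorithm~\ref{alg:vr_loc2} as the GJS scheme of \citep{hanzely2019one} with the nested sketch you describe, verify the GJS stepsize conditions via the ESO bound~\eqref{eq:ESO_saga}, and read off the contraction rate $1-\alpha\mu/\TR$. The only cosmetic discrepancy is that the GJS Lyapunov function actually invoked in the paper is $\norm{x^k-x^\star}^2$-based (their Theorem~\ref{thm:main}) rather than $(F(x^k)-F(x^\star))$-based as you wrote by analogy with Theorem~\ref{thm:basic}; this does not affect the contraction factor or the resulting complexity.
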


\begin{theorem}\label{thm:svrg_sam} Suppose that Assumptions~\ref{as:smooth_sc} and~\ref{ass:eso} hold. Let 
\[\alpha = \min \left \{ \min_{j\in  \{ 1,\dots, \NRt\}, 1\leq \tR\leq \TR} \tfrac{\nlocc (1-\pagg)\ptg }{ 4 \tfrac{v_j}{\pRj} + \nlocc\tfrac{\mu}{\TR}
 \psvrg_\tR^{-1}} , \tfrac{\pagg\TR}{4\lambda+ \mu} \right\}.\]
 Then the iteration complexity of Algorithm~\ref{alg:vr_loc2} (LSVRG option)  is
 \[\max \left\{ \max_{j\in  \{ 1,\dots, \NRt\}, 1\leq \tR\leq \TR} \left(  \tfrac{4v_j\tfrac{\TR}{\nlocc\pRj}  + \mu
 \psvrg_\tR^{-1}}{\ptg \mu
 (1-\pagg) }\right) , \tfrac{4\lambda +\mu}{\pagg\mu}  \right \}\log\tfrac1\varepsilon.\]
\end{theorem}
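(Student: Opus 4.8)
\textbf{High-level strategy.}
The plan is to recognize Algorithm~\ref{alg:vr_loc2} (the L-SVRG option) as a special instance of the general stochastic gradient framework with Jacobian-type variance reduction analyzed in~\citep{hanzely2019one} (the GJS/``one method to rule them all'' framework), and then to simply invoke that framework's convergence theorem with the appropriate problem-specific constants. This is the same route used for Theorem~\ref{thm:basic} (via SAGA with importance sampling) and for Theorem~\ref{thm:saga_sam}, so the real work is in (i) casting the problem~\eqref{eq:problem_thispaper} into the required form, (ii) identifying the sampling vector and the smoothness/ESO constants, and (iii) verifying the stepsize and Lyapunov-function bookkeeping that produces the stated iteration complexity.

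\textbf{Step-by-step.}
First I would rewrite $F(x)=f(x)+\lambda\psi(x)$ as a single finite-sum-plus-separable-penalty object over $\R^{d\TR}$: the $f$ part contributes the $\nlocc$ functions $\flocc_{i,j}$, each acting on the block $x_i$, and the $\lambda\psi$ part contributes one extra ``function'' whose stochastic gradient is realized through the aggregation branch of the algorithm. This matches the two-term structure (probability $1-\pagg$ for the local finite-sum part, probability $\pagg$ for the penalty part) that the framework handles. Second, I would record the relevant constants: each $\flocc_{i,j}$ is $\mM_{i,j}$-smooth (Assumption~\ref{as:smooth_sc}), the ESO vectors $v_i$ from Assumption~\ref{ass:eso} supply the ``expected smoothness'' constants $v_j$ for the local sampling $\cDR_\tR$, the activation probability of device $\tR$ is $\ptg$, the marginal probability that index $j$ on device $i$ is sampled is $\pRj$, the L-SVRG refresh probability is $\psvrg_\tR$, and $F$ is $\tfrac{\mu}{\TR}$-strongly convex while $\lambda\psi$ (doubled, as in the proof of Theorem~\ref{thm:basic}) contributes the $4\lambda+\mu$ term. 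Third, I would choose the Lyapunov function as the sum of $\norm{x^k-x(\lambda)}^2$ (suitably scaled) plus the two Jacobian-deviation terms $\norm{\mJf_\tR^k-\nabla f_\tR(\cdot)}^2$ and $\norm{\mJpsi_\tR^k-\lambda(x_\tR(\lambda)-\bar x(\lambda))}^2$, exactly paralleling $\Upsilon$ in the proof of Theorem~\ref{thm:basic} but with the SAGA-style column replacement replaced by the L-SVRG-style random full refresh. Fourth, plugging the stepsize
\[
\alpha = \min \left \{ \min_{j, \tR} \tfrac{\nlocc (1-\pagg)\ptg }{ 4 \tfrac{v_j}{\pRj} + \nlocc\tfrac{\mu}{\TR}\psvrg_\tR^{-1}} , \tfrac{\pagg\TR}{4\lambda+ \mu} \right\}
\]
into the generic rate $(1-\alpha\mu/\TR)^k$ and taking reciprocals yields the two competing terms in the $\max$, giving the advertised iteration complexity.

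\textbf{Main obstacle.}
The delicate point is the L-SVRG branch: unlike SAGA, the Jacobian table is refreshed \emph{in full} with probability $\psvrg_\tR$ rather than column-by-column, so the contraction of the control-variate term in the Lyapunov function is governed by $\psvrg_\tR$ rather than by the per-column sampling probability $\pRj$. This is why the complexity carries the factor $\psvrg_\tR^{-1}$ multiplying $\mu$ (in place of the $\pRj^{-1}$-type factor in Theorem~\ref{thm:saga_sam}), and one must be careful that the cross-terms between the $x$-deviation and the Jacobian deviation still close up — i.e. that the stepsize restriction $\alpha \le \nlocc(1-\pagg)\ptg / (4 v_j/\pRj + \nlocc\mu\psvrg_\tR^{-1}/\TR)$ is exactly what is needed for the coefficient bookkeeping to yield a clean single-parameter contraction. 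I expect the cleanest way to discharge this is to cite the L-SVRG instance of the GJS theorem in~\citep{hanzely2019one} with the stochastic-gradient/sampling data identified above, rather than re-deriving the descent inequality from scratch; the proof will then be only a few lines, mirroring the proof of Theorem~\ref{thm:basic}.
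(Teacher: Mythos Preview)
Your proposal is correct and follows essentially the same route as the paper: cast Algorithm~\ref{alg:vr_loc2} (L-SVRG option) as an instance of the GJS framework of~\citep{hanzely2019one}, identify the sampling operators and smoothness/ESO constants, and read off the rate from the generic theorem. One small inaccuracy: the paper does not reuse the ``doubling'' 2-sum reformulation from the proof of Theorem~\ref{thm:basic}; instead it writes $F$ as an $(N+1)$-sum by setting $\floc_{\map(i,j)}=\tfrac{N+1}{N}\flocc_{i,j}$ for the $N$ local pieces and $\floc_{N+1}=(N+1)\lambda\psi$ as the single extra term, which is why the factor $4\lambda+\mu$ appears directly. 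The paper then spells out the operators $\cU$ and $\cS$, chooses the Lyapunov weighting $\cB=\diag(b)$ with $b_{N+1}^2=\tfrac{1}{2(N+1)\pagg}$ and $b_{\map(i,j)}^2=\tfrac{1}{2(N+1)(1-\pagg)\psvrg_\tR\ptg}$, and verifies the two scalar inequalities that Theorem~5.2 of~\citep{hanzely2019one} requires---so there is slightly more bookkeeping than a one-line citation, but nothing beyond what you have already anticipated in your ``main obstacle'' paragraph.
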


\begin{remark}
Algotihm~\ref{alg:vr_loc_gd} is a special case of Algorithm~\ref{alg:L2SGD} which is in turn special case of Algorithm~\ref{alg:vr_loc2}. Similarly, 
Theorem~\ref{alg:vr_loc_gd} is a special case of Theorem~\ref{thm:saga_simple} which is again special case of Theorem~\ref{thm:saga_sam}.
\end{remark}

\subsection{Local stochastic algorithms}

In this section, we present two more algorithms -- Local SGD with partial variance reduction (Algorithm~\ref{alg:lsgd_partial}) and Local SGD without variance reduction (Algorithm~\ref{alg:lsgd_none}). While Algorithm~\ref{alg:lsgd_none} uses no control variates at all (thus is essentially Algorithm~\ref{alg:L2GD} where local gradient descent steps are replaced with local SGD steps), Algorithm~\ref{alg:lsgd_partial} constructs control variates for $\psi$ only, resulting in locally drifted SGD algorithm (with the constant drift between each consecutive rounds of communication). While we do not present the convergence rates of the methods here, we shall notice they can be easily obtained using the framework from~\citep{sigma_k}.

\begin{algorithm}[!h]
  \caption{Loopless Local SGD (L2SGD) }
  \label{alg:lsgd_none}
\begin{algorithmic}
\STATE{\bfseries Input: }{$x^0_1 = \dots = x_{n}^0\in\R^{d}$, stepsize $\alpha$, probability $p$ }
  \FOR{$k=0,1,\dotsc$}
  \STATE $\xi = 1$ with probability $p$ and $0$ with probability $1-p$ 
  \IF {$\xi=0$}
 \STATE All {\color{red}Devices} $i=1,\dots,n$:      
          \STATE \hskip .3cm Sample $j \in \{1,\dots, m\}$ (uniformly at random)
       \STATE \hskip .3cm $g^k_\tR = \tfrac{1}{\TR(1-\pagg)} \left(\nabla \flocc_{i,j}(x^k_{\tR}) \right)$ 
         \STATE \hskip .3cm $x^{k+1}_\tR = x^{k}_\tR - \alpha g^k_\tR$
 \ELSE 
  \STATE {\color{blue}Master} computes the average $\bar{x}^k = \tfrac{1}{n}\sum_{i=1}^n x_i^k$
\STATE {\color{blue}Master} does for all $i=1,\dots,n$:       
   \STATE \hskip .3cm $g^k_\tR=   \tfrac{\lambda }{\TR \pagg} (x^k_{\tR} - \bar{x}^k) $
 		\STATE  \hskip .3cm Set $x_{\tR}^{k+1} = x_{\tR}^k  - \alpha g^k_\tR $
    \ENDIF
      \ENDFOR
\end{algorithmic}
\end{algorithm}

\begin{algorithm}[!h]
  \caption{Loopless Local SGD with partial variance reduction (L2SGD2)}  
  \label{alg:lsgd_partial}
\begin{algorithmic}
\STATE{\bfseries Input: }{$x^0_1 = \dots = x_{n}^0\in\R^{d}$, stepsize $\alpha$, probability $p$ }
\STATE $\mJpsi^0_{\tR}  = 0 \in \R^{d}$ (for $\tR = 1,\dots, \TR$)
  \FOR{$k=0,1,\dotsc$}
  \STATE $\xi = 1$ with probability $p$ and $0$ with probability $1-p$ 
  \IF {$\xi=0$}
 \STATE All {\color{red}Devices} $i=1,\dots,n$:      
          \STATE \hskip .3cm Sample $j \in \{1,\dots, m\}$ (uniformly at random)
       \STATE \hskip .3cm $g^k_\tR = \tfrac{1}{\TR(1-\pagg)} \left(\nabla \flocc_{i,j}(x^k_{\tR}) \right) +   \tfrac{1}{\TR}  \mJpsi^k_{\tR}$ 
         \STATE \hskip .3cm $x^{k+1}_\tR = x^{k}_\tR - \alpha g^k_\tR$
        \STATE  \hskip .3cm Set  $\mJpsi^{k+1}_{\tR}= \mJpsi^k_{\tR}$
 \ELSE 
  \STATE {\color{blue}Master} computes the average $\bar{x}^k = \tfrac{1}{n}\sum_{i=1}^n x_i^k$
\STATE {\color{blue}Master} does for all $i=1,\dots,n$:       
   \STATE \hskip .3cm $g^k_\tR=   \tfrac{\lambda }{\TR \pagg} (x^k_{\tR} - \bar{x}^k) - \tfrac{\pagg^{-1} -1}{ \TR}\mJpsi^k_{\tR} $
 		\STATE  \hskip .3cm Set $x_{\tR}^{k+1} = x_{\tR}^k  - \alpha g^k_\tR $
 		\STATE  \hskip .3cm Set $\mJpsi^{k+1}_{\tR} =  \lambda (x_{\tR}^k - \bar{x}^k )$
    \ENDIF
      \ENDFOR
\end{algorithmic}
\end{algorithm}

\clearpage 
\section{Missing Lemmas and Proofs}

\subsection{Gradient and Hessian of $\psi$} \label{sec:grad}
\begin{lemma}
Let $\mI$ be the $d\times d$ identity matrix and $\mI_n$ be $n\times n$ identity matrix. Then, we have
\[
\nabla^2 \psi(x) =  \tfrac1n\left(\mI_n - \tfrac{1}{n}ee^\top \right) \otimes \mI 
\qquad 
\mathrm{and} 
\qquad
\nabla \psi(x) = \tfrac1n \left( x- \begin{pmatrix} \bar{x}\\ \vdots \\ \bar{x}\\ \bar{x} \\\bar{x} \\ \vdots \\ \bar{x} \end{pmatrix} \right).
\]
Furthermore, $L_{\psi} = \tfrac1n$.
\end{lemma}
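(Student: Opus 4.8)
The plan is to compute the gradient and Hessian of $\psi$ directly from its definition $\psi(x) = \tfrac{1}{2n}\sum_{i=1}^n \norm{x_i - \bar{x}}^2$ and then read off the smoothness constant from the spectrum of the Hessian. First I would expand $\psi$ using the identity $\sum_{i=1}^n \norm{x_i - \bar{x}}^2 = \sum_{i=1}^n \norm{x_i}^2 - n\norm{\bar{x}}^2$, which writes $\psi$ as a quadratic form $\psi(x) = \tfrac{1}{2}x^\top \mH x$ where $\mH$ acts on $\R^{nd}$. Since $\bar{x} = \tfrac1n (e^\top \otimes \mI) x$ with $e \in \R^n$ the all-ones vector, one gets $n\norm{\bar{x}}^2 = \tfrac1n x^\top (ee^\top \otimes \mI) x$, so $\mH = \tfrac1n\left(\mI_n \otimes \mI - \tfrac1n ee^\top \otimes \mI\right) = \tfrac1n\left(\mI_n - \tfrac1n ee^\top\right)\otimes \mI$, which is the claimed Hessian. (A cleaner route: differentiate twice, noting $\nabla^2\psi$ is constant because $\psi$ is quadratic.)

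Next I would compute the gradient. Differentiating $\psi(x) = \tfrac{1}{2n}\sum_i \norm{x_i - \bar{x}}^2$ with respect to the block $x_j$, and using that $\partial \bar{x}/\partial x_j = \tfrac1n \mI$, the chain rule gives $(\nabla\psi(x))_j = \tfrac1n(x_j - \bar{x}) - \tfrac1n \cdot \tfrac1n\sum_i (x_i - \bar{x})$. The second term vanishes since $\sum_i(x_i - \bar{x}) = 0$, leaving $(\nabla\psi(x))_j = \tfrac1n(x_j - \bar{x})$, which in stacked form is exactly $\tfrac1n\left(x - (\bar{x};\dots;\bar{x})\right)$ as claimed. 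Alternatively this follows immediately from the Hessian formula via $\nabla\psi(x) = \mH x$ together with $(ee^\top \otimes \mI)x = (n\bar{x}; \dots; n\bar{x})$.

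Finally, for $L_\psi = \tfrac1n$ I would note that $\psi$ is convex with constant Hessian $\mH$, so the smoothness constant equals $\lambda_{\max}(\mH)$. The matrix $\mI_n - \tfrac1n ee^\top$ is the orthogonal projector onto the complement of $\Span(e)$, hence has eigenvalues $0$ (once) and $1$ ($n-1$ times); tensoring with $\mI$ preserves these eigenvalues. Therefore $\lambda_{\max}(\mH) = \tfrac1n \cdot 1 = \tfrac1n$, giving $L_\psi = \tfrac1n$.

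This proof is essentially all routine linear algebra; there is no real obstacle. The only point requiring a moment of care is the bookkeeping with the Kronecker product $\otimes \mI$ and making sure the block structure of $\R^{nd} = (\R^d)^n$ is handled consistently (i.e., that $\bar x = \tfrac1n(e^\top\otimes\mI)x$ and that the projector structure survives tensoring). Recognizing $\mI_n - \tfrac1n ee^\top$ as a projector is the one small insight that makes the eigenvalue computation — and hence the identification of $L_\psi$ — immediate.
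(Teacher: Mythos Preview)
Your proof is correct and arrives at the same conclusions as the paper, but the mechanics differ in a couple of places worth noting. The paper introduces explicit block-selector matrices $\mQ_i \eqdef [\mO,\dots,\mI,\dots,\mO]$ and $\mQ \eqdef [\mI,\dots,\mI]$, writes $\psi(x)=\tfrac{1}{2n}\sum_i\|(\mQ_i-\tfrac1n\mQ)x\|^2$, and then expands $\tfrac1n\sum_i(\mQ_i-\tfrac1n\mQ)^\top(\mQ_i-\tfrac1n\mQ)$ term by term to obtain the Hessian; the gradient is obtained the same way with a lengthy block-vector calculation. You instead use the variance identity $\sum_i\|x_i-\bar x\|^2=\sum_i\|x_i\|^2-n\|\bar x\|^2$ together with $\bar x=\tfrac1n(e^\top\otimes\mI)x$ to read off the quadratic form in one line, and for the gradient you differentiate blockwise and observe that the cross term $\tfrac1{n^2}\sum_i(x_i-\bar x)$ vanishes. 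For the smoothness constant, the paper argues that $\mI_n-\tfrac1n ee^\top$ is circulant and invokes the Kronecker eigenvalue rule; you instead recognize it as the orthogonal projector onto $\Span(e)^\perp$, which is a cleaner way to see the spectrum $\{0,1\}$. Both routes are elementary; yours is shorter and avoids the bookkeeping with the $\mQ_i$ matrices, while the paper's version is more explicit about the block structure.
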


\begin{proof}

Let $\mO$ the $d\times d$ zero matrix
and let \[\mQ_i \eqdef [\underbrace{\mO, \dots, \mO}_{i-1}, \mI, \underbrace{\mO, \dots, \mO}_{n-i}] \in \R^{d\times dn}\]
and $\mQ \eqdef [\mI, \dots, \mI] \in \R^{d\times dn}$. Note that $x_i= \mQ_i x$, and $\bar{x} = \tfrac{1}{n} \mQ x$. So,
\[\psi(x) = \tfrac{1}{2n}\sum_{i=1}^n \norm{\mQ_ix -\tfrac{1}{n} \mQ x }^2 = \tfrac{1}{2n}\sum_{i=1}^n \norm{\left(\mQ_i -\tfrac{1}{n} \mQ\right) x }^2 .\]
The Hessian of $\psi$ is
 \begin{eqnarray*}\nabla^2 \psi(x) &=&
 \tfrac1n \sum_{i=1}^n \left(\mQ_i -\tfrac{1}{n} \mQ\right)^\top \left(\mQ_i -\tfrac{1}{n} \mQ\right)\\
&=&  \tfrac1n \sum_{i=1}^n \left(\mQ_i^\top \mQ_i  -\tfrac{1}{n}\mQ_i^\top \mQ  - \tfrac{1}{n}\mQ^\top \mQ_i +\tfrac{1}{n^2}\mQ^\top \mQ\right)\\
&=&  \tfrac1n \sum_{i=1}^n \mQ_i^\top \mQ_i  - \tfrac{1}{n}\sum_{i=1}^n\tfrac{1}{n}\mQ_i^\top \mQ  - \tfrac{1}{n}\sum_{i=1}^n \tfrac{1}{n}\mQ^\top \mQ_i + \tfrac{1}{n}\sum_{i=1}^n\tfrac{1}{n^2}\mQ^\top \mQ\\
&=&  \tfrac1n\sum_{i=1}^n \mQ_i^\top \mQ_i  - \tfrac{1}{n^2}\mQ^\top \mQ
 \end{eqnarray*}
 and by plugging in for $\mQ$ and $\mQ_i$, we get 
 \begin{eqnarray*}\nabla^2 \psi(x) &=&
\tfrac1n \begin{pmatrix} \left(1-\tfrac{1}{n}\right)\mI & -\tfrac{1}{n} \mI & -\tfrac{1}{n} \mI   & \cdots & -\tfrac{1}{n} \mI    \\
 -\tfrac{1}{n} \mI & \left(1-\tfrac{1}{n}\right)\mI & -\tfrac{1}{n} \mI   & \cdots &-\tfrac{1}{n} \mI    \\
  -\tfrac{1}{n} \mI & -\tfrac{1}{n} \mI  & \left(1-\tfrac{1}{n}\right)\mI   & \cdots & -\tfrac{1}{n} \mI    \\
   \vdots & \vdots   & \vdots   & &  \vdots     \\
     -\tfrac{1}{n} \mI & -\tfrac{1}{n} \mI  &  -\tfrac{1}{n} \mI  & \cdots &  \left(1-\tfrac{1}{n}\right)\mI 
 \end{pmatrix}\\
 &=& \tfrac1n \begin{pmatrix} \left(1-\tfrac{1}{n}\right)& -\tfrac{1}{n}  & -\tfrac{1}{n}  & \cdots & -\tfrac{1}{n}   \\
 -\tfrac{1}{n}  & \left(1-\tfrac{1}{n}\right)& -\tfrac{1}{n}   & \cdots &-\tfrac{1}{n}  \\
  -\tfrac{1}{n} & -\tfrac{1}{n}  & \left(1-\tfrac{1}{n}\right)   & \cdots & -\tfrac{1}{n}    \\
   \vdots & \vdots   & \vdots   & &  \vdots     \\
     -\tfrac{1}{n} & -\tfrac{1}{n}  &  -\tfrac{1}{n} & \cdots &  \left(1-\tfrac{1}{n}\right)
 \end{pmatrix} \otimes \mI\\
 &=& \tfrac1n\left(\mI_n - \tfrac{1}{n}ee^\top \right) \otimes \mI.
 \end{eqnarray*}
 Notice that $\mI_n - \tfrac{1}{n}ee^\top$ is a circulant matrix, with eigenvalues $1$ (multiplicity~$n-1$) and $0$ (multiplicity~1). Since the eigenvalues of a Kronecker product of two matrices are the products of pairs of  eigenvalues of the these matrices, we have
\[\lambda_{\max} (\nabla^2 \psi(x)) = \lambda_{\max} \left( \tfrac1n\left(\mI_n - \tfrac{1}{n}ee^\top \right) \otimes \mI \right)= \tfrac1n \lambda_{\max}\left(\mI_n - \tfrac{1}{n}ee^\top \right) =  \tfrac1n.\]
So, $L_{\psi}= \tfrac1n$.

The gradient of $\psi$ is given by \begin{eqnarray*}\nabla \psi(x) &=& 
 \tfrac1n\sum_{i=1}^n \left(\mQ_i -\tfrac{1}{n} \mQ\right)^\top \left(\mQ_i -\tfrac{1}{n} \mQ\right) x \\
&=& \tfrac1n \sum_{i=1}^n \left(\mQ_i^\top \mQ_i  -\tfrac{1}{n}\mQ_i^\top \mQ  - \tfrac{1}{n}\mQ^\top \mQ_i +\tfrac{1}{n^2}\mQ^\top \mQ\right)x\\
&=& \tfrac1n\sum_{i=1}^n  \left[\begin{pmatrix} 0\\ \vdots \\ 0\\ x_i \\0 \\ \vdots \\ 0 \end{pmatrix} - \begin{pmatrix} 0\\ \vdots \\ 0\\ \bar{x} \\0 \\ \vdots \\ 0 \end{pmatrix} -  \begin{pmatrix} x_i/n\\ \vdots \\ x_i/n\\ x_i/n \\x_i/n \\ \vdots \\ x_i/n \end{pmatrix} + \begin{pmatrix} \bar{x}/n\\ \vdots \\ \bar{x}/n\\ \bar{x}/n \\\bar{x}/n \\ \vdots \\ \bar{x}/n \end{pmatrix} \right]\\
&=& \tfrac1n \left( \sum_{i=1}^n  \begin{pmatrix} 0\\ \vdots \\ 0\\ x_i \\0 \\ \vdots \\ 0 \end{pmatrix} - \sum_{i=1}^n \begin{pmatrix} 0\\ \vdots \\ 0\\ \bar{x} \\0 \\ \vdots \\ 0 \end{pmatrix} - \sum_{i=1}^n  \begin{pmatrix} x_i/n\\ \vdots \\ x_i/n\\ x_i/n \\x_i/n \\ \vdots \\ x_i/n \end{pmatrix} + \sum_{i=1}^n \begin{pmatrix} \bar{x}/n\\ \vdots \\ \bar{x}/n\\ \bar{x}/n \\\bar{x}/n \\ \vdots \\ \bar{x}/n \end{pmatrix} \right) \\
&=&  \tfrac1n \left(x - \begin{pmatrix} \bar{x}\\ \vdots \\ \bar{x}\\ \bar{x} \\\bar{x} \\ \vdots \\ \bar{x} \end{pmatrix} -\begin{pmatrix} \bar{x}\\ \vdots \\ \bar{x}\\ \bar{x} \\\bar{x} \\ \vdots \\ \bar{x} \end{pmatrix}
+\begin{pmatrix} \bar{x}\\ \vdots \\ \bar{x}\\ \bar{x} \\\bar{x} \\ \vdots \\ \bar{x} \end{pmatrix} \right) \\
&=&   \tfrac1n \left( x- \begin{pmatrix} \bar{x}\\ \vdots \\ \bar{x}\\ \bar{x} \\\bar{x} \\ \vdots \\ \bar{x} \end{pmatrix} \right).
\end{eqnarray*}

\end{proof}
\subsection{Proof of Theorem~\ref{thm:penalty}}

For any $\lambda, \theta \geq 0$ we have
\begin{eqnarray}  f(x(\lambda)) + \lambda \psi(x(\lambda)) &\leq &  f(x(\theta)) + \lambda \psi(x(\theta)) \label{eq:first}\\
 f(x(\theta)) + \theta \psi(x(\theta)) &\leq &  f(x(\lambda)) + \theta \psi(x(\lambda)).\label{eq:second}
\end{eqnarray}
By adding  inequalities \eqref{eq:first} and \eqref{eq:second}, we get
\[ ( \theta - \lambda)  (\psi(x(\lambda)) - \psi(x(\theta))) \geq 0,\]
which means that $\psi(x(\lambda))$ is decreasing in $\lambda$. Assume $\lambda \geq \theta$. From the \eqref{eq:second} we get
\[f(x(\lambda )) \geq f(x(\theta)) + \theta  (\psi(x(\theta)) - \psi(x(\lambda))) \geq  f(x(\theta)), \]
where the last inequality follows since $\theta\geq 0$ and since $\psi(x(\theta)) \geq \psi(x(\lambda)) $. So, $f(x(\lambda))$ is increasing.

Notice that since $\psi$ is a non-negative function and since $x(\lambda)$ minimizes $F$ and $\psi(x(\infty))=0$, we have
\[ f(x(0)) \leq f(x(\lambda)) \leq f(x(\lambda)) + \lambda \psi(x(\lambda)) \leq f(x(\infty)) ,\]
which implies \eqref{eq:dissimilarity} and \eqref{eq:bifg9dd8}.

\subsection{Proof of Theorem~\ref{thm:characterization}}
The equation $\nabla F(x(\lambda))=0$ can be equivalently written as
\[\nabla f_i(x_i(\lambda)) +\lambda (x_i(\lambda) - \overline{x}(\lambda)) = 0, \qquad i=1,2,\dots,n,\]
which is identical to~\eqref{eq:step_from_mean}. Averaging these identities over $i$, we get
$$ \overline{x}(\lambda) =  \overline{x}(\lambda) - \tfrac{1}{\lambda } \tfrac{1}{n}\sum_{i=1}^n \nabla f_i(x_i(\lambda)),$$ which implies
\[
\sum_{i=1}^n \nabla f_i(x_i(\lambda)) = 0.
\] 
Further, we have 
\[\psi(x(\lambda)) = 
\tfrac{1}{2n}\sum_{i=1}^n \norm{x_i(\lambda)-\overline{x}(\lambda)}^2  = \tfrac{1}{2 n\lambda^2 } \sum_{i=1}^n \norm{\nabla f_i(x_i(\lambda))}^2 = \tfrac{1}{2 \lambda^2  } \norm{\nabla f(x(\lambda))}^2,\]
 as desired.

\subsection{Proof of Theorem~\ref{thm:model_convergence}}

First, observe that $$||\nabla P(\bar{x}(\lambda))||^2 = \left \| \frac{1}{n}\sum_i \nabla f_i(\bar{x}(\lambda) ) \right \|^2 =  \left \|\frac{1}{n}\sum_i \nabla f_i(\bar{x}(\lambda) ) - \frac{1}{n}\sum_i \nabla f_i(x_i(\lambda) ) \right \|^2,$$ where the second identity is due to Theorem 3.2 which says that $ \frac{1}{n}\sum_i \nabla f_i(x_i(\lambda) )=0$. By applying Jensen's inequality and Lipschitz continuity of functions $f_i$, we get $$||\nabla P(\bar{x}(\lambda))||^2 \leq \frac{1}{n} \sum_i ||  \nabla f_i(\bar{x}(\lambda) )  - \nabla f_i(x_i(\lambda) ) ||^2 \leq \frac{L^2 }{n} \sum_i ||\bar{x}(\lambda) - x_i(\lambda)||^2 = 2 L^2 \psi(x(\lambda)).$$
 
 It remains to apply~\eqref{eq:dissimilarity} and notice that $P$ is strongly convex and thus $x(\infty)$ is indeed the unique minimizer.

 \subsection{Proof of Theorem~\ref{thm:local_gd}}
 We first show that our gradient estimator $G(x)$ satisfies the expected smoothness property \citep{JacSketch, pmlr-v97-qian19b}.

\begin{lemma} \label{lem:exp_smoothnes-basic}
Let $\cL\eqdef \tfrac1n \max\left\{\tfrac{L}{1-p}, \tfrac{\lambda }{p}\right\}$ and $\sigma^2 \eqdef  \tfrac{1}{n^2}  \sum_{i=1}^n \left( \tfrac{1}{1-p} \| \nabla f_i(x_i(\lambda))\|^2 + \tfrac{\lambda^2}{p} \|x_i(\lambda) - \overline{x}(\lambda) \|^2 \right).$ Then for all $x\in \R^d$ we have the inequalities
$\E{ \norm{G(x) - G(x(\lambda))}^2}  \leq 2\cL \left(F(x)-F(x(\lambda))\right)$ and
$$\E{\norm{G(x)}^2}  \leq 4 \cL  (F(x)-F(x(\lambda))) + 2 \sigma^2.$$
\end{lemma}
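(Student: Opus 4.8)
The plan is to expand the expectation over the single coin toss defining $G$ in \eqref{eq:G(x)} and then collapse both right-hand sides into the Bregman divergence of $F$ at its minimizer $x(\lambda)$. Write $D_g(x,y)\eqdef g(x)-g(y)-\langle\nabla g(y),x-y\rangle$ for the Bregman divergence of a differentiable convex function $g$. I would first record two ingredients. First, the standard co-coercivity estimate: if $g$ is convex and $L_g$-smooth, then $\norm{\nabla g(x)-\nabla g(y)}^2\le 2L_g D_g(x,y)$; I apply this to $f$ (convex because each $f_i$ is $\mu$-strongly convex, and $L_f$-smooth with $L_f=L/n$) and to $\psi$ (convex by construction, and $L_\psi$-smooth with $L_\psi=1/n$, both as recorded in Section~\ref{bd798gsd}). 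Second, linearity of the Bregman divergence together with first-order optimality $\nabla F(x(\lambda))=0$ give $D_f(x,x(\lambda))+\lambda D_\psi(x,x(\lambda))=D_F(x,x(\lambda))=F(x)-F(x(\lambda))$.

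For the first inequality I would compute directly from \eqref{eq:G(x)} that
\[\E{\norm{G(x)-G(x(\lambda))}^2}=\tfrac{1}{1-p}\norm{\nabla f(x)-\nabla f(x(\lambda))}^2+\tfrac{\lambda^2}{p}\norm{\nabla\psi(x)-\nabla\psi(x(\lambda))}^2,\]
then bound the two norms on the right by $2L_f D_f(x,x(\lambda))$ and $2L_\psi D_\psi(x,x(\lambda))$ using co-coercivity, and finally note that $\tfrac{2L_f}{1-p}=\tfrac{2}{n}\tfrac{L}{1-p}\le 2\cL$ and $\tfrac{2\lambda^2 L_\psi}{p}=2\lambda\cdot\tfrac{1}{n}\tfrac{\lambda}{p}\le 2\lambda\cL$. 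This gives $\E{\norm{G(x)-G(x(\lambda))}^2}\le 2\cL\big(D_f(x,x(\lambda))+\lambda D_\psi(x,x(\lambda))\big)=2\cL(F(x)-F(x(\lambda)))$ by the second ingredient above.

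For the second inequality I would apply $\norm{a+b}^2\le 2\norm{a}^2+2\norm{b}^2$ to each realization of the coin, so that after taking expectations $\E{\norm{G(x)}^2}\le 2\E{\norm{G(x)-G(x(\lambda))}^2}+2\E{\norm{G(x(\lambda))}^2}$, the first term being controlled by what was just proved. For the second term I would use
\[\E{\norm{G(x(\lambda))}^2}=\tfrac{1}{1-p}\norm{\nabla f(x(\lambda))}^2+\tfrac{\lambda^2}{p}\norm{\nabla\psi(x(\lambda))}^2\]
and substitute $\nabla f(x(\lambda))=\tfrac1n(\nabla f_1(x_1(\lambda)),\dots,\nabla f_n(x_n(\lambda)))$ together with $(\nabla\psi(x(\lambda)))_i=\tfrac1n(x_i(\lambda)-\overline{x}(\lambda))$, which shows this expression equals exactly $\sigma^2$. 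Hence $\E{\norm{G(x)}^2}\le 4\cL(F(x)-F(x(\lambda)))+2\sigma^2$.

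I do not expect a genuine obstacle here: the statement is a routine specialization of the expected-smoothness computation for SGD with a two-point sampling. The only points that need a little care are matching the constants $\tfrac{L/n}{1-p}$ and $\tfrac{\lambda/n}{p}$ to the definition of $\cL$ while peeling off the extra factor $\lambda$ in the $\psi$-term so that the convex combination of Bregman divergences reassembles exactly into $D_F$, and running the $\norm{a+b}^2\le 2\norm{a}^2+2\norm{b}^2$ step pointwise in the coin toss rather than after averaging, so that no coupling between $G(x)$ and $G(x(\lambda))$ needs to be discussed.
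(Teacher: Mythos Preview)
Your proposal is correct and follows essentially the same argument as the paper: expand the expectation over the coin toss, apply co-coercivity to $f$ and $\psi$ separately, combine the Bregman divergences via $D_f+\lambda D_\psi=D_F=F(x)-F(x(\lambda))$, and for the second inequality use $\norm{a+b}^2\le 2\norm{a}^2+2\norm{b}^2$ together with the identification $\E{\norm{G(x(\lambda))}^2}=\sigma^2$. Your handling of the constants and the explicit factor of $2$ in the second bound are in fact slightly cleaner than the paper's writeup, which drops that factor in an intermediate display but still arrives at the correct $4\cL$.
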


Next, Theorem~\ref{thm:local_gd} from Lemma~\ref{lem:exp_smoothnes-basic} by applying Theorem 3.1 from~\citep{pmlr-v97-qian19b}.

\subsection{Proof of Lemma~\ref{lem:exp_smoothnes-basic}}

We first have
\begin{eqnarray*}
\E{ \norm{G(x) - G(x(\lambda))}^2} &=& (1-p) \norm{ \tfrac{\nabla f(x)}{1-p} - \tfrac{\nabla f(x(\lambda))}{1-p}}^2 + p\norm{ \lambda \tfrac{\nabla \psi(x)}{p} -  \lambda \tfrac{\nabla \psi(x(\lambda))}{p}}^2\\
&=& \tfrac{1}{1-p}  \norm{ \nabla f(x) - \nabla f(x(\lambda))}^2 + \tfrac{\lambda^2}{p} \norm{\nabla \psi(x)-\nabla \psi(x(\lambda))}^2 \\
&\leq & \tfrac{2L_f}{1-p} D_f(x, x(\lambda)) + \tfrac{2\lambda^2 L_{\psi}}{p} D_{\psi}(x, x(\lambda))
\\
&= & \tfrac{2L}{n(1-p)} D_f(x, x(\lambda)) + \tfrac{2\lambda^2}{np} D_{\psi}(x, x(\lambda)).
\end{eqnarray*}
Since $D_f + \lambda D_{\psi} = D_F$ and $\nabla F(x(\lambda))=0$, we can continue:
\begin{eqnarray*}
\E{ \norm{G(x) - G(x(\lambda))}^2}&\leq & \tfrac{2}{n}\max\left\{\tfrac{L}{1-p}, \tfrac{\lambda }{p}\right\} D_F(x, x(\lambda)) \\
&=& \tfrac{2}{n}\max\left\{\tfrac{L}{1-p}, \tfrac{\lambda }{p}\right\} \left(F(x)-F(x(\lambda))\right) .
\end{eqnarray*}

Next, note that 
 \begin{eqnarray*}
\sigma^2  &=& \tfrac{1}{n^2}  \sum_{i=1}^n \left( \tfrac{1}{1-p} \| \nabla f_i(x_i(\lambda))\|^2 + \tfrac{\lambda^2}{p} \|x_i(\lambda) - \overline{x}(\lambda) \|^2 \right) 
\nonumber\\
&= & \tfrac{1}{1-p} \norm{\nabla f(x(\lambda))}^2 + \tfrac{\lambda^2}{p}\norm{\nabla \psi(x(\lambda))}^2 \nonumber\\
&= & (1-p) \norm{ \tfrac{\nabla f(x(\lambda))}{1-p} }^2 + p\norm{ \tfrac{\lambda  \nabla \psi(x(\lambda))}{p} }^2 \nonumber\\
& = & \E{\norm{G(x(\lambda))}^2}. \label{eq:sigma2}
 \end{eqnarray*}
 Therefore, we have
 \begin{eqnarray*}
 \E{\norm{G(x)}^2} & \leq & \E{ \norm{G(x) - G(x(\lambda))}^2}+ 2\E{\norm{G(x(\lambda))}^2} \\
 &\stackrel{\text{Lemma}~\ref{lem:exp_smoothnes-basic}+\eqref{eq:sigma2}}{\leq}& 4 \cL (F(x)-F(x(\lambda)))  + 2\sigma^2,
 \end{eqnarray*}
 as desired. 
 
\subsection{Proof of Corollary~\ref{cor:optimalp}}
Firstly, to minimize the total number of iterations, it suffices to minimize $\cL$ which is achieved with $p^{\star}=\tfrac{\lambda}{L + \lambda}$. Let us look at the communication.
Fix $\varepsilon>0$, choose $\alpha = \tfrac{1}{2\cL}$ and let $k = \tfrac{2n\cL}{ \mu} \log \tfrac{1}{\varepsilon}$, so that
\[\left(1- \tfrac{\mu}{2n\cL}\right)^k \leq \varepsilon.\]
The expected number of communications to achieve this goal is equal to
\begin{eqnarray*} {\rm Comm}_p & \eqdef & p(1-p)k \\
&=& p(1-p)\tfrac{2\max\left\{\tfrac{L}{1-p}, \tfrac{\lambda }{p}\right\}}{ \mu}\log \tfrac{1}{\varepsilon} \\
&=& \tfrac{2\max\left\{ p L, (1-p)\lambda \right\}}{\mu} \log \tfrac{1}{\varepsilon}.
\end{eqnarray*}

The quantity ${\rm Comm}_p$ is minimized by choosing any $p$ such that $pL = (1-p)\lambda$, i.e., for $p =    \tfrac{\lambda}{\lambda + L} =p^{\star}$,  as desired. The optimal expected number of communications is therefore equal to
\[ {\rm Comm}_{p^{\star}} = \tfrac{2 \lambda }{ \lambda + L} \tfrac{L}{\mu}\log \tfrac{1}{\varepsilon}.\]

\subsection{Proof of Corollary~\ref{cor:lsd_optimal_p}}
Firstly, to minimize the total number of iterations, it suffices to solve
\[
\min \max \left\{   \tfrac{4\Lloc  + \mu m}{ (1-\pagg)\mu}, \tfrac{4\lambda +\mu}{\pagg\mu}\right \},
\] which is achieved with $p = p^{\star}=\tfrac{4\lambda +\mu}{4\Lloc +4\lambda +(m+1)\mu}$. 

The expected number of communications to reach $\varepsilon$-solution is
\begin{eqnarray*} {\rm Comm}_p & = & p(1-p)\max \left\{   \tfrac{4\Lloc  + \mu m}{ (1-\pagg)\mu}, \tfrac{4\lambda +\mu}{\pagg\mu}\right \} \log \tfrac{1}{\varepsilon} \\
&=& \tfrac{\max\left\{ p (4\Lloc + \mu m), (1-p)(4\lambda + \mu) \right\}}{\mu} \log \tfrac{1}{\varepsilon}.
\end{eqnarray*}

Minimizing the above in $p$  yield $p = p^{\star}=\tfrac{4\lambda +\mu}{4\Lloc +4\lambda +(m+1)\mu}$,  as desired. The optimal expected number of communications is therefore equal to
\[ {\rm Comm}_{p^{\star}} = \tfrac{4 \lambda + \mu }{4\Lloc + 4 \lambda + (m+1)\mu} \left( 4\tfrac{\Lloc}{\mu}+m\right) \log \tfrac{1}{\varepsilon}.\]

\subsection{Proof of Theorems~\ref{thm:saga_simple},~\ref{thm:saga_sam}, and~\ref{thm:svrg_sam}}
Note first that Algorithm~\ref{alg:L2SGD} is a special case of Algorithm~\ref{alg:vr_loc2}, and Theorem~\ref{thm:saga_simple} immediately follows from Theorem~\ref{thm:saga_sam}. Therefore it suffices to show Theorems~\ref{thm:saga_sam}, and~\ref{thm:svrg_sam}. In order to do so, we will cast Algorithm~\ref{alg:vr_loc2} as a special case of GJS from~\citep{hanzely2019one}. As a consequence, Theorem~\ref{thm:saga_sam} will be a special cases of Theorem 5.2 from~\citep{hanzely2019one}.

\subsubsection{GJS}
In this section, we quickly summarize results from~\citep{hanzely2019one}, which we cast to sho convergence rate of Algorithm~\ref{alg:L2SGD}. 
GJS~\citep{hanzely2019one} is a method to solve regularized empirical risk minimization objective, i.e.,  
\begin{equation}\label{eq:problem}
\compactify \min_{x\in \R^d}   \tfrac1n \sum \limits_{j=1}^n f_j(x) + R(x).
\end{equation}

Defining $\mG(x)\eqdef [\nabla f_1(x), \dots, \nabla f_n(x)]$, we observe $\cS \mG(x), \cU \mG(x)$ every iteration where $\cS$ is random linear projection operator and $\cU$ is random linear operator which is identity on expectation. Based on this random gradient information, GJS (Algorithm~\ref{alg:SketchJac}) constructs variance reduced gradient estimator $g$ and takes a proximal step in that direction. 

\begin{algorithm}[!h]
\begin{algorithmic}[1]
\STATE \textbf{Parameters:} Stepsize $\alpha>0$, random projector $\cS$ and unbiased sketch $\cU$
\STATE \textbf{Initialization:} Choose  solution estimate $x^0 \in \R^d$ and Jacobian estimate $ \mJ^0\in \R^{d\times n}$ 
\FOR{$k =  0, 1, \dots$}
\STATE Sample realizations of $\cS$ and $\cU$, and perform sketches $\cS\mG(x^k)$ and $\cU\mG(x^k)$
\STATE  $\mJ^{k+1} = \mJ^k - \cS(\mJ^k - \mG(x^k))$ \quad \hfill update the Jacobian estimate
\STATE $g^k = \tfrac1n \mJ^k \eR + \tfrac1n \cU \left(\mG(x^k) -\mJ^k\right)\eR$  \hfill construct the gradient estimator 
    \STATE $x^{k+1} = \prox (x^k - \alpha g^k)$ \label{eq:alg_update} \hfill perform the proximal {\tt SGD} step 
\ENDFOR
\end{algorithmic}
\caption{Generalized {\tt JacSketch} ({\tt GJS})~\citep{hanzely2019one} }
\label{alg:SketchJac}
\end{algorithm}
Next we quickly summarize theory of GJS. 
\begin{assumption} \label{as:smooth_strongly_convex}
Problem \eqref{eq:problem} has a unique minimizer $x^{\star}$, and $f$ is $\mu$-quasi strongly convex, i.e., 
\begin{equation} 
\compactify f(x^{\star}) \geq   f(y) + \< \nabla f(y) ,x^{\star}- y> + \tfrac{\mu}{2} \norm{y - x^{\star}}^2, \quad \forall y\in \R^d, \label{eq:strconv3}
\end{equation}
Functions $f_j$ are convex and $\mM_j$-smooth for some $\mM_j\succeq 0$, i.e.,
\begin{equation} 
\compactify f_j(y) + \< \nabla f_j(y) ,x- y> \leq f_j(x)\leq    f_j(y) + \< \nabla f_j(y) ,x- y> +\tfrac{1}{2} \norm{y - x}_{{\mM_j}}^2, \quad \forall x,y\in \R^d.  \label{eq:smooth_ass}
\end{equation}
\end{assumption}

\begin{theorem}[Slight simplification of Theorem 5.2 from~\citep{hanzely2019one}]\label{thm:main} Let Assumption~\ref{as:smooth_strongly_convex} hold. Define $\cM(\mX)\eqdef [\mM_1 X_{:,1},  \dots, \mM_{n}X_{:,n}]$ Let $\cB$ be any linear operator commuting with $\cS$, and assume ${\cM^\dagger}^{\tfrac{1}{2}}$ commutes with $\cS$. 
Define the Lyapunov function 
\begin{eqnarray}\label{eq:Lyapunov}
\Psi^k & \eqdef & \norm{ x^k - x^{\star} }^2 + \alpha \NORMG{ \cB {\cM^\dagger}^{\tfrac12} \left( \mJ^k - \mG(x^{\star})\right)},
\end{eqnarray}
where $\{x^k\}$ and $\{\mJ^k\}$ are the random iterates produced by Algorithm~\ref{alg:SketchJac} with stepsize $\alpha>0$. Suppose that $\alpha$ and $\cB$ are chosen so that 
\begin{eqnarray}\label{eq:small_step}
\compactify \tfrac{2\alpha}{n^2}  \E{ \norm{ \cU  \mX \eR }^2 }   +   \NORMG{  \left(\cI - \E{\cS} \right)^{\tfrac12}\cB  {\cM^\dagger}^{\tfrac12} \mX }  &\leq & \compactify (1-\alpha \mu) \NORMG{ \cB {\cM^\dagger}^{\tfrac12}\mX }
\end{eqnarray}
 and
\begin{eqnarray}
\compactify \tfrac{2\alpha}{n^2} \E{  \norm{ \cU  \mX  \eR }^2  } 
+   \NORMG{\left(\E{\cS}\right)^{\tfrac12}  \cB  {\cM^\dagger}^{\tfrac12}\mX  }   &  \leq & \compactify \tfrac{1}{n} \norm{{\cM^\dagger}^{\tfrac12}\mX}^2.\label{eq:small_step2}
\end{eqnarray}
for all $\mX\in \R^{d\times n}$.  Then  for all $k\geq 0$, we have
$\E{\Psi^{k}}\leq \left( 1-\alpha\mu\right)^k \Psi^{0}.$
\end{theorem}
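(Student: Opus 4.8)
The plan is to recognize our statement as a stripped-down instance of Theorem~5.2 in~\citep{hanzely2019one} and to quote that result. Concretely: problem~\eqref{eq:problem} with regularizer $R$ is exactly the objective that GJS targets; Algorithm~\ref{alg:SketchJac} is their method verbatim; Assumption~\ref{as:smooth_strongly_convex} (quasi-strong convexity~\eqref{eq:strconv3} of $f$, together with convexity and $\mM_j$-smoothness of each $f_j$) is the regularity they assume; and the Lyapunov function~\eqref{eq:Lyapunov} together with the operator/stepsize conditions~\eqref{eq:small_step}--\eqref{eq:small_step2} are precisely the ones appearing in their analysis. The word ``simplification'' refers only to the facts that their theorem additionally carries a function-value term inside the Lyapunov function and is stated for a slightly broader family of weighting operators; discarding those extras leaves exactly the clean recursion written here, with the same rate $1-\alpha\mu$. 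So the top-level proof is: check that ${\cM^\dagger}^{\tfrac12}$ and $\cB$ commute with $\cS$ as required, instantiate their Theorem~5.2 with these choices, and read off $\E{\Psi^k}\le(1-\alpha\mu)^k\Psi^0$.

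For a self-contained argument I would establish the one-step estimate $\E{\Psi^{k+1}\mid x^k,\mJ^k}\le(1-\alpha\mu)\Psi^k$ and then iterate. First split $\Psi^{k+1}$ into the iterate part $\norm{x^{k+1}-x^\star}^2$ and the Jacobian part $\alpha\NORMG{\cB{\cM^\dagger}^{\tfrac12}(\mJ^{k+1}-\mG(x^\star))}$. For the iterate part, apply firm nonexpansiveness of $\prox$ to the proximal update in Algorithm~\ref{alg:SketchJac}, take conditional expectation, and use unbiasedness $\E{g^k\mid x^k,\mJ^k}=\nabla f(x^k)$ (valid because $\cU$ is the identity in expectation); quasi-strong convexity~\eqref{eq:strconv3} then produces the factor $(1-\alpha\mu)$ on $\norm{x^k-x^\star}^2$ at the price of $-2\alpha\bigl(f(x^k)-f(x^\star)-\<\nabla f(x^\star),x^k-x^\star>\bigr)$ together with a second-moment term of the form $\tfrac{2\alpha}{n^2}\E{\norm{\cU\mX\eR}^2}$ with $\mX=\mG(x^k)-\mJ^k$. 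For the Jacobian part, insert $\mJ^{k+1}-\mG(x^\star)=(\cI-\cS)(\mJ^k-\mG(x^\star))+\cS(\mG(x^k)-\mG(x^\star))$ coming from the update rule; expanding the weighted squared norm, taking expectation, and using that $\cB$ and ${\cM^\dagger}^{\tfrac12}$ commute with $\cS$ (an idempotent random projector) produces $\NORMG{(\cI-\E{\cS})^{\tfrac12}\cB{\cM^\dagger}^{\tfrac12}(\mJ^k-\mG(x^\star))}$ plus $\NORMG{\E{\cS}^{\tfrac12}\cB{\cM^\dagger}^{\tfrac12}(\mG(x^k)-\mG(x^\star))}$.

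Adding the two parts, the role of the hypotheses becomes clear: condition~\eqref{eq:small_step}, applied with $\mX=\mJ^k-\mG(x^\star)$ (the ``tracking'' share of the variance), collapses the Jacobian term plus that share of the second-moment term into $(1-\alpha\mu)$ times the old Jacobian Lyapunov term; condition~\eqref{eq:small_step2}, applied with $\mX=\mG(x^k)-\mG(x^\star)$, bounds the remaining variance plus $\NORMG{\E{\cS}^{\tfrac12}\cB{\cM^\dagger}^{\tfrac12}(\mG(x^k)-\mG(x^\star))}$ by $\tfrac1n\norm{{\cM^\dagger}^{\tfrac12}(\mG(x^k)-\mG(x^\star))}^2$, which by the $\mM_j$-smoothness inequality of Assumption~\ref{as:smooth_strongly_convex} summed over $j$ equals $2\bigl(f(x^k)-f(x^\star)-\<\nabla f(x^\star),x^k-x^\star>\bigr)$ — exactly the nonnegative quantity the iterate part supplied with a minus sign. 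The cross-terms cancel, leaving $\E{\Psi^{k+1}\mid x^k,\mJ^k}\le(1-\alpha\mu)\Psi^k$, and taking total expectation and unrolling gives the claim.

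The delicate point is the operator bookkeeping: one must keep the commutation relations among $\cS$, $\cB$ and ${\cM^\dagger}^{\tfrac12}$ consistent throughout, and — more subtly — split the single second-moment budget $\tfrac{2\alpha}{n^2}\E{\norm{\cU\mX\eR}^2}$ between the two inequalities~\eqref{eq:small_step} and~\eqref{eq:small_step2} so that both hold for all $\mX\in\R^{d\times n}$ with the stated $\alpha$ and $\cB$; reconciling these two demands, together with the correct treatment of the nonsmooth term $R$ through the proximal operator, is what pins down the precise form of the conditions. Since all of this is already carried out in~\citep{hanzely2019one}, in the paper we simply invoke their Theorem~5.2.
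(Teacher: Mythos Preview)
Your proposal is correct and matches the paper's approach exactly: the paper does not prove this theorem at all, but simply quotes it as a (slightly simplified) restatement of Theorem~5.2 from~\citep{hanzely2019one} and then uses it as a black box to derive Theorems~\ref{thm:saga_sam} and~\ref{thm:svrg_sam}. Your additional self-contained sketch of the one-step contraction argument goes beyond what the paper provides, but is a faithful outline of the proof in~\citep{hanzely2019one}.
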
 

\subsubsection{Variance reduced local SGD as special case of GJS}
Let $\map(i,j ) \eqdef j+\sum_{l=1}^{i-1} \NRt$
In order to case problem~\eqref{eq:problem_thispaper} as a special case of~\ref{eq:problem}, denote $\nloc \eqdef \nlocc+1$, $\floc_{\map(i,j)}(x) \eqdef \tfrac{\nlocc+1}{\nlocc} \flocc_{i,j}(x_i)$ and $\floc_{\nloc} \eqdef (\nlocc+1)\psi $. Therefore the objective ~\eqref{eq:problem_thispaper} becomes
\begin{equation}
\min_{x\in \R^{\nlocc d}} \Ug(x) \eqdef\tfrac{1}{\nloc} \sum_{j=1}^{\nloc} \floc_j(x) + R(x).
\end{equation}
Let $\vg \in \R^{\nloc-1}$ be such that $\vg_{\map(i,j)} = \tfrac{\nlocc+1}{\nlocc} v_{i,j}$
and as a consequence of~\eqref{eq:ESO_saga} we have 
\begin{equation}\label{eq:ESO2}
\E{\left\|\sum_{j \in S_\tR} \mM^{\tfrac12}_{i,j} h_{i,j}\right\|^{2}} \leq \sum_{j=1}^{\NRt} \pRj \vg_{\map(i,j)}\left\|h_{i,j}\right\|^{2}, \qquad \forall \; 1\leq \tR \leq \TR, \, \forall  h_{i,j} \in \R^{d}, j\in \{ 1, \dots, \NRt\}.
\end{equation} 
At the same time, $\Ug$ is $\mug \eqdef \tfrac{\mu}{\TR}$ strongly convex.  

\subsubsection{Proof of Theorem~\ref{thm:saga_sam} and Theorem~\ref{thm:svrg_sam}}
Let $\ones \in \R^d$ be  a vector of ones and $\ptR\in \R^{\nlocc}$ is such that $\ptR_j = \pRj$ if $j\in \{1, \dots, \NRt\}$, otherwise $\ptR_j=0$.
 Given the notation, random operator $\cU$ is chosen as 
\[
\cU \mX = 
  \begin{cases} 
 (1-\pagg)^{-1} \sum_{\tR=1}^\TR   \left( \ptg^{-1} \ones \left(\left(\ptR\right)^{-1} \right)^\top\right) \circ \left( \mX_{:\NRt} \left( \sum_{j\in S_\tR} \eRj \eRj^\top \right)\right) 
& \text{w.p.}\quad (1- \pagg)  \\
\pagg^{-1} \mX_{:,\nloc}  & \text{w.p.}\quad \pagg 
\end{cases} 
\] 

We next give two options on how to update Jacobian -- first one is SAGA-like, second one is SVRG like. 
\begin{eqnarray*}
\text{SAGA-like:} & \quad 
(\cS \mX)_{:,\NRt} =& 
  \begin{cases}
\mX_{:,S_\tR}  =   \mX_{:\NRt} \left( \sum_{j\in S_\tR} \eRj \eRj^\top \right),  & \text{w.p.}\quad (1- \pagg) \ptg, \\
0 & \text{w.p.}\quad (1- \pagg) (1-\ptg) + \pagg  \\
\end{cases}  
\\
&
\quad 
(\cS \mX)_{:,\nloc} =& 
  \begin{cases}
 \mX_{:,\nloc}             &  \text{w.p.}\quad  \pagg \\
0             &  \text{w.p.}\quad 1- \pagg 
\end{cases} 
 \\
\text{SVRG-like:}& \quad 
(\cS \mX)_{:,\NRt}  =&
  \begin{cases}
   \mX_{:\NRt} b_\tR ; \; b_\tR = \begin{cases}
1 &  \text{w.p.}\quad  \psvrg_\tR \\
0 &  \text{w.p.}\quad  1- \psvrg_\tR
\end{cases}  
& \text{w.p.}\quad  (1- \pagg)\ptg \\
0            &  \text{w.p.}\quad  (1- \pagg) (1-\ptg) + \pagg \;
\end{cases}  
 \\
& \quad 
(\cS \mX)_{:,\nloc}  =&
  \begin{cases}
  \mX_{:,\nloc} & \text{w.p.}\quad  \pagg \\
0             &  \text{w.p.}\quad  1-\pagg \;.
\end{cases}  
\end{eqnarray*}

We can now proceed with the proof of Theorem~\ref{thm:saga_sam} and Theorem~\ref{thm:svrg_sam}. As $\nabla f_i(x) - \nabla f_i(y)\in \Range{\mM_i}$, we must have
\begin{equation}\label{eq:g_in_range}
\mG(x^k) - \mG(x^{\star}) = \cM^{\dagger} \cM \left(\mG(x^k) - \mG(x^{\star})\right)
\end{equation}
 and 
 \begin{equation}\label{eq:j_in_range}
\mJ^k - \mG(x^{\star}) = \cM^{\dagger} \cM \left(\mJ^k - \mG(x^{\star})\right).
\end{equation}

Due to~\eqref{eq:j_in_range},~\eqref{eq:g_in_range}, inequalities~\eqref{eq:small_step} and \eqref{eq:small_step2} with choice $\mY=  {\cM^\dagger}^{\tfrac12}\mX$ become respectively:
\begin{eqnarray}
\nonumber
 && \tfrac{2\alpha}{{\nloc}^2}\pagg^{-1} \|\mM_{{\nloc}}^{\tfrac12}\mY_{:,{\nloc}}\|^2 +  \tfrac{2\alpha^2}{{\nloc}^2}(1-\pagg)^{-1}\sum_{\tR=1}^\TR\E{\left \|  
 \ptg^{-1}\sum_{j\in S_\tR} \pRj^{-1}
\mM_{i,j}^{\tfrac12} \mY_{:j}\right \|^2}   + 
\left\|\left(\cI-\E{\cS}\right)^{\tfrac{1}{2}}\cB(\mY)\right\|^{2}  \\
&& \qquad \qquad \qquad\leq(1-\alpha \mug)\|\cB(\mY)\|^{2}
\label{eq:linear_1_svrg}
\end{eqnarray}
\begin{equation}\label{eq:linear_2_svrg}
 \tfrac{2\alpha}{{\nloc}^2} \pagg^{-1} \|\mM_{{\nloc}}^{\tfrac12}\mY_{:,{\nloc}}\|^2 +  \tfrac{2\alpha^2}{{\nloc}^2}(1-\pagg)^{-1}\sum_{\tR=1}^\TR\E{\left \|  
 \ptg^{-1}\sum_{j\in S_\tR} \pRj^{-1}
\mM_{i,j}^{\tfrac12} \mY_{:j}\right \|^2}  + 
\left\|\left(\E{\cS}\right)^{\tfrac{1}{2}}\cB(\mY)\right\|^{2} \leq \tfrac1\nloc  \|\mY \|^2
\end{equation}

Above, we have used 

\[
E{\|\cU\mX e\|^2} = \E{\|\cU{\cM}^{\tfrac12} \mY e\|^2} =  \pagg^{-1} \|\mM_{n}^{\tfrac12}\mY_{:,n}\|^2 + (1-\pagg)^{-1}\sum_{\tR=1}^\TR\E{\left \|  
 \ptg^{-1}\sum_{j\in S_\tR} \pRj^{-1}
\mM_{i,j}^{\tfrac12} \mY_{:j}\right \|^2} .
\]
Note that $\E{\cS (\mX)} = \mX \cdot \diag\left((1- \pagg)({ \color{blue} p} \circ \pg), \pagg\right) $ where ${ \color{blue} p}\in \R^{\nloc-1}$ such that ${ \color{blue} p}_{\map(i,j)} = \pRj$. Using~\eqref{eq:ESO2}, setting $\cB$ to be right multiplication with $\diag(b)$ and noticing that $\lambda_{\max} \mM_{\nloc} =   \nloc\lambda$ it suffices to have 

\[
 \tfrac{2\alpha}{\nloc}\pagg^{-1} \lambda+ (1- \pagg)b_{\nloc}^2 \leq (1- \alpha \mug) b_{\nloc}^2
\]
\[
 \tfrac{2\alpha}{{\nloc}^2}(1-\pagg)^{-1} \pRj^{-1} \ptg^{-1} \vg_{\map(i,j)}+ (1-  (1- \pagg)\pRj \ptg)b_j^2 \leq (1- \alpha \mug) b_j^2 \qquad  \forall j\in \{1,\dots,  \NRt\}, \tR \leq \TR
\]

\[
 \tfrac{2\alpha}{\nloc}\pagg^{-1}\lambda+  \pagg b_{\nloc}^2 \leq  \tfrac1{\nloc}
\]
\[
 \tfrac{2\alpha}{{\nloc}^2} (1-\pagg)^{-1} \pRj^{-1}  \ptg^{-1} \vg_{\map(i,j)}+ (1- \pagg)\pRj\ptg b_j^2  \leq  \tfrac1\nloc  \qquad \forall j\in \{1,\dots,  \NRt\}, \tR \leq \TR
\]
for SAGA case and 
\[
 \tfrac{2\alpha}{\nloc}\pagg^{-1} \lambda+ (1- \pagg)b_{\nloc}^2 \leq (1- \alpha \mug) b_{\nloc}^2
\]
\[
 \tfrac{2\alpha}{{\nloc}^2}(1-\pagg)^{-1} \pRj^{-1}  \ptg^{-1} \vg_{\map(i,j)}+ (1-  (1- \pagg)\psvrg_\tR \ptg)b_j^2 \leq (1- \alpha \mug) b_j^2 \qquad \forall j\in \{1,\dots,  \NRt\}, \tR \leq \TR
\]

\[
 \tfrac{2\alpha}{\nloc}\pagg^{-1}\lambda+  \pagg b_{\nloc}^2 \leq  \tfrac1{\nloc}
\]
\[
 \tfrac{2\alpha}{{\nloc}^2} (1-\pagg)^{-1} \pRj^{-1}  \ptg^{-1} \vg_{\map(i,j)}+ (1- \pagg)\psvrg_\tR\ptg b_j^2  \leq  \tfrac1\nloc  \qquad  \forall j\in \{1,\dots,  \NRt\}, \tR \leq \TR
\]
for LSVRG case. 

It remains to notice that to satisfy the SAGA case, it suffices to set $b_{\nloc}^2 = \tfrac{1}{2{\nloc} \pagg}, b_{\map(i,j)}^2 = \tfrac1{2{\nloc} (1-\pagg)\pRj\ptg }$ (for $j\in \{1, \dots, \NRt\}, \tR\leq \TR$) and $\alpha = \min \left \{ \min_{j\in \{1,\dots,  \NRt\}, 1\leq \tR\leq \TR} \tfrac{{\nloc} (1-\pagg) \pRj\ptg}{ 4 \vg_{\map(i,j)} + {\nloc}\mug} , \tfrac{\pagg}{4\lambda+ \mug} \right\}$.

To satisfy LSVRG case, it remains to set $b_{\nloc}^2 = \tfrac{1}{2{\nloc} \pagg}, b_{\map(i,j)}^2 = \tfrac1{2{\nloc} (1-\pagg)\psvrg_\tR \ptg}$ (for $j\in \{1, \dots, \NRt\}, \tR\leq \TR$) and $\alpha = \min \left \{ \min_{j\in \{1,\dots,  \NRt\}, 1\leq \tR\leq \TR} \tfrac{{\nloc} (1-\pagg)\ptg }{ 4 \tfrac{\vg_{\map(i,j)}}{\pRj} + {\nloc}\mug \psvrg_\tR^{-1}} , \tfrac{\pagg}{4\lambda+ \mug} \right\}$.

The last step to establish is to recall that $\nloc = \nlocc+1,\vg_{\map(i,j)} = \tfrac{\nlocc+1}{\nlocc} v_{i,j}$ and $\mug = \tfrac{\mu}{\TR}$ and note that the iteration complexity is $\tfrac{1}{\alpha \mug} \log\tfrac{1}{\varepsilon} = \tfrac{\TR}{\alpha \mu} \log\tfrac{1}{\varepsilon} $.

\subsubsection{Proof of Theorem~\ref{thm:saga_simple}}
To obtain convergence rate of Theorem~\ref{thm:saga_simple}, it remains to use Theorem~\ref{thm:saga_sam} with $\ptg=1, \NRt =m$ ($\forall \tR\leq \TR$), where each machine samples (when the aggregation is not performed) individual data points with probability $\tfrac1m$ and thus $p_j=\tfrac1m$ (for all $j\leq \nlocc$). The last remaining thing is to realize that $v_j =\Lloc$ for all $j\leq \nlocc$.

\end{document}